\documentclass{article}
\usepackage[margin=2cm]{geometry}
\usepackage{graphicx}

\usepackage[colorlinks=true, allcolors=black]{hyperref}
\usepackage{amsmath,amsthm,amsfonts,amssymb,amscd,mathrsfs}
\usepackage{enumerate}
\usepackage{bbm}
\usepackage{float}
\usepackage{fancyvrb}
\usepackage{booktabs}
\usepackage{subcaption}
\usepackage[T1]{fontenc}
\usepackage{xcolor}
\usepackage{enumitem}
\usepackage{setspace}

\usepackage[ruled,vlined]{algorithm2e}
\def\algbackskip{\hskip-\ALG@thistlm}
\newtheorem{theorem}{Theorem}[section]
\newtheorem{definition}[theorem]{Definition}

\newtheorem{proposition}[theorem]{Proposition}
\newtheorem{lemma}[theorem]{Lemma}

\newtheorem{notation}[theorem]{Notation}
\newtheorem{assumption}[theorem]{Assumption}
\newtheorem{example}[theorem]{Example}
\newtheorem*{theorem*}{Problem}
\theoremstyle{remark}
\newtheorem{remark}[theorem]{Remark}

\newcommand{\R}{{\mathbb R}}
\newcommand{\E}{{\mathbb E}}
\newcommand{\N}{{\mathbb N}}
\newcommand{\Z}{{\mathbb Z}}

\newcommand{\g}{{\mathcal L}}

\DeclareMathOperator{\err}{err}

\newcommand{\inprod}[2]{\left\langle #1, #2 \right\rangle}
\newcommand{\nor}[1]{\left\lVert#1\right\rVert}
\newcommand{\modu}[1]{\left\lvert #1 \right\rvert}
\newcommand{\paren}[1]{\left( #1 \right)}
\newcommand{\brac}[1]{\left[ #1 \right]}
\newcommand{\curlbrac}[1]{\left\{ #1 \right\}}

\let\oldtabular\tabular 
\renewcommand{\tabular}{\footnotesize\oldtabular}
\let\oldcaption\caption 
\renewcommand{\caption}{\footnotesize\oldcaption}

\title{Deep Hilbert--Galerkin Methods for Infinite-Dimensional PDEs and Optimal Control}

\date{\today}
\author{\begin{large}
Samuel N. Cohen\footnote{Mathematical Institute, University of Oxford. Email: cohens@maths.ox.ac.uk.},\quad Filippo de Feo\footnote{Principal author; Institut für Mathematik, Technische Universität Berlin. Email: defeo@math.tu-berlin.de.},\quad Jackson Hebner\footnote{Principal author; Mathematical Institute, University of Oxford. Email: hebner@maths.ox.ac.uk.},\quad Justin Sirignano\footnote{Mathematical Institute, University of Oxford. Email: justin.sirignano@maths.ox.ac.uk.}\end{large}}
\begin{document}

\maketitle

\begin{abstract}
We develop deep learning-based  approximation methods for fully nonlinear second-order PDEs on separable Hilbert spaces, such as HJB equations for infinite-dimensional control, by parameterizing solutions via Hilbert--Galerkin Neural Operators (HGNOs).  We prove the first Universal Approximation Theorems (UATs) which are sufficiently powerful to address these problems, based on novel topologies for Hessian terms and corresponding novel continuity assumptions on the fully nonlinear operator. These topologies are non-sequential and non-metrizable, making the problem delicate. In particular, we prove UATs for functions on Hilbert spaces, together with their Fréchet derivatives up to second order, and for unbounded operators applied to the first derivative, ensuring that HGNOs are able to approximate all the PDE terms. For control problems, we further prove UATs for optimal feedback controls in terms of our approximating value function HGNO.

We develop numerical training methods, which we call Deep Hilbert--Galerkin and Hilbert Actor-Critic (reinforcement learning) Methods, for these problems by minimizing the $L^2_\mu(H)$-norm of the residual of the PDE on the whole Hilbert space, not just a projected PDE to finite dimensions. This is the first paper to propose such an approach. The models considered arise in many applied sciences, such as functional differential equations in physics and Kolmogorov and HJB PDEs related to controlled PDEs, SPDEs, path-dependent systems, partially observed stochastic systems, and mean-field SDEs. We numerically solve examples of Kolmogorov and HJB PDEs related to the optimal control of deterministic and stochastic heat and Burgers' equations, demonstrating the promise of our deep learning-based approach.
\end{abstract}

\noindent \textbf{Keywords.} Infinite-dimensional PDE, Deep Galerkin Method, derivative-informed neural operator, infinite-dimensional control, actor-critic reinforcement learning.

\noindent \textbf{MSC Codes.} 35R15, 49L12, 35Q93, 93C25, 65J15, 68T07

\begin{small}
\tableofcontents
\end{small}
\section{Introduction}
\subsection{Model problem}
\paragraph{PDEs on Hilbert spaces.}
Let $(H,\inprod{\cdot}{\cdot})$ be a separable Hilbert space and consider the second-order fully-nonlinear partial differential equation (PDE) on an open domain  $O\subset H$
\begin{align}\label{eq:PDE_intro}
\begin{cases}
    \inprod{LDv}{x} + F(x,v,Dv,D^2v)= 0, \quad  x \in O,\\
    v(x)=g(x),\quad x \in \partial O,
\end{cases}
\end{align}
where $L \colon D(L)\subset H \to H$ is a closed, densely-defined, possibly unbounded linear operator; $F \colon  H \times \mathbb R^k \times H^k \times S(H)^k\to \mathbb R^k$;  $Dv:H\to H^k$ and $D^2v:H\to S(H)^k$ are Fréchet derivatives (here $S(H)$ denotes symmetric bounded operators on $H$), and $\langle L (\cdot),x \rangle$ is applied to each component of $Dv$.

In this manuscript, we develop the first numerical methods  for classical solutions of  infinite-dimensional fully-non linear PDEs of the form \eqref{eq:PDE_intro} via Derivative-Informed Operator Learning. Let $\{e_i\}_{i=1}^\infty \subset D(L)\subset  H$ be an orthonormal basis. Then we can represent $x=\sum_{i\in \mathbb N}x_ie_i$, where $x_i:=\langle x,e_i \rangle$. Taking for simplicity $k=1$, $O=H$, and given a classical solution\footnote{That is, $v \in C^{2}(H)$, $LD v \in C^0(H;H)$, and $v$ satisfies \eqref{eq:PDE_intro}.} $v:H\to \mathbb R$, we can write $v(x)=\tilde v((x_i)_{i=1}^\infty)$. The Fréchet derivatives are then
\begin{small}
\begin{align*}
&Dv(x)=\sum_{i=1}^\infty \partial_i \tilde v\left((x_i)_{i=1}^\infty\right) e_i,\quad D^2v(x)h=\sum_{i=1}^\infty\left[\sum_{j=1}^\infty\partial^2_{ij}  \tilde v\left((x_i)_{i=1}^\infty\right)h_j \right ]e_i.
\end{align*}
\end{small}
With this expansion, equations of the form \eqref{eq:PDE_intro} can formally be seen as PDEs in infinitely many variables which are highly challenging equations due to their intrinsic infinite-dimensionality and the unboundedness of $L$. In our approach, we will parameterize the solution $v$ of these PDEs with a Hilbert--Galerkin Neural Operator (HGNO).  

  \paragraph{Neural Operators.}  We represent non-linear operators between separable Hilbert spaces via a neural operator. We choose a Hilbert--Galerkin Neural Operator (HGNO)\footnote{Castro \cite{castro2022} refers to this architecture as a DeepONet or Deep-H-ONet. However, to highlight the difference with the DeepONet as introduced in \cite{lu2020deeponet}, we prefer the term Hilbert--Galerkin Neural Operator. Indeed, the DeepONet in \cite{lu2020deeponet} is designed for operators on spaces of continuous functions and uses point-evaluations at fixed sensor points $z^1, \ldots, z^d$ in the spatial domain as inputs. By contrast, in \cite{castro2022}, the inputs are $(\inprod{x}{e_i})_{i=1}^d$ for some fixed orthonormal basis $\{e_i\}_{i=1}^\infty$ of a Hilbert space, a crucial feature for our analysis.} \cite{castro2022}, i.e.~an encoder-decoder type architecture of the form
\begin{equation*}
       f^{d,\theta,p} : H_1 \to H_2,\quad      f^{d,\theta,p}(x) = \left(\hat{\mathcal E}_p^{H_2} \circ \tilde f^{d,\theta,p} \circ \mathcal{E}_d^{H_1}\right)(x)=\sum_{j=1}^p \tilde f^{d,\theta,p}_j\Big((\inprod{x}{e_i})_{i=1}^d\Big) g_j ,
    \end{equation*}
where $\{e_i\}_{i=1}^\infty \subset H_1$ and $\{g_i\}_{i=1}^\infty \subset H_2$ are orthonormal bases of $H_1$, $H_2$, respectively, $\mathcal{E}_d^{H_1} : H_1 \to \R^d$, $ \mathcal{E}_d^{H_1}(x) =  (\inprod{x}{e_i})_{i=1}^d$ is  the coordinate operator from $H_1$,
        $\hat{\mathcal E}_d^{H_2} : \R^d \to H_2 $, $\hat{\mathcal{E}}_d^{H_2}((x_i)_{i=1}^d) = \sum_{i=1}^d x_i g_i$ is  the  embedding operator onto $H_2$, and $\theta $ is the parameters of a trainable  neural network $\tilde f^{d,\theta,p}:\mathbb R^d\to \mathbb R^p$.
        \begin{itemize}[leftmargin=*,nosep]
            \item We parameterize the PDE solution $v\in C^2(H)$ via an HGNO $v^{d,\theta}=v^{d,\theta,1} : H\to \mathbb R$. In order to accurately represent the solution of the PDE, we need to simultaneously learn $v$ and its Fréchet derivatives $Dv\in C(H;H)$, $D^2v\in C(H;S(H))$ via a single neural operator. This is the setting of Derivative-Informed Operator Learning because it relies in particular on accurately representing $D^2v$, the Fréchet derivative of the non-linear operator $Dv$.
            \item In optimal control problems (see Subsection \ref{subsec:Motations}), we also  parameterize the optimal control operator $u : H \to \tilde{U}$ between Hilbert spaces via an HGNO $u^{d,\theta,p} : H \to \tilde{U}$. The neural operator $u^{d,\theta,p}$ will be trained, i.e.~informed, via $v^{d,\theta}, Dv^{d,\theta},D^2v^{d,\theta}.$ 
        \end{itemize}
        The use of neural operators enables us to solve \eqref{eq:PDE_intro} directly on $H$ rather than a projected PDE on $P_d(H)$ (cf.~Remark \ref{rem:def_N}).
  \paragraph{Contributions.}  In brief, the contributions of this paper are:
\begin{enumerate}
    \item In general, we cannot expect the $D^2v^{d,\theta}$ to approximate $D^2v$ in the operator norm (see Remark \ref{rem:density_neural_compact_open_topology}). Therefore, we identify the natural topologies on $C^2(H)$ to achieve universal approximation in full generality (Theorems \ref{th:UAT_K} and \ref{th:UAT_L2_noL}), the first general results of this kind. These topologies are non-sequential and non-metrizable in general, making the problem delicate.
    \item We prove the first universal approximation results for solving the PDE \eqref{eq:PDE_intro} on the whole space $H$. In particular, we show that HGNOs are capable of solving \eqref{eq:PDE_intro} to arbitrarily low $L^2_\mu$-norm of the residual (Theorems \ref{th:DGM_residual} and \ref{th:l2_convergence}). Since continuity assumptions on $F$ are usually given in operator norm, this requires a new continuity assumption on $F$ and universally approximating the term involving the unbounded operator $L$.
    \item We specialize our results to optimal control problems on Hilbert spaces, by showing that HGNOs $v^{d,\theta}$ can solve the HJB equation and provide universal approximation of optimal  feedback controls in terms of $Dv^{d,\theta},D^2v^{d,\theta}$, the first results of this kind. 
    \item We leverage the theory to develop the first numerical schemes for these fully nonlinear second-order PDEs {directly} on $H$ {(i.e.~not a projected PDE on $P_d(H)$)}, which we call Deep Hilbert--Galerkin Methods. These work by minimizing the $L^2_\mu$-norm of the PDE residual with variants of gradient descent.
    \item We introduce Optimize-then-Learn methods and develop Hilbert Actor-Critic Methods for optimal control problems. These are the first methods to solve the PDE \eqref{eq:PDE_intro} on the whole Hilbert space $H$.
\end{enumerate}

We will not focus in this paper on the training-time convergence of the numerical approximation to the PDE solution when trained under gradient descent methods (for this in finite dimensions, see \cite{cohen2023, jiang2023global}). We note, however, that sufficiently strong universal approximation theorems are at the core of these convergence results. We will see that empirically, our proposed algorithms converge well when trained using standard choices of parameters.

\subsection{Motivation and examples}\label{subsec:Motations}
These fundamental PDEs have challenged many prominent scientists for nearly a century and remain a longstanding computational open problem. They arise throughout an extraordinary range of applied sciences, for instance, in the form of the celebrated families of
\begin{enumerate}
    \item \textbf{Kolmogorov PDEs}, associated to  stochastic evolution equations on $H$ \cite{da1992stochastic,da2002second},
    \item \textbf{Hamilton-Jacobi-Bellman (HJB) PDEs}, associated to optimal control problems of deterministic  and stochastic evolution equations on $H$ \cite{crandall_lions_infinite_dim_II,crandall1991viscosity,crandall_lionsVII,fabbri2017book,li2012optimal,lions1988viscosity,lions1989viscosity,lions2006viscosity} and in mean-field control \cite{gangbo_mayorga_swiech,swikech2025finite,defeo_gozzi_swiech_wessels},
    \item \textbf{Isaacs PDEs} or systems of HJB equations, associated to differential games of deterministic  and stochastic  evolution equations on $H$ \cite{crandall_lionsIII,fleming_nisio,nisio1998},
        \item \textbf{Functional differential equations (FDEs)} in  physics\footnote{In physics the terminology differs from that typically used in applied mathematics, where  a functional differential equation is typically a path-dependent differential equation.}, such as the Hopf equation in turbulence theory \cite{hopf1952statistical,monin2013statistical} or the Schwinger–Dyson equation in quantum field theory \cite{peskin2018introduction}.
\end{enumerate}
\begin{example}[Functional differential equations in physics]
One of the most famous FDEs  in physics is the \textbf{Hopf equation} \cite{hopf1952statistical,monin2013statistical} in turbulence theory, which encodes the statistical properties of velocity and pressure fields of the Navier--Stokes equations given statistical information of the random initial state. This is a $\mathbb C$-valued PDE in infinitely many variables on $H=L^2(\mathcal D;\mathbb R^3)$ of the form
    $$
\frac{\partial \Phi( t,x)}{\partial t}=\sum_{m=1}^3 \int_{\mathcal D} x^k(\xi)\left(i \sum_{j=1}^3 \frac{\partial}{\partial \xi^j} \frac{\delta^2 \Phi( t,x)}{\delta x^m(\xi) \delta x^j(\xi)}+v \nabla^2 \frac{\delta \Phi( t,x)}{\delta x^m(\xi)}\right) \mathrm{d} \xi, \quad (t,x)\in (0,T)\times H,
$$
which governs the dynamics of the characteristic functional
$
\Phi(t,x)=\mathbb{E}\left[\exp \left(i \int_{\mathcal D}  u(t,\xi) \cdot x(\xi) \mathrm{d} \xi\right)\right] ,
$
where  $u(t,\xi)$ solves the Navier--Stokes equation with random initial state $u(0,\xi)$. 
\end{example}
\paragraph{Kolmogorov PDEs and Hamilton--Jacobi--Bellman PDEs on Hilbert spaces.} Our methods will be developed for  general PDEs of the form \eqref{eq:PDE_intro}. Moreover, we will dedicate special attention to Kolmogorov PDEs and Hamilton--Jacobi--Bellman PDEs on $H$. 
To this purpose, consider the family of  controlled
stochastic evolution equations on $H$
\begin{align}\label{eq:stateSDE_H_intro}
    dX_t &= [AX_t+ b(X_t,u_t)]dt + \sigma(X_t,u_t)dW^Q_t, \quad  X_0 = x \in H,
\end{align}
where $A \colon D(A)\subset H \to H$ is the generator of a $C_0$-semigroup on $H$, $b$ and $\sigma$ are the drift and diffusion, and $u \in \mathcal U$ is an admissible control process. Here  $\Xi$ and $\tilde U$ are separable Hilbert spaces, the control takes values in $U\subset \tilde U$, and $W^Q$ is a Wiener process with covariance operator $Q$.

In optimal control theory a standard goal is to minimize, over all admissible controls $u \in \mathcal U$, a functional 
\begin{align}\label{eq:functional_intro}
 J(x;u) := \E\brac{\int_0^\infty e^{-\gamma t} l(X^{x,u}_t,u_t)dt },
    \end{align}
where $\gamma>0$, $l:  H \times U \to \mathbb R$. Following the  dynamic programming approach \cite{fabbri2017book}, we study the HJB equation, i.e.~the second-order fully-nonlinear  PDE on $H$ (with $k=1$)
\begin{align}\label{eq:HJB_control_intro}
    -\gamma v+  \inprod{A^*Dv}{x}  + \inf_{u\in U }\curlbrac{\inprod{Dv}{b(x,u)}+\frac{1}{2}\mathrm{Tr}[\sigma(x,u) Q \sigma^*(x,u)D^2v]+l(x,u)} &= 0,\quad x\in H.
\end{align}
This typically leads to a characterization of the value function as the unique solution of the HJB equation and to the construction of optimal feedback controls.
    The following are important special cases or extensions:
    \begin{itemize}
       \item when $Q=0$ the problem reduces to a \textbf{deterministic  control problem}.  In this case the HJB reduces to a first order PDE on $H$, see Remark \ref{rem:detertministic}. 
        \item when $\mathcal U$ is a singleton, then the HJB equation \eqref{eq:HJB_control_intro} reduces to a \textbf{Kolmogorov PDE}, see \eqref{eq:kolmogorov}.
        \item In the finite-horizon case, the HJB equation becomes a time dependent backward PDE.
        \item In
        zero-sum differential games of deterministic or stochastic evolution equation, the HJB equation is generalized to the  \textbf{Isaacs} equation \cite{crandall_lions_infinite_dim_II,fleming_nisio,nisio1998}.
    \end{itemize}
\begin{example}[PDEs and SPDEs]\label{ex:heat_eq_intro}
    A particularly informative  example that we will analyze is the optimal control of the deterministic and stochastic heat equation, i.e.~on a domain $\mathcal D\subset \mathbb R^n$, the SPDE
    \begin{align}
    \label{eq:stochastic_heat_intro}
        &\frac{\partial x}{\partial t}(t,\xi) = \Delta_\xi x(t,\xi) + u(t,\xi) + \sigma \frac{\partial^2 W^Q}{\partial t \partial \xi}(t,\xi) , \quad (t,\xi)\in \mathcal D,\end{align}
    with boundary conditions $x(t,\xi)  = 0, $ $x\in \partial \mathcal D$, initial conditions $ x(0,\cdot)= x \in L^2(\mathcal D)$, and control process $u \in \mathcal U$.
The goal is to minimize  a cost functional  $J(x;u) = \E\brac{\int_0^\infty e^{-\gamma s}\int_{\mathcal D}l^0(x(s,\xi)-\overline{x}(\xi), u(s,\xi))d\xi ds} ,$ where $\overline{x}(\xi)$ is the target temperature and $l^0:\mathbb R^2\to \mathbb R$ is a suitable cost density function depending on the objective of the controller. A standard choice is $l^0(y,a)=y^2+a^2$, but other choices are possible.  This problem can be rewritten on $H=L^2(\mathcal D)$ in the form \eqref{eq:stateSDE_H_intro}-\eqref{eq:functional_intro}, so the corresponding HJB equation is of the form \eqref{eq:HJB_control_intro}. When $Q=0$ the problem reduces to control of the heat equation.
\end{example}
\begin{example}[Path-dependent systems]\label{ex:volterra} Consider a controlled Volterra stochastic integral equation (SVIE) on $\mathbb R^n$
 $$y(t)=z(t)+\int_0^tK(t-s) b^0\left(y(s) ,u(s) \right)d s+\int_0^tK(t-s) \sigma^0 \left(y(s),u(s) \right)d W_s, 
$$
where 
$z(\cdot) $ is an initial curve, $K(\cdot)$ is a kernel, $W_s$ is a Wiener process, and $u:\Omega \times [0,\infty) \to U\subset \mathbb R^n$ is an admissible control process. The cost functional is of the form
$
 J(x;u) = \mathbb{E}\left[\int_0^\infty e^{-\gamma t}l^0\left(y(t), u(t)\right) d t\right].
$
Due to the non-Markovianity of the SVIE \cite{friesen2025failure}, the standard dynamic programming arguments on $\mathbb R^n$ do not apply in general. However,  the  problem can be rewritten  as the optimal control of a  Markovian SDE of the form \eqref{eq:stateSDE_H_intro}-\eqref{eq:functional_intro} on a suitable Hilbert space $H$ via {Markovian Lifts}, cf.~\cite{bolli_defeo,hamaguchi2005,possamai_mehdi} (and \cite{gasteratos_pannier} for the Kolmogorov PDE case). When $\sigma^0=0$ we have a deterministic control problem. Other path-dependent systems that fit into the above setting are stochastic control problems with \textbf{delays} in the state and/or in the control, cf.~\cite{defeo_phd,defeo_federico_swiech,fabbri2017book} and references therein. 
\end{example}
\begin{example}[Partially observed systems]
Consider the controlled system
    \begin{align*}
        dy(t) &= b^1(y(t),a(t))dt + \sigma^1(y(t),a(t))dW^1(t) + \tilde \sigma^1(y(t),a(t))dW^2(t), &&y(0) = \eta \in L^2( \mathcal F_0;\mathbb R^n) \\
        dz(t) &= b^2(y(t))dt + dW^2(t), &&y_1(0) = 0.
    \end{align*}
    where  $y(\cdot) \in \R^n$ is the hidden state and $z(\cdot) \in \R^m$ is the observation. The goal is to minimize, over all $\mathcal F^z_s$-adapted controls $a_s$, a functional of the form
   $I(t,\eta;a) = \E\brac{\int_0^\infty l_1(y(t),a(t))dt }.$
    The fact that controls are only adapted to the filtration $\mathcal F^z_s$ given by observations makes the problem extremely difficult \cite{CohenKnochenhauerMerkel2025}. One way of dealing with it is through the so-called ``separated'' problem where one is led to control the unnormalized conditional probability density of the state process $y(\cdot)$ given the observation $z(\cdot)$, leading to a (fully observable) optimal control problem of the Duncan--Mortensen--Zakai (DMZ) stochastic evolution equation on $H=L^2(\R^n)$, see  \cite[Section 2.6.6]{fabbri2017book}, \cite{lions2006viscosity,pardoux2006, cox2025measurevaluedhjbperspectivebayesian}.
\end{example}

\begin{example}[Mean-field control]\label{ex:MFC}
Consider the HJB equation on the Wasserstein space $\mathcal{P}_2\left(\mathbb{R}^n\right)$ arising in optimal control of particle systems and mean-field SDEs with common noise:
\begin{equation}\label{eq:wasserstein_intro}
    \left\{\begin{array}{l}\partial_t \mathcal{V}(t, \mu)+\frac{1}{2} \int_{\mathbb{R}^n} \operatorname{Tr}\left[\partial_y \partial_\mu \mathcal{V}(t, \mu)(y) \tilde \sigma(y, \mu) \tilde \sigma^{\top}(y, \mu)\right] \mu(\mathrm{d} y) \\ \quad+\frac{1}{2} \int_{\left(\mathbb{R}^n\right)^2} \operatorname{Tr}\left[\partial_\mu^2 \mathcal{V}(t, \mu)\left(y, y^{\prime}\right) \tilde  \sigma(y, \mu) \tilde \sigma^{\top}\left(y^{\prime}, \mu\right)\right] \mu(\mathrm{d} y) \mu\left(\mathrm{d} y^{\prime}\right) \\ \quad-\int_{\mathbb{R}^n} \mathcal F\left(y, \mu, \partial_\mu \mathcal{V}(t, \mu)(y)\right) \mu(\mathrm{d} y)=0, \qquad\qquad (t, \mu) \in(0, T) \times \mathcal{P}_2\left(\mathbb{R}^n\right),  \end{array}\right.
    \end{equation}
    with $\mathcal{V}(T, \mu)=\mathcal{G}(\mu), $ $\mu \in \mathcal{P}_2\left(\mathbb{R}^n\right)$.
A famous approach to deal with this PDE is via the ``lifting'' technique, due to P.L. Lions \cite{lions2007}. We lift the space $\mathcal{P}_2\left(\mathbb{R}^n\right) $ to a Hilbert space  of random variables $H:=L^2\left(\Omega ; \mathbb{R}^n\right)$, and then study the  `lifted HJB equation', i.e.~a PDE on $H$ of the form \eqref{eq:HJB_control_intro} (extended to the time dependent case) and the corresponding `lifted control problem' on $H$ of the form \eqref{eq:stateSDE_H_intro}-\eqref{eq:functional_intro}.
We refer to \cite{gangbo_mayorga_swiech,swikech2025finite} for  details of this procedure and to \cite{defeo_gozzi_swiech_wessels} for extensions to particle systems of stochastic evolution equations.
\end{example}
\subsection{Literature review}
\paragraph{PDEs and HJB on $H=\mathbb R^d$.} Finite-dimensional PDEs 
on  $H=\mathbb R^d$ have been extensively studied from both analytical and numerical perspectives. Classical numerical approaches are based on finite difference schemes or variational discretizations (most notably Galerkin methods).
Despite their success, these methods suffer from the well-known curse of dimensionality, as their computational complexity grows exponentially with the space dimension $d$, severely limiting their applicability to high-dimensional problems. In the past decade, machine learning (ML) methods have emerged as a prominent class of approaches for tackling high-dimensional PDEs. Their mathematical foundations can be traced back to the expressive power of neural networks, as formalized by Universal Approximation Theorems (UATs), i.e.~neural networks are capable of approximating smooth functions and their derivatives,  uniformly on compact sets and in Sobolev norms \cite{hornik1991approximation}. 

Arguably, the two most popular methods are the Deep Galerkin Method (DGM) \cite{sirignano2018} and Physics-Informed Neural Networks (PINNs) \cite{raissi2019physics}, both of which work by having a neural network serve as the PDE solution ansatz and training it to minimize the $L^2$-norm of the PDE residual. This approach has proven popular, and, for example, the more recent papers \cite{cohen2023} and \cite{al_aradi_2022} have introduced variants of DGM/PINNs specialized for $L^2$-monotone PDE operators, Fokker--Planck equations, and HJB equations. Showing convergence of these algorithms is a difficult matter: see \cite{cohen2023, jiang2023global}, which consider specific classes of PDEs with precompact domain. Beyond the DGM/PINNs framework, there also exist the Deep BSDE solver \cite{han_jentzen_e,han_jantzen_e2018} and Deep Backward dynamic programming \cite{hure_pham_warin2020} methods, which are based on backward stochastic differential equation (BSDE) formulations of the PDE. There are also  methods developed specially for solving HJB equations, such as \cite{cheridito2025deep, zhou2021, ito2021}. For a detailed review of ML applied to high-dimensional control problems and HJB equations on $H=\mathbb R^d$, see \cite{hu_lauriere}.

\paragraph{Operator Learning and Derivative Informed Operator Learning.} Over the last five years, operator learning has become a central paradigm for learning operators on infinite-dimensional spaces. Standard architectures include the DeepONet \cite{lu2020deeponet, lanthaler2022}, PCA-Net \cite{bhattacharya2021, hesthaven2018, lanthaler2023}, Hilbert--Galerkin Neural Operator (or Deep-H-ONet) \cite{castro2022}, Fourier Neural Operator \cite{kovachki2023neuraloperator}, Laplace Neural Operator \cite{cao2024}, Spectral Neural Operator \cite{fanaskov2024}, and Convolutional Neural Operator \cite{raonic2023cno}. These techniques have been extensively used to learn maps from spaces of PDE parameters and/or forcing functions to the solutions of the corresponding finite-dimensional PDEs. Their use for these tasks is justified theoretically by a series of UATs on infinite-dimensional spaces\footnote{We also refer to \cite{cuchiero_primavera_svaluto,ceylan_kwossek_promel,hager-harang-pellizari-tindel,lyons_nejad_perez} for signature learning and UATs, and to \cite{pham_warin} for operator learning on Wasserstein spaces.}.

Even more recently, the subfield of Derivative-Informed Operator learning has been rapidly growing \cite{cao_chen_brennan_olearyroseberry-marzouk-youssef-ghattas,cao-Roseberry-Ghattas,go_chen,luo-Roseberry-chen-Ghattas,olearyrosemberry-villa-chen-ghattas,Roseberry-peng-villa-ghattas,qiu_bridges_chen,yao-luo-cao-Kovachki-Roseberry-ghattas}. Here, the goal is to learn  an operator and its (infinite-dimensional) derivatives. This dramatically improves training efficiency and generation cost, and  directly controls errors in the derivative
approximation, thereby enhancing the performance in a wide variety of downstream tasks, such
as Bayesian inverse problems, optimal design under uncertainty, and optimal experimental design. 

The literature on Derivative-Informed Operator learning is mainly empirical and UATs for operators and their infinite-dimensional derivatives are still not  well-understood. One UAT result for derivatives in infinite-dimensional spaces can be found in \cite{luo-Roseberry-chen-Ghattas} for operators $f$ belonging to Gaussian Sobolev spaces $W^{\kappa,2}(H;\mu)$,  $\mu \sim N(0,Q)$, whose derivatives are taken along directions in the Cameron–Martin space\footnote{That is, $\mathcal E := Q^{1/2}(H)$, and $D_{\mathcal E}^\kappa f$ denotes the $\kappa$-th Gaussian Sobolev (Malliavin) derivative restricted to $\mathcal E$.}, a result directly obtained from the definition of the space. More recently\footnote{This paper appeared during the last phases of the writing of the present article},  \cite{yao-luo-cao-Kovachki-Roseberry-ghattas} proved UATs via Fourier Neural Operators up to first order Fréchet derivatives on spaces $H=H^s\left(\mathbb{T}^d ; \mathbb{R}^{d_a}\right), Y=H^{r}\left(\mathbb{T}^d ; \mathbb{R}^{d_a}\right)$, $s,r\geq 0$, by further assuming  $D f(x) \in  \mathcal L_2 \left(H_\delta, Y\right)$ to be  Hilbert--Schmidt, from some possibly smoother space $H_\delta:=H^{s+\delta}\left(\mathbb{T}^d ; \mathbb{R}^{d_a}\right)$, $\delta \geq 0$ and approximation in the $\|\cdot\|_{\mathcal{L}_2\left(H_\delta, Y\right)}$-norm. There also exists an even more recent paper \cite{gong-luo-roseberry-etal} with UATs for first-order Frechét derivatives for multi-input neural operators (again, $Df$ is assumed to be Hilbert--Schmidt valued).

The use of Derivative-Informed Operator Learning in this paper differs from the existing literature in that we are not learning families of finite-dimensional PDEs, but rather the solution of a single, but infinite-dimensional, PDE.

\paragraph{Numerical schemes for PDEs on infinite-dimensional spaces.}
PDEs of the forms \eqref{eq:PDE_intro} and \eqref{eq:HJB_control_intro} have been extensively studied theoretically using different notions of solutions; see \cite{da2002second,fabbri2017book,li2012optimal} for an account of the literature. However, the development of numerical methods faces deep difficulties due to their intrinsic infinite-dimensionality (with the consequent curse of dimensionality in any possible type of numerical scheme) and challenges arising from the presence of unbounded (and therefore discontinuous) operators. For these reasons, only a few partial results are available. 
\begin{itemize}[leftmargin=*,nosep]
    \item The papers \cite{venturi_dektor,rodgers_venturi, venturi2018numerical} consider
    linear FDEs over specific compact subsets of $H$, typically suitable Sobolev balls, see e.g.~\cite[Examples 1, 2, p. 9]{venturi_dektor}. They fix an orthonormal basis $\{e_i\}_{i=1}^\infty \subset H$ and project the FDE to a $d$-dimensional PDE over a hypercube via cylindrical approximation. They then estimate the value of the PDE solution at discrete points in time via two different spectral tensor methods and from this construct an FDE solution estimate. The theoretical justification for this comes from \cite{venturi_dektor} for \textbf{first-order linear} FDEs, which proves that the projected FDE and its solution converge to the FDE on those specific compact subsets of $H$, where the unbounded operator $L$ is continuous\footnote{The case of PDEs on Banach spaces is also discussed, with similar techniques.}. As remarked in \cite{miyagawa2024physics} the previous methods are  limited in handling high-dimensions due to the curse of dimensionality. The paper \cite{miyagawa2024physics} considers this deep computational issue by numerically solving the linear projected FDE (on specific compact sets, as in \cite{venturi_dektor,rodgers_venturi, venturi2018numerical}) with PINNs \cite{raissi2019physics}, as they can benefit from the full power of mesh-free ML methods.
    \item As mentioned previously, an important theoretical contribution  came from \cite{castro2022}, which developed  ML methods to learn classical solutions of non-linear Kolmogorov PDEs (i.e.~semilinear PDEs) on Hilbert spaces of the form
    \begin{equation}\label{eq:kolmogorov_castro}
  \partial_t u(t, x)+(\mathcal{L}u)(t, x)+\tilde F\left(t, x, u(t, x), B^*(t, x) \nabla u(t, x)\right)  =0,  \quad  (t, x) \in[0, T] \times D(A).
    \end{equation}
    Here $(\mathcal{L}v)(t, x)=\langle Dv(t, x), A x+b(t, x)\rangle+\frac{1}{2} \operatorname{tr}\left(D^2 v(t, x)\left(\sigma(t, x) Q^{1 / 2}\right)\left(\sigma(t, x) Q^{1 / 2}\right)^*\right)$ is the infinitesimal generator of
    an (uncontrolled) SDE on Hilbert space (i.e.~\eqref{eq:stateSDE_H_intro} with $b=b(t,x),\sigma=\sigma(t,x)$).  Note that, in this form, \eqref{eq:kolmogorov_castro} is only well-defined over $D(A)$.   
    Using the Hilbert--Galerkin Neural Operator and proving appropriate UATs, \cite{castro2022} extends the Deep-Backward-Dynamic-Programming method of \cite{hure_pham_warin2020} via forward-backward SDEs to the case  where $H$ is a Hilbert space and $A$ is an unbounded operator\footnote{This method exploits the representation of the solution of the PDE via the non-linear Feynman--Kac formula and the corresponding forward-backward system of SDEs and shows that there are Hilbert--Galerkin Neural Operators that approximately solve the PDE.}. 
    However, to handle a PDE only defined on $D(A)$, the method is based on the  assumption (see \cite[Assumption 3.1]{castro2022}) that there exists an (analytically) strong solution of the abstract SDE, i.e.
    \begin{align}\label{eq:state_caastro}
    X_r \in D(A)\quad \textit{and}\quad X_r &= x+\int_t^r [AX_s+ b(s,X_s)]ds + \int_t^r\sigma(s,X_s)dW^Q_s.
\end{align}
This assumption is very restrictive as it is typically not satisfied when $H$ is infinite-dimensional; indeed, general existence results are typically only available for weaker notions of solutions of SDEs on $H$ (such as mild, weak, or variational solutions), see e.g.~\cite{da1992stochastic,liu_rockner}. Moreover, \cite{castro2022} does not present precise numerical algorithms or simulations.
\end{itemize}
We  refer to \cite{fabbri2017book} for a theoretical finite dimensional approximation procedure for PDEs on Hilbert spaces via viscosity solutions (hence, a local procedure);  this seems very difficult to implement numerically as an abstract basis from the theory of viscosity solutions is chosen there\footnote{We also refer to \cite{pannier_salvi,saporito2021path} for recent papers addressing numerical methods for \textbf{linear} path-dependent PDEs, which are different types of infinite-dimensional PDEs}.
\paragraph{Numerical schemes for infinite-dimensional optimal control.} 
Infinite-dimensional optimal control problems of  PDEs \cite{crandall1991viscosity,manzoni_quarteroni_salsa,troltzsch2010,li2012optimal}, SPDEs \cite{defeo_swiech_wessels,fabbri2017book,gozzi_masiero}, path-dependent systems \cite{defeo_federico_swiech,fabbri2017book},  partially observed stochastic systems \cite{fabbri2017book}, and mean field systems \cite{carmona_delarue} have been  theoretically studied via many approaches, such as dynamic programming and HJB equations, BSDEs, Maximum principles, adjoint systems, and, in the linear-quadratic case (LQ), Riccati equations. 
However,  numerical methods are still not well understood. 

When dealing with infinite-dimensional control problems,  a popular approach is \textbf{Discretize-then-Optimize}, where
one first performs a spatial discretization of the state equation by reducing it to a high-dimensional system of standard ODEs or SDEs\footnote{For example, in the case of a PDE or SPDE, by using finite elements or finite difference schemes, or through a Galerkin discretization with respect to an orthonormal  basis.}. The optimization problem is then tackled  by dealing with the corresponding high-dimensional HJB equation, adjoint systems, etc\footnote{For example, see  \cite{khimin2024,manzoni_quarteroni_salsa} for optimal control of PDEs via adjoint systems, \cite{alla2016hjb,ferretti1997internal,kunisch2004hjb} for optimal control of linear evolution equations and PDEs via HJB equations,  
 \cite{atwell2001proper} for LQ optimal control of heat equation via Riccati equations
 \cite{sirignano2018} for linear quadratic optimal control of SPDEs {via DGM} for HJB equations, 
 \cite{evans-pereira-boutselis2020,pirmorad2021deep} for optimal control of SPDEs via reinforcement learning,  \cite{carmona_lauriere,pham_warin_2022}  for Mean-field optimal control (and games); see also \cite{qiu2026} and the detailed review \cite{lauriere_perrin_perolat_etal}. Specific  path-dependent optimal control problems with linear dynamics arising in finance were studied with signature methods in \cite{cuchiero_moller} via convex quadratic optimization techniques.}. However, this approach only solves an optimal control problem for the discretized system, which may differ  fundamentally from the original infinite-dimensional optimal control problem. This leads to the challenging problem of proving convergence of the scheme for value functions and optimal controls of the discretized problem to those of the limit problem, for which a general theory is not well-understood.  

Instead,  \textbf{Optimize-then-Discretize} methods \cite{manzoni_quarteroni_salsa} first derive  optimality conditions for infinite-dimensional  optimal control problems (i.e.~HJB equations of the form \eqref{eq:HJB_control_intro}, adjoint systems, etc.) and then discretize these conditions to  numerically solve them,
are more intrinsic, as they try to solve the right conditions for optimality.  These approaches were heavily used in \cite{manzoni_quarteroni_salsa} for optimal control of PDEs via infinite-dimensional adjoint systems, with the authors warning that the difference between 
the two methods is fundamental,  yielding different results in general. In \cite{mowlavi2023}, an approach via PINNs was investigated for PDE-constrained optimal
control problems. In \cite{lefebvre_miller}   Riccati equations associated to
LQ stochastic optimal control problems with delays were solved via PINNs.  Optimal control problems of SPDEs where $A$ is a densely defined self-adjoint, negative definite linear operator with compact inverse, the drift is of the form $b=\tilde b(x)+u$, and noise is additive, were considered in \cite{stannat2023neural,stannat2025approximation}. They parameterize
feedback controls of the SPDE for a fixed initial condition $x$ via neural operators and prove that controls can be approximated via finite-dimensional ansatzes akin to Hilbert--Galerkin Neural Operators,  proving that such approximations induce only controlled errors in the cost functional. The resulting networks are trained via adjoint-based gradient methods\footnote{We also refer to \cite{daprato_debussche2000,daprato-debousche-ns-2000} for existence and uniqueness of mild solutions of HJB PDEs related to the optimal control of stochastic Burgers and Navier--Stokes equations via a Galerkin approximation; to \cite{yao-luo-cao-Kovachki-Roseberry-ghattas} for constrained minimization problems via derivative informed FNO, motivated by inverse problems and optimal control of PDEs (but the method is not applied to control problems there); to \cite{furuya_kratsios_possamai} for novel approaches for solving 2BSDE families via neural operators; and to \cite{firoozi_krasios_yang} for simultaneously solving infinitely many LQ mean-field games In Hilbert Spaces.
Finally, we refer to \cite{park2024} for stochastic optimal control for diffusion bridges via function spaces, to \cite{hu2025model} for  learning feedback controls of SPDEs (but no optimal control problem is considered here), to
\cite{zhang2025} for learning optimal policies for large deterministic systems of agents, to \cite{fouque_zhang} for  deep learning algorithms for mean-field games with delays, and to 
\cite{lauriere_perrin_perolat_etal} for a survey on learning mean-field games and mean-field control problems.}. Specific  path-dependent optimal control problems with linear dynamics arising in finance were studied with signature methods in \cite{jaber_hainaut_motte} via Riccati equations.

\subsection{Our contributions}
\subsubsection{Our goals.} From the literature review above, it is evident that: 
\begin{itemize}[leftmargin=*,nosep]  
\item Developing rigorous and numerically implementable approximation schemes for fully non-linear PDEs on Hilbert spaces of the form \eqref{eq:PDE_intro} remains a longstanding open challenge. These approximation schemes will require novel topologies for the convergence of the Hessians and for the continuity of $F$ in the Hessian variable, as well as addressing the unbounded operator $L$. 
    \item Most numerical schemes proposed for optimal control of deterministic and stochastic evolution equations are of the type ``Discretize-then-Optimize'', with very few taking a ``Optimize-then-Discretize'' approach to directly address the original infinite-dimensional control problem. Among the latter,  none attempted to develop schemes for infinite-dimensional HJB equations, either in the deterministic or stochastic case. Here, led by the theoretical analysis, we will develop a new family of approaches, which we will call \textbf{``Optimize-then-Learn''}.
    This will allow us to develop novel universal approximation schemes for optimal feedback controls in terms of our approximate solution HGNO.
\end{itemize}
In this paper, we fill these critical gaps in the literature through novel theoretical analysis, which naturally lead us  to  develop general numerical methods for PDEs on $H$ of the form \eqref{eq:PDE_intro}, with particular focus on HJB equations \eqref{eq:HJB_control_intro}.  

We choose ML methods to benefit from the expressivity of deep neural networks and their mesh-free power for high-dimensional computations. 
We  develop  Deep Galerkin/PINNs-type methods for these PDEs, as these seem to be the only possible approaches to attack fully non-linear second order PDEs on unbounded subsets of $H$. Our analysis will be the first for fully non-linear second-order PDEs on $H$.
\begin{itemize}[leftmargin=*,nosep]
    \item Compared to the linear FDEs in \cite{miyagawa2024physics}, we develop methods for the PDE \eqref{eq:PDE_intro} over general domains $O\subset H$, possibly unbounded and non-compact, as this is crucial for applications; 
    \item Compared to the non-linear Kolmogorov PDEs in \cite{castro2022}, we work with weaker notions of solutions for the state SDEs \eqref{eq:stateSDE_H_intro}, as this is often needed for real-world applications. This allows us to have a well-defined PDE outside of $D(A)$. To achieve this, we consider mild SDE solutions, which allow us to cover many different families of problems. That is, we assume that $A$ is the generator of a $C_0$-semigroup $e^{At}$ (see Appendix \ref{app:semigroups}); mild solutions are given by the variation of constants formula:
   \begin{align}\label{eq:mild_intro}
        X_t &= e^{At} x+ \int_0^t e^{A(t-s)}b(X_s,u_s)ds +\int_0^t e^{A(t-s)}\sigma(X_s,u_s)dW^Q_s.
    \end{align}
Compared with the (analytically) strong solution \eqref{eq:state_caastro} used in \cite{castro2022},  the mild solution does not require $X_t\in D(A)$, which is extremely restrictive in applications \cite{da1992stochastic,liu_rockner,fabbri2017book}.
\end{itemize}

\medskip

\subsubsection{Our results.} 
In this paper,  we focus on classical solutions of the PDEs \eqref{eq:PDE_intro} with $O=H$ and $k=1$  and \eqref{eq:HJB_control_intro}, i.e.~$v \in C^{2}(H)$, $LD v \in C^0(H;H)$, and $v$ satisfies \eqref{eq:PDE}, see e.g. \cite{fabbri2017book}. We discuss extensions of the method to cover general domains in Remark \ref{rem:extensions_DHGM}.
The classical solution is the most regular notion of solution, making it suitable for one of the first investigations of numerical methods. Our method is also motivated by numerical schemes of mild solutions of the PDE (not to be confused with mild solutions of SDEs as in \eqref{eq:mild_intro}); see Remark \ref{rem:mild_sol_PDE}. We will put special emphasis on optimal control problems of deterministic and stochastic evolution equations and their corresponding HJB equations, leading us to derive the first universal approximation schemes for optimal feedback controls for fully-non linear infinite-dimensional control problems.

In Section \ref{sec:analytic_frame}, we formally introduce the technical framework needed for our analysis. The paper then proceeds as follows:

\paragraph{UATs for Fréchet derivatives (Section \ref{sec:UATFrechet}).} To show that HGNOs can accurately represent classical solutions of the PDEs, we prove universal approximation theorems (UATs) for the simultaneous approximations of functions $v\in C^{2}(H)$ and their Fréchet derivatives $Dv\in C(H;H)$, $D^2v\in C(H;S(H))$ through a single neural operator.  We prove the first UATs both on compact subsets of $H$ and in opportune weighted Sobolev norms for a given probability (or bounded) measure $\mu$ on $H$. To this purpose, we consider finite-dimensional cylindrical approximations of $v$, i.e.~$v^d(x):=v(P_dx)$ and then standard UATs \cite{hornik1991approximation} can be applied to networks approximating each $v^d,Dv^d,D^2v^d$, uniformly on compacts and in $L^w(H;\mu)$. However, we then need to estimate the error in the cylindrical approximation, i.e.~$\err(v-v^d)+\err(Dv-Dv^d)+\err(D^2v-D^2v^d)$ as $d\to\infty$.
As $P_d\to I$ uniformly on compact subsets of $H$,  we  have that $\err(v-v^d)+\err(Dv-Dv^d)\to0$ both on compact subsets of $H$ and in $L^w(H;\mu)$. However, the second-order term is more delicate: since $D^2v^d(x)=P_dD^2v(P_dx)P_d$ and   $\|P_d-I\|\equiv 1$, we cannot expect $\|D^2v(x)-D^2v^d(x)\|\to0$ in operator norm, in general. Hence, we {carefully  weaken the topology:} as $P_d\to I$  uniformly on compacts, we consider the compact-open topology on $C^0(H;S(H))$, where  $S(H)$ is also endowed with a compact-open topology,  see Definitions \ref{rem:compact_open_C_SH}, \ref{rem:compact_open_C2}; {endowing $S(H)$, e.g., with the weak or strong operator topology would lead to weaker statements.}  Similarly, we prove convergence of the Hessian in $L^w(H\times H;\mu\otimes\mu')$. To the best of our knowledge, these are  \emph{new natural topologies for cylindrical approximations} of second-order Fréchet derivatives and for UATs.
 These results are more general and  intrinsic than the very recent results on UATs for derivative informed neural operators in \cite{luo-Roseberry-chen-Ghattas,yao-luo-cao-Kovachki-Roseberry-ghattas,gong-luo-roseberry-etal}, {as we do not universally approximate $D^2v$ only in particular directions and we do not impose any Hilbert-Schmidt  assumption on $D^2v$. }
 
\paragraph{UATs under the action of unbounded operators (Section \ref{subsec:UAT_unbounded}).} The above UATs are unfortunately not enough for our purposes, as the unbounded/discontinuous operator $L$ excludes, in general, the convergence to zero of $\left|\left\langle L [D v(x)- D v^{d, \theta}(x)], x\right\rangle\right|$ (even point-wise in $x$). This is a crucial difference with the finite-dimensional case, where any linear operator is continuous. In Section \ref{subsec:UAT_unbounded}, we discuss many instances where this term converges to zero both uniformly on compact sets and in $L^2(H;\mu)$. In particular, we explain how the choice of basis and the regularity of $Dv$ are crucial for this convergence. To our knowledge, these are the first results of UATs of Fréchet derivatives under the action of unbounded operators. 

\paragraph{HGNOs can solve PDEs on $H$ (Section \ref{sec:HGNO_pde}).} Our next goal is to show that HGNOs can approximately solve PDEs on Hilbert spaces \eqref{eq:PDE_intro}. However, standard continuity assumptions on the operator $F$ in the variable $Z\in S(H)$ are with respect to the operator norm, and our UATs cannot guarantee this convergence. We therefore identify a \emph{new sequential continuity assumption} on $F$ in the variable $Z\in S(H)$, when $S(H)$ is endowed with the compact-open topology, see  Assumption  \ref{ass:F_seq_cont}. However, these topologies are non-metrizable, making the problem delicate.
To the best of our knowledge, this assumption is not available in the literature.

Using these observations, we show that HGNOs approximately solve PDEs of the form \eqref{eq:PDE_intro} uniformly on compacts and in $L^2(H;\mu)$ (see Theorem \ref{th:DGM_residual}), extending \cite[Theorem 7.1]{sirignano2018} to infinite-dimensions.
We emphasize that although $v^{d,\theta}$ is essentially a finite dimensional function, it approximately solves the original infinite-dimensional PDE, rather than a finite dimensional approximation of the PDE. To the best of our knowledge, these are the first results of this kind. In Theorem \ref{th:l2_convergence}, we also prove a bounded inverse-type result under suitable additional assumptions.

\paragraph{HGNOs can solve optimal control problems on $H$ (Section \ref{sec:NHO_control}).}
    Next, we specialize to optimal control of deterministic and stochastic evolution equations. Given a dynamic programming approach, our goal is to show universal approximation results for optimal feedback controls in terms of the Fréchet derivatives of our trainable ansatz $v^{d,\theta}$, uniformly on compact sets and in $L^2(H;\mu)$.  However, we need to be careful, as the compact-open topology of $S(H)$ is not sequential in general and we only  have sequential, not full, continuity of the current value Hamiltonian $F^{cv}$ in $S(H)$. Despite these difficulties, we are able to accomplish our goal,   see Theorem \ref{th:UAT_optimal_controls}.
    To our knowledge, these are the first statements of this kind for 
    optimal control problems on Hilbert spaces. 
    Comparing with the closely related results of \cite{stannat2023neural,stannat2025approximation}, both statements are valid for general state equations\footnote{These papers consider the case where when $A$ is a densely defined, self-adjoint, negative definite linear operator with compact inverse, the drift is of the form $b=\tilde b(x)+u$, and additive noise.}. Moreover, since they are obtained via an approximation of the  value function, they  are not tied to a specific choice of initial condition. 
\paragraph{Deep Hilbert--Galerkin Methods (Section \ref{sec:DHGM}).}
Next, we introduce novel PDE-solving algorithms. In Algorithm \ref{algo:fixed_control} we consider PDEs of the form \eqref{eq:PDE_intro}. To do this, we parameterize the solution via an HGNO ansatz $v^{d,\theta} : H \to \R$ and train the parameters $\theta$ to minimize the PDE residual norm $\nor{\mathcal{F}v^{d,\theta}}_{L^2(H;\mu)}.$ The parameters can be trained by direct gradient descent, leading to the \textbf{Deep Hilbert--Galerkin Method} (DHGM), inspired by DGM/PINNs \cite{sirignano2018,raissi2019physics}, or by a biased gradient, which we call \textbf{QHPDE}, inspired by the QPDE \cite{cohen2023} method, originally developed for monotone PDEs. These algorithms are designed to  solve the full PDE by sampling points in $H$, i.e.~we are not  simply applying the standard DGM/PINNs or QPDE method on the projected PDE on $P_d(H)$ (cf.~Remark \ref{rem:def_N}). To the best of our knowledge, these are the first numerical algorithms designed for  fully non-linear second-order PDEs on the whole $H$ with unbounded operators {and the first that attack directly the  full infinite-dimensional PDE \eqref{eq:PDE_intro} on the whole $H$ and not the corresponding projected PDE in finite dimensions (e.g.~as in \cite{miyagawa2024physics}). This has the advantage of evaluating the PDE residual of \eqref{eq:PDE_intro} accurately, even though HGNO depends on only the first $d$ components of the basis. The precise details of how we implement this are explained in Remark \ref{rem:def_N}.}

In Algorithm \ref{algo:ac}, we consider optimal control problems of the form \eqref{eq:stateSDE_H_intro}-\eqref{eq:functional_intro} via HJB equations of the form \eqref{eq:HJB_control_intro}. Led by the theory, we present a novel family of approaches, which we call \textbf{``Optimize-then-Learn''}. We note that these methods may  have extensions to other methods, such as BSDEs, etc.  In an ``Optimize-then-Discretize'' approach (discussed above) one would first analytically derive conditions for  optimality in the infinite-dimensional problem \eqref{eq:HJB_control_intro} and then numerically  solve a discretized/projected version of it on $P_d(H)$. In the ``Optimize-then-Learn'' approach, we still derive the HJB equation \eqref{eq:HJB_control_intro} on $H$, but then we solve the PDE  \eqref{eq:HJB_control_intro}  directly on $H$, not a  projected version of it. This is possible thanks to the use of Neural Operators, which allow us to learn the solution{, whence the name ``Optimize-then-Learn''}. With this in mind, we develop \textbf{Hilbert Actor-Critic Methods}, i.e.~{\textbf{Reinforcement Learning} algorithms in which} we train both a PDE solution ansatz $v^{d,\theta} : H \to \R$ and an optimal control ansatz $u^{d,\phi,p}: H \to U$ parameterized by HGNOs. {To the best of our knowledge, this is the first class of reinforcement learning algorithms on an entire infinite-dimensional Hilbert space, rather than e.g. a projection.}

\paragraph{Numerical tests (Section \ref{sec:numerics}).} We test Algorithms \ref{algo:fixed_control} and \ref{algo:ac} by solving Kolmogorov and HJB PDEs arising from the optimal control of stochastic
and deterministic heat and Burgers equations. The control of the heat equation serves a good example problem because it admits a classical solution and satisfies the necessary regularity conditions in our theory. As a closed-form solution can be derived, we can easily  benchmark our algorithms. We remark that, in this case, the state SDE does not admit a strong solution (i.e.~\eqref{eq:stateSDE_H_intro}), making the results of \cite{castro2022} not applicable. However, the unique  mild solution  \eqref{eq:mild} is standard \cite{da1992stochastic,fabbri2017book}.
The control of the stochastic and deterministic Burgers equation, on the other hand, is a much more challenging problem, whose HJB equation contains multiple nonlinear unbounded operators and does not admit a classical solution, but only a suitable mild solution. It is intended as a stress-test of the method. Nonetheless, we are able to train HGNOs and compare the values they learn against Monte Carlo finite difference estimates.

\section{The analytic framework}\label{sec:analytic_frame}
In this section, we introduce the analytic framework for our problems, i.e.~PDEs on Hilbert spaces and optimal control problems on Hilbert spaces with the corresponding HJB equations. For the sake of brevity in the main presentation, we specify the notation used throughout the paper in Appendix \ref{subsec:notation}.

\subsection{PDEs on Hilbert spaces}
Throughout the whole paper, let $(H,\langle \cdot,\cdot \rangle)$ be a separable Hilbert space.

Consider the second-order fully-nonlinear PDE on $H$
\begin{align}\label{eq:PDE}
\inprod{LDv}{x} + F(x,v,Dv,D^2v)= 0, \quad  x \in H.
\end{align}
where $L \colon D(L)\subset H \to H$ is a closed, densely-defined, possibly unbounded linear operator, and $F \colon  H \times \mathbb R \times H \times S(H)\to \mathbb R$.  Comparing with \eqref{eq:PDE_intro} we are restricting to the scalar codomain $k=1$ case; our analysis extends to $k\in \mathbb N$ without any additional difficulty beyond notation. In our theoretical analysis we further restrict to $O=H$, so we are considering PDEs on the whole space $H$, therefore we do not impose boundary conditions.

We assume the following:
\begin{assumption}\label{ass:F_estimates}
$F$ is continuous and there exists $C>0, k \geq 0$ such that
\begin{align}\label{eq:est_F}
    |F( x, v, p  , Z)-F( x, r, q, Y) |  \leq C\big (|v-r|+|p-q|+\|Z-Y\|\big) \big(1+|x|^k\big).
\end{align}
\end{assumption}
The estimate \eqref{eq:est_F} includes Hamiltonians $F$ arising in optimal control on Hilbert spaces, see \eqref{eq:est_F_control}.

We will see that continuity of $F$ with respect to the variable $Z\in S(H)$, in the operator norm topology, will be too weak for our needs due to intrinsic problems in infinite-dimensional spaces (see e.g.~Remark \ref{rem:density_neural_compact_open_topology}). Thus we need to strengthen the continuity requirement by considering weaker topologies, i.e.~Assumption \ref{ass:F_seq_cont} (which, to the best of our knowledge, has not been previously considered for a general fully-non linear second-order PDE on a Hilbert space). We stress that, in the following, we require sequential continuity and not continuity, when $S(H)$ is endowed with the compact-open topology. As the compact-open topology is non sequential in general (see Definition \ref{rem:compact_open_LX}), the two notions do not coincide, with
sequential continuity being a weaker notion. Nonetheless, we will show that sequential continuity is enough for our needs. Moreover, we will verify it (Lemma \ref{seq:continuity_Fcv_F_compact_open}) in many important cases for HJB equations. Full continuity would be a lot harder to verify, as one would need to check continuity with respect to converging nets of operators and our argument would not work in this case.
\begin{assumption}\label{ass:F_seq_cont}
Let $F \colon  H \times \mathbb R \times H \times S(H)\to \mathbb R$ be sequentially continuous when $S(H)$ is endowed with any of the following topologies (see Definition \ref{rem:compact_open_LX}):
\begin{enumerate}
    \item[(i)] compact-open topology, i.e.~we assume that, for $x^n ,x,p^n,p \in H$, $v^n,v \in \mathbb R $,  and $Y^n,Y \in  S(H)$ such that $|x^n -x|,|p^n-p|,|v^n-v|\to 0 $, and $\sup_{h \in K} |[Y^n-Y]h| \xrightarrow{n \to \infty}0$ for all compact subsets $K \subset H$, we know $\modu{F(x^n, v^n, p^n, Y^n)-F\paren{x, v, p, Y}} \xrightarrow{n \rightarrow \infty } 0$.
    \item[(ii)] strong operator topology, i.e.~we assume that, for $x^n ,x,p^n,p \in H$, $v^n,v \in \mathbb R $, and $Y^n,Y \in  S(H)$ such that $|x^n -x|,|p^n-p|,|v^n-v|\to 0 $, and $ |[Y^n-Y]h| \xrightarrow{n \to \infty}0$ for all $h\in H$, we know $\modu{F(x^n, v^n, p^n, Y^n)-F\paren{x, v, p, Y}} \xrightarrow{n \rightarrow \infty } 0.$
\end{enumerate}
Sequential continuity with respect to the topologies in (i) and (ii) is equivalent, since, by Lemma \ref{lemma1:uniformcompactsfromstrongandequibounded},  a sequence in $S(H)$ converges in the compact-open topology if and only if it converges in the strong operator topology.
\end{assumption}
As pointed out by A. Święch in a correspondence with one of the authors, this assumption seems related to a crucial assumption  in the theory of viscosity solutions on Hilbert spaces, introduced by P.L. Lions in \cite[Equations (6), (7)]{lions1989viscosity} when $L=0$  (see also \cite[Assumption 3.47]{fabbri2017book}), which is used in typical viscosity perturbation arguments in the Hessian variable.  Although this assumption differs from ours, both are satisfied in similar ways when $F$ is the Hamiltonian of a stochastic control problem, see \cite{lions1989viscosity,fabbri2017book} and Lemma  \ref{seq:continuity_Fcv_F_compact_open}, e.g.~if $Q$ is trace class and $U$ is compact, or when $\sigma$ is independent of $u$.  This suggests that our assumption is sharp.
\begin{remark}\label{rem:discussion_assumption}
In this paper, we will always prove two kinds of statements: suitable uniform convergence on compact sets,  and convergence in  $L^w_\mu$ for a Borel measure $\mu$ on $H$ with $\|\mu\|_w<\infty$. For the latter, we remark that Assumption \ref{ass:F_seq_cont} is a weaker requirement then a (sequential) continuity of $F$ in the Hessian variable in  $L^w_\mu$, since $Z^n\to Z$ in the strong operator topology (or in the compact-open topology) implies\footnote{By Banach--Steinhaus, we can apply the dominated convergence theorem.}  $\left (\int_H|(Z^n-Z)h|^w\mu(dh)\right)^{1/w}\to 0$.
\end{remark}
To achieve  continuity of $F$ with respect to cylindrical approximations of  Hessians $D^2v^d(x)$, uniformly for  $x$ over compacts, we will use the following lemma.
\begin{lemma}\label{rem:continuity_PhiF}
Let Assumptions \ref{ass:F_estimates} and \ref{ass:F_seq_cont} hold.  Consider\footnote{Note that $F^{op}$ is well defined under Assumption \ref{ass:F_estimates}.} \[F^{op}: C(H ; S(H)) \rightarrow C(H \times \mathbb{R} \times H ; \mathbb{R}),\quad  (F^{op}(Z))(x, v, p):=F(x, v, p, Z(x)).\] Then $F^{op}$ is sequentially continuous when $C(H \times \mathbb{R} \times H ; \mathbb{R})$ is endowed with the  compact-open topology and 
$C(H ; S(H))$ is endowed with the  compact-open topology generated by the family of seminorms $\mathcal{P}^{coco}$ in Definition \ref{rem:compact_open_C_SH}. That is, let $Z^n,Z \in C(H ; S(H))$ be such that $\sup_{x\in K,h \in K'} |[Z^n(x)-Z(x)]h| \xrightarrow{n \to \infty}0$ for all compact $K,K' \subset H$, then 
        \begin{equation*}
    \sup_{(x,v,p)\in \mathcal K} \modu{F^{op}(Z^n)(x,v,p)-F^{op}(Z)\paren{x, v, p}}\equiv    \sup_{(x,v,p)\in \mathcal K} \modu{F(x, v, p, Z^n(x))-F\paren{x, v, p, Z(x)}} \xrightarrow{n \rightarrow \infty } 0,
    \end{equation*}
    for all compact $\mathcal K \subset H \times \mathbb R \times H$.
    \end{lemma}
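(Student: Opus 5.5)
We argue by contradiction, reducing uniform convergence on the compact set $\mathcal K$ to sequential continuity along a single convergent sequence of points. Suppose the conclusion fails for some compact $\mathcal K\subset H\times\R\times H$. Then there exist $\varepsilon>0$, a subsequence (not relabelled), and points $(x^n,v^n,p^n)\in\mathcal K$ such that
\[
\modu{F(x^n,v^n,p^n,Z^n(x^n))-F(x^n,v^n,p^n,Z(x^n))}\ge\varepsilon\quad\text{for all } n.
\]
Since $\mathcal K$ is compact and metrizable, we may pass to a further subsequence along which $(x^n,v^n,p^n)\to(x,v,p)\in\mathcal K$. The aim is then to show that both $F(x^n,v^n,p^n,Z^n(x^n))$ and $F(x^n,v^n,p^n,Z(x^n))$ converge to $F(x,v,p,Z(x))$, which contradicts the lower bound $\varepsilon$.

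The second term is easy. Because $Z\in C(H;S(H))$ is continuous in operator norm, $\nor{Z(x^n)-Z(x)}\to0$, and Assumption \ref{ass:F_estimates} gives
\[
\modu{F(x^n,v^n,p^n,Z(x^n))-F(x^n,v^n,p^n,Z(x))}\le C\nor{Z(x^n)-Z(x)}\bigl(1+\sup_{\mathcal K}|x|^k\bigr)\to0 .
\]
Continuity of $F$ then yields $F(x^n,v^n,p^n,Z(x))\to F(x,v,p,Z(x))$.

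The first term is the key step, and it is where Assumption \ref{ass:F_seq_cont} enters. I will show that $Y^n:=Z^n(x^n)\to Y:=Z(x)$ in the compact-open topology of $S(H)$, that is, $\sup_{h\in K'}|(Y^n-Y)h|\to0$ for every compact $K'\subset H$. Let $K_0$ be the projection of $\mathcal K$ onto its first $H$-factor; it is compact, and $x^n\in K_0$ for all $n$. Then split
\[
\sup_{h\in K'}|(Z^n(x^n)-Z(x))h|\le\sup_{y\in K_0,\,h\in K'}|(Z^n(y)-Z(y))h|+\nor{Z(x^n)-Z(x)}\sup_{h\in K'}|h| .
\]
The first term tends to zero by the hypothesis with $K=K_0$. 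The second tends to zero by the operator-norm continuity of $Z$ together with the boundedness of $K'$. Now apply Assumption \ref{ass:F_seq_cont}(i) to the sequence $(x^n,v^n,p^n,Y^n)$, which gives $F(x^n,v^n,p^n,Z^n(x^n))\to F(x,v,p,Z(x))$, the desired contradiction.

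The only delicate point is that the compact-open topology on $S(H)$ is not sequential. This is why the argument must be organized around a sequence: the contradiction reduces uniform convergence over $\mathcal K$ to one convergent sequence of points, and only the sequential continuity of Assumption \ref{ass:F_seq_cont} is invoked. Two further remarks complete the picture. Well-definedness of $F^{op}(Z)$, meaning its continuity on $H\times\R\times H$, follows from the continuity of $F$ and of $Z$. The operator-norm continuity of the limit $Z$ is exactly what allows the moving base point $x^n$ to be absorbed.
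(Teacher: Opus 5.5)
Your proof is correct, but it takes a different route from the paper's.

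The paper works as follows. It applies Banach--Steinhaus to the family $\{Z^n(x)-Z(x)\}_{x\in\mathcal K,n}$ to confine all the operators $Z^n(\mathcal K)$ and $Z(\mathcal K)$ to a ball $S_M$. On $S_M$ the compact-open topology is metrizable via $d_M$, so sequential continuity of $F$ upgrades to genuine continuity there. It then shows $\sup_{x\in\mathcal K} d_M(Z^n(x),Z(x))\to 0$ and concludes with the uniform-continuity-on-a-relatively-compact-set argument of Lemma \ref{rem:relatively-compact_metric}(2).

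You instead argue by contradiction using sequential compactness of $\mathcal K$. This reduces everything to a single sequence $(x^n,v^n,p^n,Z^n(x^n))$. You then prove $Z^n(x^n)\to Z(x)$ in the compact-open topology directly, absorbing the moving base point with the operator-norm continuity of $Z$.

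Your route is more elementary. It needs neither Banach--Steinhaus, nor the metric $d_M$, nor the relative-compactness lemma (which is stated for self-maps $f^n,f:X\to X$ and has to be adapted when the paper uses it). It also makes plain that only the sequential continuity of Assumption \ref{ass:F_seq_cont} is used. You could even handle the second term with Assumption \ref{ass:F_seq_cont} instead of the Lipschitz bound of Assumption \ref{ass:F_estimates}, since operator-norm convergence of $Z(x^n)$ implies compact-open convergence.

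What the paper's route buys is the structural fact that, on norm-bounded sets $S_M$, the relevant maps are genuinely continuous and not just sequentially so. That is what is reused in Theorem \ref{th:UAT_optimal_controls}, where Berge's maximum theorem is applied to $F^{cv}$ on $H\times H\times S_M\times U$ and needs true continuity.
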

    \begin{proof}
For simplicity, we show this for  $F:S(H)\to \mathbb R$, as the critical variable is the one in $S(H)$ and all other variables act as parameters on compact sets.
   Let  $Z^n,Z \in C(H ; S(H))$ be such that $\sup_{x\in K,h \in K'} |[Z^n(x)-Z(x)]h| \xrightarrow{n \to \infty}0$ for all compact $K,K' \subset H$. Fix a compact $\mathcal K\subset H$. We apply the Banach--Steinhaus theorem 
   to $\{T^{x,n}\}_{x\in \mathcal K,n\in \mathbb N}\subset \mathcal L(H)$, defined by $T^{x,n}h:=[Z^n(x)-Z(x)]h$, to see that  $\sup_{x\in \mathcal K,n\in \mathbb N}\|Z^n(x)-Z(x)\|<\infty$. Then,
   there exists $M>0$ such that $Z^n(\mathcal K),Z(\mathcal K)\subset  S_M=\{Y\in S(H):\|Y\|\leq M\}$, for all $n\in \mathbb N$. The compact-open topology is metrizable on $S_M$ (see Definition \ref{rem:compact_open_LX}), so $F$ upgrades to a  continuous function there. Picking the metric $d_M$ on $S_M$,  we have $\sup_{x\in \mathcal K}d_M(Z^n(x),Z(x))\leq \sum_{k=1}^{\infty} 2^{-k} \sup_{x\in \mathcal K}\left|(Z^n(x)-Z(x)) h_k\right|\xrightarrow{n\to \infty} 0,$ 
   where we have used the dominated convergence theorem. The claim follows by Lemma \ref{rem:relatively-compact_metric}(2).
\end{proof}
\begin{definition}[Classical solution]\label{def:classical_sol}
  A function $v: H \to \R$ is called a classical solution of \eqref{eq:PDE} if $v \in C^{2}(H)$, $LD v \in C^0(H;H)$, and $v$ satisfies \eqref{eq:PDE}.
\end{definition}
\begin{remark}\label{rem:mild_sol_PDE_1}
The classical solution is the most regular notion of  solution for these PDEs, making it suitable for one of the first investigation of numerical methods. Existence results for these PDEs are available for particular Kolmogorov-type PDEs (e.g.~see \cite{da2002second}) and for other specific cases (e.g.~see \cite{fabbri2017book}). See also Remark \ref{rem:mild_sol_PDE}.
\end{remark}

\subsection{Optimal control of deterministic and stochastic evolution equations}\label{subsec:control_setup}
The above framework is relevant for various nonlinear PDEs as mentioned in the introduction. Although our theory and methods will cover general PDEs of the form \eqref{eq:PDE}, in this paper we will take a special focus on the  HJB equation, which arises in optimal control problems of deterministic and stochastic differential equations on Hilbert spaces. These include optimal control of PDEs and SPDEs, path-dependent (S)DEs (e.g.~stochastic delay equations or stochastic Volterra integral equations), or partially observed stochastic systems. The HJB equation is a particular instance of the general PDE~\eqref{eq:PDE} which we will analyze in greater detail in what follows.

More concretely, let $H, \Xi, \tilde U$ be separable Hilbert spaces. Let $\tau=\left(\Omega, \mathcal{F}, \{\mathcal{F}_{t}\}_{t\ge 0}, \mathbb{P}, W^Q)\right)_{t \geq 0}$ be a generalized reference probability space (i.e.~$\big(\Omega, \mathcal{F}, \{\mathcal{F}_{t}\}_{t\ge 0}, \mathbb{P}\big)_{t\geq 0}$ is a complete filtered probability space with complete, right-continuous filtration $\{\mathcal{F}_{t}\}_{t\ge 0}$ and $W^Q_t$ is a generalized Wiener process on $\Xi$ \cite{da1992stochastic,fabbri2017book} with covariance operator $Q \in \mathcal L(\Xi)$). Let $A:D(A)\subset H \to H$ be the generator of a $C_0$-semigroup on $H$, and define $\mathcal U^\tau:=\{u: \Omega \times [0,\infty) \to U \text{ progressively measurable}\}$ where $U$ is a closed convex subset of  $\tilde U$; we define the proper class of admissible controls in the weak formulation by $\mathcal U=\bigcup_\tau \mathcal U^\tau$, where the union is taken over the class of all reference probability spaces $\tau$ \cite{defeo_swiech,defeo2025}.
Consider the state equation
\begin{align}\label{eq:stateSDE_H}
    dX_t &= [AX_t+ b(X_t,u_t)]dt + \sigma(X_t,u_t)dW^Q_t, \quad  X_0 = x \in H,
\end{align}
where $b \colon H \times U \to H$ and $\sigma \colon  H \times U  \to \g(\Xi;H)$, and $u \in \mathcal U.$ 
The goal is to minimize, over all admissible controls $u \in \mathcal U$, a functional of the form
\begin{align*}
 J(x;u) := \E\brac{\int_0^\infty e^{-\gamma t} l(X^{x,u}_t,u_t)dt },
    \end{align*}
where $\gamma>0$ is large enough to ensure finiteness, and $l:  H \times U \to \mathbb R$. Define the value function 
    $$V:H\to \mathbb R,\quad V(x):=\inf_{u \in \mathcal U}  J(x;u). $$
The associated HJB equation is the following second-order fully non-linear PDE on $H$ \cite{fabbri2017book}
\begin{align}\label{eq:HJB_control}
    -\gamma v +  \inprod{A^*Dv}{x}  + \inf_{u\in U }\curlbrac{\inprod{Dv}{b(x,u)}+\frac{1}{2}\mathrm{Tr}[\sigma(x,u) Q \sigma^*(x,u)D^2v]+l(x,u)} &= 0,
\end{align}
i.e.~\eqref{eq:PDE} with $L: = A^*$, where $A^*:D(A^*)\subset H \to H$ is the adjoint of $A$, and 
\begin{align}
    F(x,v,p,X)&:=-\gamma v+\inf_{u\in U}F^{cv}(x,p,X,u),\\
    F^{cv}(x,p,X,u)&:=\inprod{p}{b(x,u)}+\frac{1}{2}\mathrm{Tr}[\sigma (x,u)Q\sigma^*(x,u)X]+l(x,u).\label{eq:Fcv}
\end{align}
\begin{remark}[Deterministic case]\label{rem:detertministic}When $Q=0$ the problem reduces to an optimal control problem of an evolution equation, i.e.
\begin{align*}
    V(x)=\inf_{u \in \mathcal U}  J(x;u) =\inf_{u \in \mathcal U}   \int_0^\infty e^{-\gamma t} l(X^{x,u}_t,u_t)dt ,\quad X_t' &= AX_t+ b(X_t,u_t), \quad  X_0 = x \in H
    \end{align*}
and the HJB equation reduces to a first order PDE on $H$ \cite{crandall1991viscosity,li2012optimal}.
\end{remark}
\begin{remark}[Kolmogorov PDEs]When $\mathcal U$ is a singleton made of a feedback control (i.e.~$\mathcal U=\{u\}$, with $u:H\to U$), the problem reduces to the computation of the functional
\begin{align*}
    V(x)=  \int_0^\infty e^{-\gamma t} l(X^{x,u}_t,u(X^{x,u}_t))dt .
    \end{align*}
In this case the HJB equation reduces to a Kolmogorov PDE on $H$, i.e.
\begin{align}\label{eq:kolmogorov}
    -\gamma v+  \inprod{A^*Dv}{x}  + \inprod{Dv}{b(x,u(x))}+\frac{1}{2}\mathrm{Tr}[\sigma(x,u(x)) Q \sigma^*(x,u(x))D^2v]+l(x,u(x)) &= 0.
\end{align}
\end{remark}
\begin{assumption}\label{ass:coefficients_control}
    Assume that $Q\in \mathcal L_1^+(\Xi)$, $b: H \times U \to H$, $\sigma: H \times U \to \mathcal L(\Xi;H)$, and $l: H \times U \to \R$ are continuous and there exists $C>0$ such that for all $x,x' \in H$, we have
    \begin{align}
     |b(x,u)- b(x',u)|&\leq C(|x'-x|), &|b(x,u)|&\leq C(1+|x|)\label{eq:estimates_b},\\ 
     \|\sigma(x,u)- \sigma(x',u)\|_{\g(\Xi;H)}&\leq C(|x'-x|),
     & \|\sigma(x,u)\|_{\g(\Xi;H)}&\leq C(1+|x|),\label{eq:estimates_sigma}\\
     |l(x,u)- l(x',u)|&\leq \omega_R(|x'-x|),&|l(x,u)|&\leq C(1+|x|^m).
    \end{align}
\end{assumption}
\begin{remark}\label{rem:F_uniform_cont}
Under this assumption (see \cite{fabbri2017book}), $F$ is uniformly continuous on bounded sets of $H\times \mathbb R\times H\times S(H)$ and Assumption \ref{ass:F_estimates} is satisfied with $k = 2$, as there exists $C>0$ such that
    \begin{align}\label{eq:est_F_control}
        |F(x, v, p, Z)-F(x, r, q, Y)| \leq C\brac{|v-r|+(1+|x|)|p-q|+\left(1+|x|^2\right)\|Z-Y\|}.
    \end{align}
    \end{remark}
Under these conditions, by \cite{fabbri2017book} there exists a unique mild solution to the state equation \eqref{eq:stateSDE_H}, i.e.
    \begin{align}\label{eq:mild}
        X_t &= e^{At} x+ \int_0^t e^{A(t-s)}b(X_s,u_s)ds +\int_0^t e^{A(t-s)}\sigma(X_s,u_s)dW^Q_s.
    \end{align}
    In particular, recall that $X_t\not \in D(A)$ in general, hence this solution is not simply obtained by integrating \eqref{eq:stateSDE_H} as in the finite dimensional case ($H=\mathbb R^n$) but it is defined via the variation of constants formula exploiting the smoothness properties of the $C_0$-semigroup $e^{At}$.

    We  provide examples for which  Assumption  \ref{ass:F_seq_cont} holds.
\begin{lemma}\label{seq:continuity_Fcv_F_compact_open}
 Let Assumption \ref{ass:coefficients_control} hold.  let $\{\xi_i\}_{i=1}^\infty$ be an orthonormal basis of $\Xi$. Assume that for all $i \in \N$ and all $Z_n,Z\in S(H)$ such that $Z^n\to Z$ in the compact-open topology,  \begin{align}\label{eq:sup_Z-PdZPd_to_zero}
         \sup_{u\in U}\modu{\langle (Z^n-Z) \sigma(x,u) Q^{1/2}\xi_i, \sigma(x,u)Q^{1/2}\xi_i \rangle}\xrightarrow{n \to \infty}0.
    \end{align}
 Then Assumption  \ref{ass:F_seq_cont} holds.
\end{lemma}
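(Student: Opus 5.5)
It suffices to prove sequential continuity in the compact-open topology, item (i). Items (i) and (ii) are equivalent by Lemma \ref{lemma1:uniformcompactsfromstrongandequibounded}.

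Take sequences $x^n\to x$, $v^n\to v$, $p^n\to p$ and $Y^n\to Y$ in the compact-open topology. Since
\[
F(x,v,p,Y)=-\gamma v+\inf_{u\in U}F^{cv}(x,p,Y,u),
\]
and $|\inf f-\inf g|\le \sup|f-g|$, it is enough to show
\[
\sup_{u\in U}\bigl|F^{cv}(x^n,p^n,Y^n,u)-F^{cv}(x,p,Y,u)\bigr|\to 0 .
\]
The term $-\gamma v^n$ is trivial. I would split the difference by the triangle inequality into two pieces.

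\textbf{Step 1: change $x,p$ at fixed $Y^n$.} Let $M:=\sup_n\|Y^n\|$. This is finite by Banach--Steinhaus, since strong convergence gives pointwise boundedness. Using \eqref{eq:estimates_b}, the inner-product parts of $F^{cv}$ differ by at most
\[
|p^n-p|\,C(1+|x^n|)+|p|\,C|x^n-x|,
\]
uniformly in $u$. For the trace part, write $\Sigma(x,u):=\sigma(x,u)Q^{1/2}$. Then $\mathrm{Tr}[\Sigma\Sigma^*Y]=\mathrm{Tr}[\Sigma^*Y\Sigma]$, and
\[
|\mathrm{Tr}[\Sigma_1^*Y\Sigma_1]-\mathrm{Tr}[\Sigma_2^*Y\Sigma_2]|\le \|Y\|\,\|\Sigma_1-\Sigma_2\|_{\mathcal L_2}\bigl(\|\Sigma_1\|_{\mathcal L_2}+\|\Sigma_2\|_{\mathcal L_2}\bigr).
\]
Since $\|\Sigma\|_{\mathcal L_2}\le \|\sigma\|\,(\mathrm{Tr}\,Q)^{1/2}$, the Lipschitz and growth bounds in \eqref{eq:estimates_sigma} make this $\le C M|x^n-x|(1+|x|+|x^n|)$, uniformly in $u$. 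For the cost term $l$, the assumption only gives a modulus $\omega_R$. I would read it as uniform in $u$ on bounded sets, which is the standard reading in \cite{fabbri2017book}. All these pieces then vanish uniformly in $u$.

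\textbf{Step 2: change $Y^n\to Y$ at fixed $x$ (the main step).} We must show
\[
\sup_{u}\Bigl|\sum_i\bigl\langle (Y^n-Y)\Sigma(x,u)\xi_i,\Sigma(x,u)\xi_i\bigr\rangle\Bigr|\to 0 .
\]
Fix $N$ and split the sum at $N$.
\begin{itemize}
\item For the finitely many terms $i\le N$, each goes to $0$ uniformly in $u$ by hypothesis \eqref{eq:sup_Z-PdZPd_to_zero}, applied with $Z^n=Y^n$, $Z=Y$.
\item For the tail $i>N$, bound it by
\[
(M+\|Y\|)\sum_{i>N}|\sigma(x,u)Q^{1/2}\xi_i|^2\le (M+\|Y\|)\,\|\sigma(x,u)\|^2\sum_{i>N}|Q^{1/2}\xi_i|^2\le (M+\|Y\|)\,C^2(1+|x|)^2\sum_{i>N}\langle Q\xi_i,\xi_i\rangle .
\]
Because $Q$ is trace class, this is small uniformly in $u$ once $N$ is large.
\end{itemize}
Choosing $N$ first and then $n$ large gives the claim.

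The main obstacle is this uniformity in $u$. It is exactly why the hypothesis is stated with a $\sup_{u\in U}$, and why the tail must be controlled through the factorisation $\sigma Q^{1/2}$ together with $Q\in\mathcal L_1^+$. The Banach--Steinhaus bound on $\|Y^n\|$ is also essential for the tail estimate, since the compact-open topology alone gives no norm control.

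Continuity of $F$ itself, the first part of Assumption \ref{ass:F_estimates}, already follows from Remark \ref{rem:F_uniform_cont}.
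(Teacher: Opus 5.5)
Your proof is correct and follows essentially the same route as the paper: the same triangle-inequality split, first moving $(x,p)$ at fixed $Y^n$ using the Banach--Steinhaus bound $\sup_n\|Y^n\|<\infty$, then moving $Y^n\to Y$ at fixed $x$ by writing the trace as $\sum_i\langle (Y^n-Y)\sigma(x,u)Q^{1/2}\xi_i,\sigma(x,u)Q^{1/2}\xi_i\rangle$ and dominating it by $\sum_i|Q^{1/2}\xi_i|^2=\mathrm{Tr}\,Q$. The differences are only cosmetic: you verify the first step by hand, where the paper cites the uniform continuity of $F$ on bounded sets (Remark \ref{rem:F_uniform_cont}), and your head/tail split at $N$ is an explicit version of the paper's dominated convergence for series.
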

\begin{remark}\label{rem:Fcv_seq_cont}
    Condition \eqref{eq:sup_Z-PdZPd_to_zero} holds if, for example:
   \begin{enumerate}
    \item The action space $U$ is compact. Indeed, in this case
    \begin{align*}
        &\sup_{u\in U}\left | \langle   (Z^n-Z) \sigma(x,u) Q^{1/2}\xi_i, \sigma(x,u)Q^{1/2}\xi_i \rangle\right |\leq C\sup_{y\in K_\sigma^i}\left |    (Z^n-Z) y\right | \xrightarrow{n \to \infty}0,
    \end{align*}
   where $K_\sigma^i:=\sigma(x,U)Q^{1/2}\xi_i \subset H$ is compact, for all $i$.
   \item The diffusion coefficient $\sigma=\sigma(x)$ is independent of $u$.
    \end{enumerate}
\end{remark}
\begin{proof}
 Let $x^n,x, p^n,p \in H$, $v^n,v \in \mathbb R$, $Z^n,Z\in S(H)$ such that  $x^n \to x$, $p^n \to p$, $v^n\to v$,  and 
    $\sup_{h\in K} |[Z^n-Z]h| \xrightarrow{n \to \infty}0, $ for all $K \subset H$ compact subsets;
then
\begin{align*}
&|F\left(x^n, p^n, Z^n\right)-F\left(x, p, Z\right) |  \leq  | F\left(x^n, p^n, Z^n\right)-F\left(x, p, Z^n\right)|+| F\left(x, p, Z^n\right)-F\left(x, p, Z\right)|.
\end{align*}
By the Banach--Steinhaus theorem we have  $\sup_{n \in \mathbb N}\|Z^n\|< \infty$. Then, the first term goes to zero by uniform continuity of $F$ on bounded sets of $X\times \mathbb R\times H\times S(H)$. For the second term, it  suffices to prove that 
\begin{align}\label{eq:Tr_sigma_Z-P-dZ_to_zero}
     \sup_{u \in U} |\mathrm{Tr}[\sigma(x,u)Q\sigma^*(x,u)(Z^n-Z)]| \xrightarrow{n \to \infty}0.
          \end{align}
We have
     \begin{align*}
        \sup_{ u \in U} |\mathrm{Tr}[\sigma(x,u)Q\sigma^*(x,u)(Z^n-Z)]|&= \sup_{u \in U} |\mathrm{Tr}[\left(\sigma(x,u)Q^{1/2}\right)^*(Z^n-Z)\sigma(x,u)Q^{1/2}]|\\
        &= \sup_{u \in U} \left | \sum_{i=1}^\infty\langle   (Z^n-Z) \sigma(x,u)Q^{1/2}\xi_i,  \sigma(x,u)Q^{1/2}\xi_i \rangle\right | .
     \end{align*}
Since $\sup_{n \in \mathbb N}\|Z^n-Z\|< \infty$ and  by \eqref{eq:estimates_sigma}, we have, for $C_x>0,$ 
     \begin{align*}
        &\sup_{u \in U}  \left | \langle   (Z^n-Z) \sigma(x,u) Q^{1/2}\xi_i, \sigma(x,u)Q^{1/2}\xi_i \rangle\right |\leq C_x |Q^{1/2}\xi_i |^2,\quad \sum_{i = 1}^\infty|Q^{1/2}\xi_i |^2
        = \|Q\|_{\g_1(\Xi)}<\infty.
     \end{align*}
      Then we conclude that \eqref{eq:Tr_sigma_Z-P-dZ_to_zero} holds by  dominated convergence for series. 
\end{proof}
\begin{remark}\label{seq:continuity_Fcv_compact_open}
      Let Assumption \ref{ass:coefficients_control} hold.  Then,  $F^{cv}:H\times \mathbb R \times H\times S(H)\times U\to \mathbb R$ is sequentially continuous, when $S(H)$ is endowed with the compact-open topology. The proof is a simplification than the one in Lemma \ref{seq:continuity_Fcv_F_compact_open}.
    \end{remark}   
    
    \paragraph{Examples.} The above  is a very general setup, able to handle many infinite-dimensional deterministic and stochastic optimal control problems, as discussed in the introduction. For instance, under suitable standard conditions, it covers deterministic and stochastic optimal control problems related to 
    \begin{itemize}[leftmargin=*,nosep]
        \item heat equations (see Example \ref{ex:heat_eq_intro} and Section \ref{sec:numerics}), e.g.~\cite{fabbri2017book},
        \item wave equations, e.g.~\cite{fabbri2017book},
        \item first-order transport equations, e.g.~\cite{faggian_gozzi,defeo_gozzi_swiech_wessels},
        \item delay equations (in the state and/or in the control), e.g.~\cite{defeo_federico_swiech,defeo_phd,defeo_DEAF,fabbri2017book},
        \item Volterra integral equations with regular kernels, e.g.~\cite{possamai_mehdi} (Example \ref{ex:volterra}),
        \item  particle systems and mean-field problems, leading to lifted limit HJB equations on $H$ (Example \ref{ex:MFC}), e.g.~\cite{gangbo_mayorga_swiech,swikech2025finite} and generalizations to particle systems and mean-field problems of stochastic evolution equations, e.g.~\cite{defeo_gozzi_swiech_wessels}.
    \end{itemize}

\subsection{Hilbert--Galerkin Neural Operators (HGNOs)}\label{sec:neural_network}

We use a Hilbert--Galerkin Neural Operator (HGNO) \cite{castro2022} to represent the PDE solution $v : H \to \R$ and, if relevant to the problem, the optimal control $u : H \to U$. The HGNO is an encoder-decoder type architecture for learning nonlinear operators between separable Hilbert spaces that works by representing elements of each Hilbert space via coordinates in a truncated orthonormal basis, where the map between coordinates in these bases is a trainable finite-dimensional approximation scheme such as a deep neural network. In particular, we assume for each domain and codomain dimension $d, p \in \N$, there exists a class of finite-dimensional functions $\{\tilde{f}^{d,\theta,p}  : \R^d \to \R^p\}_{\theta \in \Theta}$, where $\theta \in \Theta$ represents the trainable parameters, that universally approximate in the following senses.
\begin{assumption}[Universal approximation on compacts]\label{ass:finite_dim_approx}
    For all $d,p \in \N$, the set of functions $\{\tilde f^{d,\theta,p}: \mathbb{R}^d \rightarrow \mathbb{R}^p\}_{\theta \in \Theta}$ is dense in $C^2(\R^d;\ \R^p)$ endowed with the compact-open topology. That is, for any $h\in C^2(\R^d; \R^p),$ $K \subset \R^d$ compact, $\epsilon > 0$, there exists $\theta \in \Theta$ such that 
    \begin{equation*}
      \modu{h - \tilde f^{d,\theta,p}}_{C^2(K;\R^p)}:=  \sum_{i=1}^p \brac{\sup_{x \in K}\modu{h_i(x)-\tilde f^{d,\theta,p}_i(x)} + \sup_{x \in K}\modu{\nabla_x(h_i - \tilde f^{d,\theta,p}_i)(x)} + \sup_{x \in K}\modu{\mathrm{Hess}_x(h_i-\tilde f^{d,\theta,p}_i)(x)}} < \epsilon.
    \end{equation*}
\end{assumption}
Given a finite measure $\mu$ on $\R^d$, let $C^{2}_{\mu}(\mathbb R^d;\mathbb R^p):=\left\{f \in C^2\left(\mathbb R^d;\mathbb R^p\right):\|f\|_{H^2(\mathbb R^d;\mathbb R^p;\mu)}<\infty\right\}$.

\begin{assumption}[{Universal approximation in Sobolev norms}]\label{ass:finite_dim_sobolev_approx}
    For all $d,p \in \N$ and finite measures $\mu$ on $\R^d$, the set of functions $\{\tilde f^{d,\theta,p}: \mathbb{R}^d \rightarrow \mathbb{R}^p\}_{\theta \in \Theta}$ is such that for any $h \in  C^{2}_{\mu}(\mathbb R^d;\mathbb R^p)$ and $\epsilon > 0$, there exists $\theta \in \Theta$ such that $\|h - \tilde f^{d,\theta,p}\|_{H^2(\R^d;\R^p;\mu)} < \epsilon.$
\end{assumption}

Our motivating class of functions with these approximation capabilities are neural networks.

\begin{definition}[Neural networks]\label{def:neural_structure}
    Let $\mathfrak{m} \in C^0(\R)$ be an `activation' function. Define the set of neural network parameters of an arbitrary number $\mathfrak{L} \in \N$ of hidden layers by $\Theta := {\Theta}_{d, p}:=\bigcup_{\mathfrak{L} \in \mathbb{N}} {\Theta}_{\mathfrak{L},d,p}$, where 
    $$\begin{aligned} & {\Theta}_{\mathfrak{L}, d, p}:=\left\{\theta=\left(A_1, A_2, \ldots, A_\mathfrak{L}\right): A_j: \mathbb{R}^{d_j} \rightarrow \mathbb{R}^{d_{j+1}} \text { is an affine function}, j=1, \ldots, \mathfrak{L}-1, d_1=d, d_{\mathfrak{L+1}}=p  \right \}.
    \end{aligned}$$ For each $\theta \in \Theta$, the corresponding deep neural network is the function $\tilde f^{d,\theta,p}: \mathbb{R}^d \rightarrow \mathbb{R}^p$ given by $$\tilde{f}^{d, \theta, p}(y):=A_{\mathfrak{L}} \circ \mathfrak{m} \circ A_{\mathfrak{L}-1} \circ \cdots \circ \mathfrak{m} \circ A_1(y),$$ where $\mathfrak{m}$ is understood to apply componentwise.
\end{definition}
Fully connected neural networks typically obey Assumptions \ref{ass:finite_dim_approx} and \ref{ass:finite_dim_sobolev_approx}. In particular, the subclasses of
\begin{itemize}
    \item wide neural networks with a single hidden layer ($\mathfrak{L} = 1$, $d_2$ arbitrarily large) with $\mathfrak{m} \in C^2_b(\R)$ nonconstant; this is the classic result of Hornik \cite[Theorems 3 and 4]{hornik1991approximation},
    \item deep neural networks of finite width ($\mathfrak{L}$ arbitrarily large, $d_2, \dots, d_\mathfrak{L} = d+p+1$) with $\mathfrak{m} \in C^3(\R)$ non-affine \cite{kidger2020universal, JMLR:v24:22-1191}\footnote{The stated claim follows for fully connected networks from a slight extension of the results of \cite{kidger2020universal} (by way of Nachbin's \cite{Nachbin1949} extension of the Stone--Weierstrass theorem to obtain approximation of a function and its derivatives in finite dimensions, rather than simply approximation in $C^0$). For other deep neural network architectures, see \cite{JMLR:v24:22-1191} for bounds on the required dimensions.}
\end{itemize}
are each individually rich enough to satisfy the two assumptions.  Other architectures, such as some transformer models \cite{Yun2020Are}, convolutional neural networks \cite{HWANG2026101833}, and kernel methods \cite{JMLR:v7:micchelli06a}, can also be shown to have results of this type, under appropriate assumptions, along with the classical examples of approximation in Fourier or polynomial bases.

We now state the definition of a Hilbert--Galerkin Neural Operator.
\begin{definition}[HGNOs]\label{def:deep_H_net}
    Let $H_1$, $H_2$ be separable Hilbert spaces and let $\{e_i\}_{i=1}^\infty \subset H_1$ and $\{g_i\}_{i=1}^\infty \subset H_2$ be orthonormal bases of $H_1$, $H_2$, respectively. Let $d,p \in \N$ be such that $d \leq \dim H_1$ and $p \leq \dim H_2$. Let $\{\tilde f^{d,\theta,p} : \R^d \to \R^p\}_{\theta \in \Theta}$ satisfy Assumptions \ref{ass:finite_dim_approx} and \ref{ass:finite_dim_sobolev_approx}. A $(H_1, \{e_i\}_{i=1}^\infty, d, \theta, H_2, \{g_i\}_{i=1}^\infty, p)$-HGNO, or simply HGNO, is a map
    \begin{equation}
   f^{d,\theta,p} : H_1 \to H_2,\quad    f^{d,\theta,p}(x) = \left(\widehat{\mathcal E}_p^{H_2} \circ \tilde f^{d,\theta,p} \circ \mathcal{E}_d^{H_1}\right)(x)=\sum_{j=1}^p \tilde f^{d,\theta,p}_j\Big((\inprod{x}{e_i})_{i=1}^d\Big) g_j 
    \end{equation}
    where $\mathcal{E}_d^{H_1}:H_1\to \mathbb{R}^d$ and $\widehat{\mathcal E}_p^{H_2}:\mathbb{R}^p\to H_2$ are the coordinate and embedding operators and $\tilde f^{d,\theta,p}_j$ represents the $j$-th component. We denote the class of HGNOs of this form by $\mathcal {HGNO}(H_1,H_2)=\{f^{d,\theta,p}:d,p\in \mathbb N,\theta\in \Theta\}$.
\end{definition}

We will prove in Theorems \ref{th:UAT_K} and \ref{th:UAT_L2_noL} that the HGNO given above has approximation properties analogous to finite-dimensional networks when $H_2 = \R$ and, by trivial extension, when $H_2 = \R^k$. We will apply these theorems to solving PDEs on Hilbert spaces. We remark that in the theorems stated in the present paper, every statement regarding existence of HGNOs satisfying uniform properties on compact sets will only require Assumption \ref{ass:finite_dim_approx}, while statements regarding properties in $L^2_\mu$ or in Sobolev norms will only require Assumption \ref{ass:finite_dim_sobolev_approx}, i.e.~we will not need Assumptions \ref{ass:finite_dim_approx} and \ref{ass:finite_dim_sobolev_approx} to hold simultaneously. However, for brevity, we include both assumptions in the definition of an HGNO.

\section{Universal approximation  of Fréchet derivatives on $H$}\label{sec:UATFrechet}
In this section, we prove new universal approximation theorems for the simultaneous approximation of a function $v \in C^2(H)$ and its Fréchet derivatives uniformly on compact subsets of $H$ and in opportune Sobolev norms. We also extend these results to hold  under the actions of unbounded operators. These results are of independent interest but will be used later to show that HGNOs can approximately solve PDEs on Hilbert spaces.

Let $H$ be a Hilbert spaces with orthonormal basis $\{e_i\}$, and let $P_d:H\to \mathrm{span}\{e_1,...,e_d\}\subset H$ denote the orthogonal projection map (see Definition \ref{def:embedding_op}). 
We introduce the cylindrical approximation of $v :H\to \mathbb R$. 
\begin{notation}[Cylindrical approximation]\label{def:cylindrical_approx}
  Given  $v \in C^0(H)$ we denote its cylindrical approximation of degree $d \in \mathbb N$ by
\begin{equation}\label{eq:def_vd}
    v^d \colon H \to \R, \quad   v^d(x):=v(P_d x).
\end{equation}
Then we can define the corresponding finite dimensional function  
$$\tilde v^d : \mathbb R^d \to \mathbb R, \quad \tilde v^d(x_1,...,x_d):=v^d(\widehat {\mathcal E}_d(x_1,...,x_d))= v(\widehat{\mathcal{E}}_{d}(x_1,...,x_d)).$$
\end{notation}
If $v \in C^2(H)$ then $\tilde v^d \in C^2(\mathbb R^d)$ with $\partial_i \tilde v^d(x_1,...,x_d)=(Dv(\widehat{\mathcal{E}}_{d}(x_1,..,x_d))_i$, for $i\leq d$, and $\partial_{ij}^2 \tilde v^d(x_1,...,x_d)=\langle  D^2v(\widehat{\mathcal{E}}_{d}(x_1,...,x_d)) e_i,e_j\rangle$, for $i,j \leq d$. Notice that 
\begin{small}
\begin{align}\label{eq:Dvd}
&Dv^d(x)=P_d Dv(P_dx)=\sum_{i=1}^d \partial_i \tilde v^{ d}\left(\mathcal{E}_{ d}(x)\right) e_i,\qquad D^2v^d(x)h=P_d D^2 v\left(P_d x\right) P_dh=\sum_{i=1}^d\left[\sum_{j=1}^d\partial^2_{ij} \tilde v^{d}\left(\mathcal{E}_{ d}(x)\right)h_j \right ]e_i.
\end{align}
\end{small}

Let $L:D(L)\subset H\to H$ be a closed  linear operator with dense domain. Furthermore, assume that $\{e_i\}\subset D(L)$. Then $Dv^{d,\theta} :H \to D(L)$ and $LDv^{d,\theta}\in C^0(H;H)$  with \begin{align}\label{eq:LDvd}
LDv^{d}(x)=\sum_{i=1}^d \partial_i \tilde v^{ d}\left(\mathcal{E}_{ d}(x)\right) Le_i.
\end{align}
\begin{remark}
Since $D(L)$ is dense in $H$ (which is separable), there exists a countable set $\{y_n\}\subset D(L)$ dense in $H$. By the Gram--Schmidt algorithm, we construct an orthonormal basis $\{e_i\}$ of $H$ such that $e_i\in D(L)$, and assume that our approximation is taken in such a basis.
\end{remark}
\paragraph{Parametrization of the solution of the PDE.} We parameterize  $v\in C^2(H)$ via a HGNO (Definition \ref{def:deep_H_net}) with $H_1=H,H_2=\mathbb R$, i.e., $$v^{d,\theta}\equiv v^{d,\theta,1}\colon H \to \mathbb R, \quad v^{d,\theta}(x)= \tilde v^{d,\theta} ( \mathcal{E}_{ d}(x)),$$
where $\tilde v^{d,\theta} \equiv \tilde v^{d,\theta,1} : \R^d \to \R$ is a deep neural network (Definition \ref{def:neural_structure}).
Notice that $v^{d,\theta}\in C^2(H)$ with
\begin{align}\label{eq:Dvdtheta}
&Dv^{d,\theta}(x)=\sum_{i=1}^d \partial_i \tilde v^{d,\theta}\left(\mathcal{E}_{ d}(x)\right) e_i,\quad D^2v^{d,\theta}(x)h=\sum_{i=1}^d\left[\sum_{j=1}^d\partial^2_{ij} \tilde v^{d,\theta}\left(\mathcal{E}_{ d}(x)\right)h_j \right ]e_i.
\end{align}
Furthermore, if $\{e_i\}\subset D(L)$, then $Dv^{d,\theta} :H \to D(L)$ and $LDv^{d,\theta}\in C^0(H;H)$  with \begin{align}\label{eq:LDvd_theta}
LDv^{d,\theta}(x)=\sum_{i=1}^d \partial_i \tilde v^{ d, \theta}\left(\mathcal{E}_{ d}(x)\right) Le_i.
\end{align}
\subsection{Universal approximation on compact subsets of $H$}\label{sec:UAT}
\begin{theorem}\label{th:UAT_K}
The set $\mathcal{HGNO}(H,\mathbb R)$ is dense in $C^2(H)$, when this space is endowed with the compact-open topology generated by  the directed family of seminorms $\mathbf P ^{COCO}$ described in Definition \ref{rem:compact_open_C2}. That is for all $v \in C^2(H)$, $K,K'\subset H$ compact subsets,  for every $\epsilon>0$, there exist $d \in \mathbb N$ and $\theta \in \Theta$ such that $\mathbf p^{COCO}_{K,K'}(v-v^{d,\theta})<\epsilon,$ i.e.
    \begin{equation}\label{eq:convergence_Dvtheta}
        \sup_{x \in K} |v(x)-v^{d,\theta}(x)|<\epsilon, \quad \sup_{x \in K}|D v(x)-D v^{d,\theta}(x)|< \epsilon, \quad \sup_{x \in K, h \in K'}|[D^2v(x)-D^2v^{d,\theta}(x)]h|<\epsilon .
    \end{equation}
\end{theorem}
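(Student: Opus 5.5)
The plan is a two-step triangle inequality through the cylindrical approximation $v^d(x)=v(P_dx)$ of Notation \ref{def:cylindrical_approx}: first make $v-v^d$ small in the seminorm $\mathbf p^{COCO}_{K,K'}$ by choosing $d$ large, then, with $d$ fixed, make $v^d-v^{d,\theta}$ small using the finite-dimensional Assumption \ref{ass:finite_dim_approx}.

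\textbf{Step 1 (cylindrical error).} Since $P_d\to I$ strongly and $\|P_d\|\le 1$, a standard equicontinuity argument gives $\sup_{x\in K}|P_dx-x|\to0$ for every compact $K$. Consequently the set $\hat K:=K\cup\bigcup_d P_d(K)$ is compact. Indeed, any sequence $P_{d_n}x_n$ either has $d_n$ bounded, and then lies in finitely many compact sets, or has a subsequence with $x_n\to x\in K$ and $d_n\to\infty$, in which case $P_{d_n}x_n\to x$.

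\emph{Zeroth order.} By uniform continuity of $v$ on $\hat K$, $\sup_{x\in K}|v(x)-v(P_dx)|\to0$.

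\emph{First order.} Write
\[
Dv(x)-P_dDv(P_dx)=(I-P_d)Dv(x)+P_d\bigl(Dv(x)-Dv(P_dx)\bigr).
\]
The first term tends to $0$ uniformly on $K$ because $Dv(K)$ is compact. The second tends to $0$ by uniform continuity of $Dv$ on $\hat K$.

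\emph{Second order.} Split
\begin{align*}
[D^2v(x)-P_dD^2v(P_dx)P_d]h={}&(I-P_d)D^2v(x)h+P_d[D^2v(x)-D^2v(P_dx)]h\\
&+P_dD^2v(P_dx)(I-P_d)h.
\end{align*}
The first term is handled by the compactness of $\{D^2v(x)h: x\in K,\,h\in K'\}$, which is the continuous image of $K\times K'$ because $(x,h)\mapsto D^2v(x)h$ is jointly continuous. For the second term, $D^2v$ is uniformly continuous from $\hat K$ into $S(H)$ in operator norm, and $K'$ is bounded. For the third term, $\sup_{x\in\hat K}\|D^2v(x)\|<\infty$ and $\sup_{h\in K'}|(I-P_d)h|\to0$. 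Fix $d$ with all three cylindrical errors below $\epsilon/2$.

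\textbf{Step 2 (finite-dimensional approximation).} The function $\tilde v^d\in C^2(\mathbb R^d)$, and $\mathcal E_d(K)$ is compact in $\mathbb R^d$. Apply Assumption \ref{ass:finite_dim_approx} to obtain $\theta$ with $|\tilde v^d-\tilde v^{d,\theta}|_{C^2(\mathcal E_d(K))}<\delta$. From \eqref{eq:Dvd} and \eqref{eq:Dvdtheta}, the differences $v^d-v^{d,\theta}$, $Dv^d-Dv^{d,\theta}$ and $D^2v^d-D^2v^{d,\theta}$ at $x$ are expressed through the value, gradient and Hessian differences of $\tilde v^d-\tilde v^{d,\theta}$ at $\mathcal E_d(x)$. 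In particular,
\[
\bigl|[D^2v^d(x)-D^2v^{d,\theta}(x)]h\bigr|\le\bigl|\mathrm{Hess}(\tilde v^d-\tilde v^{d,\theta})(\mathcal E_dx)\bigr|\,|h|,
\]
and $K'$ is bounded. Choosing $\delta$ small relative to $\epsilon$ and $\sup_{h\in K'}|h|$ then yields \eqref{eq:convergence_Dvtheta}.

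\textbf{Main obstacle.} The delicate step is the Hessian term in Step 1. Operator-norm convergence fails because $\|I-P_d\|=1$, so the argument must use that $K'$ is compact: this gives $(I-P_d)\to0$ uniformly on $K'$ and on the compact image set $D^2v(K)K'$. The argument also relies on the compactness of $\hat K$ in order to use uniform continuity and local boundedness of $D^2v$ along the projected points $P_dx$.
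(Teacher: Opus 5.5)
Your proof is correct and follows essentially the same route as the paper. It uses a triangle inequality through $v^d(x)=v(P_dx)$, controls the cylindrical errors via uniform convergence $P_d\to I$ on compacts together with uniform continuity and boundedness of $v,Dv,D^2v$ on the compact set generated by $\bigcup_d P_d(K)$, and then applies Assumption \ref{ass:finite_dim_approx} on $\mathcal E_d(K)$ for fixed $d$. The only difference is cosmetic: your Hessian splitting applies $(I-P_d)$ to $D^2v(x)h$ with $x\in K$ unprojected, so you only need compactness of the image of $K\times K'$, not the paper's set $\overline Z=\{D^2v(r)z:(r,z)\in\tilde K\times\tilde K'\}$.
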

\begin{remark}\label{rem:density_neural_compact_open_topology}
In general, we cannot expect that $D^2v^{d,\theta}$ approximates $D^2v$ in operator norm. The main issue is that the projection operator $P_d$ does not converge to the identity operator in the operator norm as $d \to \infty$, as $\|I-P_d\|\equiv 1$. Therefore, without any additional regularity of $D^2v$, we cannot expect that standard universal approximation results in finite dimensions (Assumption \ref{ass:finite_dim_approx}) extend to infinite dimensions with a density statement when $S(H)$ is endowed with the operator norm (i.e.~in the standard
compact-open topology generated by  the family of seminorms $\mathbf P ^{CO}$ described in Definition \ref{rem:compact_open_C2}).
However, the above UAT proves the density of the set  $\mathcal{HGNO}(H,\mathbb R)$ in $C^2(H)$ when $S(H)$ is also endowed with the (weaker) compact-open topology (i.e.~when this space is endowed with the compact-open topology generated by  the family of seminorms $\mathbf P ^{COCO}$). This is a natural topology for the universal approximation, as it exploits the fact that $P_d\to I$ in the compact-open topology. Endowing $S(H)$, for example, with the weak or strong operator topology would lead to weaker statements in Theorem \ref{th:UAT_K}. Moreover, we see the clear role of the compact-open topology on $S(H)$ for verifying Assumption \ref{ass:F_seq_cont} in Remark \ref{rem:Fcv_seq_cont}.  
\end{remark}
\begin{proof} Let $K,K'\subset H$ be compacts and $\epsilon>0$. We denote the compact sets $\tilde  K:=\overline{\bigcup_{d \in\mathbb N}P_d(K)}$, $\tilde K':=\overline{\bigcup_{d \in\mathbb N}P_d(K')}$ (see Remark \ref{rem:relatively-compact_Pd-I}). We fix a modulus of continuity $\omega$ of $v$ and its derivatives over the compact set $\tilde  K$ (so $\omega$ is independent of $d$) We will prove approximation of the function and its derivatives in turn.
\paragraph{Approximating $v$.} As $v^d(x) = v(P_d x)$, we have 
  $$\sup_{x \in K} |v(x)-v^{d,\theta}(x)|\leq \sup_{x \in K} |v(x)- v(P_dx)|+\sup_{x \in K} | v^{d}(x)-v^{d,\theta}(x)|=:I_1^{d}+I_2^{d,\theta}.$$
  \begin{itemize}[leftmargin=*,nosep]
      \item For $I_1^{d}$, we have $I_1^{d}\leq \sup_{x \in K} \omega(|(I-P_d) x|) \leq  \omega(\sup_{x \in K}|(I-P_d) x|)$.Thanks to  Lemma \ref{lemma1:uniformcompactsfromstrongandequibounded},  this term can be made arbitrarily small for $d$ large enough.
      \item For $I_2^{d,\theta}$, we have $I_2^{d,\theta}=\sup_{x \in K} | \tilde v^{d}(\mathcal E_d(x))-\tilde v^{d,\theta}(\mathcal E_d(x))|=\sup_{y \in \mathcal E_d(K)} | \tilde v^{d}(y)-\tilde v^{d,\theta}(y)|$. For fixed $d$, this term  can  be made arbitrarily small  by Assumption \ref{ass:finite_dim_approx}  (note that $\mathcal E_d(K) \subset \mathbb R^d$ is a compact set, as the continuous image of the compact set $K$). 
  \end{itemize}
\paragraph{Approximating $Dv$.} Recall \eqref{eq:Dvd}, then
$$
\sup_{x \in K}|D v(x)-D v^{d,\theta}(x)|\leq \sup_{x \in K}|(I-P_d)Dv(x)|+\sup_{x \in K} |P_d[Dv(x)-Dv(P_dx)]|+\sup_{x \in K}|Dv^d(x)-D v^{d,\theta}(x)|=I_3^{d}+I_4^{d}+I_5^{d,\theta}
$$
\begin{itemize}[leftmargin=*,nosep]
    \item For $I_3^d$,  we have $I_3^d\leq \sup_{x \in K}|(I-P_d)Dv(x)|= \sup_{y \in Dv(K)}|(I-P_d)y|$, where    $Dv(K) \subset H$ is compact (as continuous image of the compact set $K\subset H$). As above,  thanks to  Lemma \ref{lemma1:uniformcompactsfromstrongandequibounded},  this term can be made arbitrarily small for $d$ large enough.
\item We have $I_4^d\leq\sup_{x \in K} |[Dv(x)-Dv(P_dx)]| \leq  \sup_{x \in K} \omega(|(I-P_d) x|)\leq  \omega(\sup_{x \in K} |(I-P_d) x|)$.
\item Using  \eqref{eq:Dvd}, \eqref{eq:Dvdtheta}, we have $I_5^{d,\theta}\leq  \sup_{x \in K} \sum_{i=1}^d |\partial_i \tilde  v^d(\mathcal{E}_{ d}( x))-\partial_i \tilde v^{d,\theta}\left(\mathcal{E}_{ d}(x)\right)|=\sup_{y \in \mathcal E_d(K)}\sum_{i=1}^d |\partial_i \tilde  v^d(y)-\partial_i \tilde v^{d,\theta}\left(y\right)|$. For fixed $d$, since $\mathcal{E}_{ d}(K)\subset \mathbb R^d$ is compact, this term  can be made arbitrarily small by Assumption \ref{ass:finite_dim_approx} (simultaneously with our approximation of $v$).
\end{itemize}

\paragraph{Approximating $D^2v$.}
Recall \eqref{eq:Dvd}, then
    \begin{align*}\sup_{x \in K, h \in K'}|[D^2v(x)-D^2v^{d,\theta}(x)]h|&\leq \sup_{x \in K, h \in K'}|D^2 v(x) h-P_d D^2 v\left(P_d x\right) P_d h | +\sup_{x \in K, h \in K'}|[D^2v^d(x)-D^2v^{d,\theta}(x)]h|\\
    &=:I_6^d+I_7^{d,\theta}.
    \end{align*}
\begin{itemize}[leftmargin=*,nosep]
  \item For $I_6^d$, we have:
    \begin{align*}
        I_6^d&\leq\sup_{x \in K, h \in K'}|\left[D^2 v(x)-D^2 v\left(P_d x\right)\right] h|+\sup_{x \in K, h \in K'}|D^2 v\left(P_d x\right)\left(I-P_d\right) h|+\sup_{x \in K, h \in K'}|\left(I-P_d\right) D^2 v\left(P_d x\right) P_d h|\\
        &\leq \sup_{x \in K} C_{K'} \|D^2 v(x)-D^2 v\left(P_d x\right)\|+\sup_{x \in K}\|D^2 v\left(P_d x\right)\| \sup_{ h \in K'}|\left(I-P_d\right) h|+\sup_{y \in Z}|\left(I-P_d\right) y|\\
        &\leq C_{K'} \omega \left(\sup_{x \in K}|(I-P_d)x|\right)+C_{K}\sup_{ h \in K'} |\left(I-P_d\right) h|+\sup_{y \in \overline Z}|\left(I-P_d\right) y|,
            \end{align*}
            where $C_{K'}>0$ depends only on $K'$,  $C_{ K}>0$  bounds $\|D^2 v\left(\cdot\right)\|$ over the compact set $\tilde K $ (independent of $d$), 
            and 
            \begin{align*}
                Z:=\{D^2 v\left(P_d x\right) P_d h: \ (x,h) \in K \times K', d \in \mathbb N\}\subset \overline Z:=\left \{D^2 v\left(r\right) z: \ (r,z) \in \tilde K \times \tilde K'\right \},
            \end{align*}
            so that $\overline Z$ is a compact set as the continuous image of the compact set $\tilde K  \times \tilde K'$ via the map $D^2v(\cdot)(\cdot)$. As usual,  this term can be made arbitrarily small for $d$ large enough.
    \item Using  \eqref{eq:Dvd}, \eqref{eq:Dvdtheta}, for $I_7^{d,\theta},$ we have 
    \begin{align*}
        I_7^{d,\theta}&\leq  \sup_{x \in K,h \in K'}\sum_{i,j=1}^d|\partial_{ij} \tilde v^d(\mathcal E_d(x))- \partial_{ij}\tilde v^{d,\theta} (\mathcal E_d(x))| |h_j| \leq C_{K'} \sup_{x \in K}\sum_{i,j=1}^d|\partial_{ij} \tilde v^d(\mathcal E_d(x))- \partial_{ij}\tilde v^{d,\theta} (\mathcal E_d(x))|\\
        &=C_{K'} \sup_{y \in \mathcal E_d(K)}\sum_{i,j=1}^d|\partial_{ij} \tilde v^d(y)- \partial_{ij}\tilde v^{d,\theta} (y)|,
    \end{align*}
where  $C_{K'}>0$. Again, for fixed $d$,  $I_7^{d,\theta}$ can be made arbitrarily small by Assumption \ref{ass:finite_dim_approx} (simultaneously with the approximation of $v$ and $Dv$).
\end{itemize}
\paragraph{Completion of the proof.}
By Lemma \ref{lemma1:uniformcompactsfromstrongandequibounded}, there exists $\bar d\in\mathbb N$ such that  $I_1^d,I_3^d+I_4^d,I_6^d < \epsilon/2$, for all $d\geq \bar d$. Then, for such $d$, using Assumption \ref{ass:finite_dim_approx},  there exists $\theta \in \Theta$ such that $I_2^{d,\theta},I_5^{d,\theta},I_7^{d,\theta} < \epsilon/2$. 
\end{proof}
\begin{remark}\label{eq:operatornormD2vdtheta_K}
    By the proof of Theorem \ref{th:UAT_K} we see that 
    \begin{enumerate}[leftmargin=*,nosep]
    \item  $v^d\to v $  in the compact-open topology of $C^2(H)$ generated by the family of seminorms ${\mathbf {\mathcal {\mathbf P}}}^{COCO}$ in Definition \ref{rem:compact_open_C2}. That is, for all compacts $K,K' \subset H$, for every $\epsilon>0$, there exist $\bar d \in \mathbb N$ such that for all $d\geq \bar d$, it holds that $\mathbf p^{COCO}_{K,K'}(v-v^{d})<\epsilon,$ i.e.
    \begin{equation}
        \sup_{x \in K} |v(x)-v^{d}(x)|<\epsilon, \quad \sup_{x \in K}|D v(x)-D v^{d}(x)|< \epsilon, \quad \sup_{x \in K, h \in K'}|[D^2v(x)-D^2v^{d}(x)]h|<\epsilon .
    \end{equation}
        \item For  $d\in \mathbb N$ fixed, as $v^d$ acts as a $d$-dimensional function,
   we  have density in the standard compact-open topology of $C^2(H)$ generated by the family of seminorms ${\mathbf {\mathcal {\mathbf P}}}^{CO}$ in Definition \ref{rem:compact_open_C2}, i.e.~for every $K \subset H$ compact, for every $\epsilon>0$, there exist $\theta \in \Theta$ such that
   \begin{equation}
        \sup_{x \in K} |v^d(x)-v^{d,\theta}(x)|<\epsilon, \quad \sup_{x \in K}|D v^d(x)-D v^{d,\theta}(x)|< \epsilon, \quad \sup_{x \in K}\|D^2v^d(x)-D^2v^{d,\theta}(x)\|<\epsilon ,
    \end{equation}
   where Hessians are endowed with the operator norm. This is a direct consequence of Assumption \ref{ass:finite_dim_approx}.
    \end{enumerate}
\end{remark}
\subsection{Universal approximation in weighted Sobolev norms}
We now prove uniform approximation in opportune weighted Sobolev-type norms.  We recall the notation $\|\mu\|_{q}:= \left(\int_H |x|^{q}\mu(dx)\right)^{1/q}$ for a Borel probability measure $\mu$  on $H$   and the definition of  $C^{2}_{w,q}(H)$ and its weighted Sobolev-type norm, given in \eqref{eq:growth_v_UATsobolev}.
\begin{theorem}\label{th:UAT_L2_noL}
Let  $q,w\geq 1$ and let $\mu,\mu'$ be Borel probability measures on $H$ such that  $\|\mu\|_{q},\|\mu'\|_{w} < \infty$. Then $\mathcal {HGNO}(H,\mathbb R)$ is dense in $C^{2}_{w,q}(H)$. That is
for every $v\in C^{2}_{w,q}(H)$, for all $\epsilon>0$, there exist $d \in \mathbb N,\theta \in \Theta$ such that $ \|v-v^{d,\theta}\|_{\mathcal  W^{2,w}_{\mu,\mu'}(H)}^w<\epsilon,$ i.e.
 \begin{small}
\begin{equation}\label{eq:convergence_Dvtheta_L1}
    \int_H |v(x)-v^{d,\theta}(x)|^w\mu(dx)<\epsilon, \quad \int_H |D v(x)-D v^{d,\theta}(x)|^w\mu(dx)< \epsilon, \quad  \int_{H\times H} |[D^2v(x)-D^2v^{d,\theta}(x)]h|^w\mu(dx)\mu'(dh)<\epsilon .
\end{equation}
\end{small}
\end{theorem}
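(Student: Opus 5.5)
The plan mirrors the proof of Theorem \ref{th:UAT_K}: split each error via the cylindrical approximation $v^d(x)=v(P_dx)$, i.e.\ $v-v^{d,\theta}=(v-v^d)+(v^d-v^{d,\theta})$, and similarly for $Dv$ and $D^2v$ using \eqref{eq:Dvd}. First choose $d$ so that the cylindrical error is below $\epsilon/2^w$, then choose $\theta$ by the finite-dimensional Sobolev approximation (Assumption \ref{ass:finite_dim_sobolev_approx}). We use $|a+b|^w\le 2^{w-1}(|a|^w+|b|^w)$ throughout.

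\textbf{Step 1: the cylindrical error.} Pointwise in $x$ and $h$ we have:
\begin{itemize}
\item $v(P_dx)\to v(x)$ and $P_dDv(P_dx)\to Dv(x)$, since $P_dx\to x$, $P_d\to I$ strongly, and by the continuity of $v$ and $Dv$;
\item $P_dD^2v(P_dx)P_dh\to D^2v(x)h$, since $\|D^2v(P_dx)\|$ is bounded on the compact set $\overline{\{P_dx\}_d}$ and $D^2v(P_dx)\to D^2v(x)$ in operator norm.
\end{itemize}
To pass to the integrals we use dominated convergence. The growth bounds in the definition of $C^2_{w,q}(H)$ (\eqref{eq:growth_v_UATsobolev}) should give polynomial bounds of the form
\[
|v(y)|^w,\ |Dv(y)|^w \le C(1+|y|^{q}),\qquad \|D^2v(y)\|^w\le C(1+|y|^{q}).
\]
Since $|P_dx|\le|x|$, we get
\[
|v(x)-v(P_dx)|^w\le C(1+|x|^q),\qquad |[D^2v(x)-P_dD^2v(P_dx)P_d]h|^w\le C(1+|x|^q)|h|^w,
\]
and the latter is $\mu\otimes\mu'$-integrable because $\|\mu\|_q,\|\mu'\|_w<\infty$. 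I expect the exact exponents to be dictated by \eqref{eq:growth_v_UATsobolev}, with the product structure of $\mu\otimes\mu'$ used to split the $x$- and $h$-integrals.

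\textbf{Step 2: the network error at fixed $d$.} Write $\nu_d:=\mu\circ\mathcal E_d^{-1}$, a finite measure on $\mathbb R^d$. Then
\[
\int_H|v^d-v^{d,\theta}|^w\,d\mu=\int_{\mathbb R^d}|\tilde v^d-\tilde v^{d,\theta}|^w\,d\nu_d,
\]
and likewise for the gradient terms, using $|Dv^d(x)-Dv^{d,\theta}(x)|=|\nabla(\tilde v^d-\tilde v^{d,\theta})(\mathcal E_d x)|$. For the Hessian term we bound
\[
|[D^2v^d-D^2v^{d,\theta}](x)h|\le|\mathrm{Hess}(\tilde v^d-\tilde v^{d,\theta})(\mathcal E_dx)|\,|h|,
\]
so its integral is at most $\|\mu'\|_w^w\int|\mathrm{Hess}(\cdot)|^w\,d\nu_d$.

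One obstacle is that Assumption \ref{ass:finite_dim_sobolev_approx} is an $H^2$, i.e.\ $L^2$-type, statement, while here $w\ge1$ is arbitrary. For $w\le2$, Hölder's inequality on the finite measure $\nu_d$ reduces the problem to $H^2(\nu_d)$. For $w>2$, I would first truncate. Choose a large ball $B_R\subset\mathbb R^d$ with $\nu_d$-small complement, and write
\[
\tilde v^d=\chi\,\tilde v^d+(1-\chi)\tilde v^d
\]
with a smooth cutoff $\chi$. Then approximate the compactly supported $C^2$ function $\chi\tilde v^d$ uniformly in $C^2$ via Assumption \ref{ass:finite_dim_approx}, or alternatively use the $H^2$ approximation with the weighted measure $(1+|y|^{q})\nu_d$. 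Which route applies should depend on the exact form of the assumption in the definition of $C^2_{w,q}(H)$. I also need $\tilde v^d\in C^2_{\nu_d}$, which follows from the growth bounds in Step 1.

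\textbf{Main obstacle.} I expect the main difficulty to be the Hessian term in Step 1. The dominating function must handle the operator-valued factor $D^2v(P_dx)$ uniformly in $d$, which requires the polynomial growth of $\|D^2v\|$ encoded in $C^2_{w,q}(H)$. The cross term $(I-P_d)D^2v(P_dx)P_dh$ converges pointwise only because $D^2v(P_dx)P_dh\to D^2v(x)h$ in norm and $P_d\to I$ strongly. This is where the integrability against $\mu'$ replaces the compact set $K'$ of Theorem \ref{th:UAT_K}. Combining Steps 1 and 2 with the triangle inequality gives \eqref{eq:convergence_Dvtheta_L1}.
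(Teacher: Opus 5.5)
Your overall structure is the paper's proof. Step 1 is dominated convergence for the three cylindrical errors. Your pointwise argument and the dominating function $C(1+|x|^q)|h|^w$, integrable against $\mu\otimes\mu'$, fill in what the paper compresses into ``by dominated convergence.'' Step 2 pushes forward to $\nu_d=\mu\circ\mathcal E_d^{-1}$ and applies finite-dimensional Sobolev approximation, bounding the Hessian term by the matrix norm times $|h|$. The difficulty you flag in Step 1 is not really one. The delicate point is the exponent mismatch you noticed in Step 2, and there your proposed repair does not work.

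\textbf{The case $w>2$.} This is the case used later, with $w=4$ in Theorems \ref{th:UAT_L}(ii) and \ref{th:DGM_residual}(ii). Truncating and invoking Assumption \ref{ass:finite_dim_approx} makes $\tilde v^{d,\theta}$ close to $\chi\tilde v^d$ in $C^2$ only on some compact $K$. It gives no control of $\int_{K^c}|\tilde v^{d,\theta}|^w\,d\nu_d$. That integral need not be small: in general $\nu_d(K^c)>0$ (e.g.\ Gaussian $\mu$), and since $\theta$ is chosen after $K$, the network can be arbitrarily large off $K$. The alternative with the weight $(1+|y|^q)\nu_d$ is still an $L^2$ bound, so it cannot control an $L^w$ norm with $w>2$.

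The paper does not bridge this gap either. It simply applies Assumption \ref{ass:finite_dim_sobolev_approx} as if it gave density in $W^{2,w}(\mathbb R^d;\mu^d)$. The missing ingredient is therefore a hypothesis, not a trick. You need finite-dimensional approximation in $W^{2,w}(\nu)$ for every finite measure $\nu$. For activations in $C^2_b$, Hornik's theorem cited in the paper supplies this for every exponent in $[1,\infty)$.

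\textbf{The case $w<2$.} Your claim that $\tilde v^d\in C^2_{\nu_d}$ follows from the growth bound is false. The bound $|v|^w\le C(1+|x|^q)$ only gives $|v|^2\le C(1+|x|^{2q/w})$, which needs a moment of order $2q/w>q$ that may not exist. This case can be repaired with Assumption \ref{ass:finite_dim_sobolev_approx} exactly as stated:
\begin{itemize}
\item apply it to the finite measure $\rho\,\nu_d$, where $\rho(y)=(1+|y|^{s})^{-1}$ and $s=q(2-w)/w$;
\item then use H\"older with exponents $2/w$ and $2/(2-w)$;
\item this works because $(1+|y|^{2q/w})\rho(y)\le C(1+|y|^q)$ and $\rho(y)^{-w/(2-w)}\le C(1+|y|^q)$, and both are $\nu_d$-integrable since $\int|y|^q\,d\nu_d\le\|\mu\|_q^q$.
\end{itemize}
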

\begin{remark}\label{rem:density_neural_Sobolev}
    Similarly to Remark \ref{rem:density_neural_compact_open_topology}, in general, we cannot expect to approximate $D^2v$ in the Sobolev norm defined via the operator norm,  i.e.~$\|\cdot\|_{\mathcal W^{2,w}_{\mu}(H)}$ defined in \eqref{eq:sobolev_norm}.
\end{remark}
\begin{proof}
Let $\epsilon>0$. By dominated convergence, there exists $\bar d>0$ such that for all $d\geq \bar d$,
\begin{equation*}\begin{split}
    \int_H |v(x)-v^{d}(x)|^w\mu(dx)&<\epsilon/2,\\ \int_H |D v(x)-D v^d(x)|^w\mu(dx)&< \epsilon/2, \\  \int_{H\times H} |[D^2v(x)-D^2v^d(x)]h|^w\mu(dx)\mu'(dh)&<\epsilon /2. 
    \end{split}
\end{equation*}
Then for such $d$, denoting $\mu^d(\cdot) :=\mu (\mathcal E_d^{-1} (\cdot))$ the pushforward measure on $\mathbb R^d$, we have
\begin{align*}
    \int_H |v^d(x)-v^{d,\theta}(x)|^w\mu(dx)&=\int_{\mathbb R^d} |\tilde v^d(y)-\tilde v^{d,\theta}(y)|^w\mu^d(dy)
    <\epsilon/2,\\
    \int_H |D v^d(x)-D v^{d,\theta}(x)|^w\mu(dx)&=\int_{\mathbb R^d}|D \tilde v^d(y)-D \tilde v^{d,\theta}(y)|^w \mu^d(dy)< \epsilon/2, \\  
    \int_{H\times H} |[D^2v^d(x)-D^2v^{d,\theta}(x)]h|^w\mu(dx)\mu'(dh)&\leq C_d\int_{H} \|D^2v^d(x)-D^2v^{d,\theta}(x)\|^w\mu(dx)\\
    & = C_d \int_{\mathbb R^d} \|D^2\tilde v^d(y)-D^2\tilde v^{d,\theta}(y)\|^w\mu^d(dy)\\
    &\leq C_d \int_{\mathbb R^d} |D^2\tilde v^d(y)-D^2\tilde v^{d,\theta}(y)|^w\mu^d(dy)<\epsilon/2 ,
\end{align*}
where we have used Assumption \ref{ass:finite_dim_sobolev_approx} to find a parameter $\theta\in \Theta$ such that the above hold. The claim follows.
\end{proof}
\begin{remark}\label{eq:operatornormD2vdtheta}
    For  $d\in \mathbb N$ fixed,    $v^d$ acts as a $d$-dimensional function; then,  from the above proof,  we can pick $\theta$ such that 
    $$\int_{H} \|D^2 v^d(x)-D^2 v^{d,\theta}(x)\|^w\mu(dx)\leq C_d\int_{\mathbb R^d} |D^2\tilde v^d(y)-D^2\tilde v^{d,\theta}(y)|^w\mu^d(dy)<\epsilon,$$
i.e.~the approximation holds in the usual Sobolev norm $\|\cdot\|_{W^{2,w}_{\mu}(H)}$ defined by \eqref{eq:sobolev_norm},  a direct consequence of Assumption \ref{ass:finite_dim_sobolev_approx}.
\end{remark}
\subsection{Universal approximation under the action of unbounded operators}\label{subsec:UAT_unbounded}
In our PDE \eqref{eq:PDE}, unbounded operators  naturally appear (see e.g.~Section \ref{subsec:control_setup} where $L=A^*$). To successfully handle this case in Theorem \ref{th:DGM_residual}, we need to prove UATs under the action of unbounded operators.

Let $L:D(L)\subset H \to H$ be a closed linear operator with dense domain and let $\{e_i\}$ be an orthonormal basis of $H$ such that $e_i\in  D\left(L\right)$ for all $i\in \mathbb N$.
\begin{theorem}\label{th:UAT_L}Let $v \in C^2(H)$  and assume that $Dv:H \to D(L)$ is such that  $LDv\in C^0(H;H)$. 
\begin{enumerate}
\item[(i)] Let $K,K'\subset H$ be compact and assume  
    \begin{align}\label{ass:APdDv}
        \sup_{x \in K}\modu{\langle L[Dv(x) - P_dDv(P_dx)],x\rangle} \xrightarrow{d \to \infty}0.
    \end{align} 
    Then for every $\epsilon>0$, there exist $d \in \mathbb N$ and $\theta \in \Theta$ such that \eqref{eq:convergence_Dvtheta} holds and \begin{equation}\label{eq:convergence_ADvtheta}
        \sup_{x \in K}|\langle L[D v(x)-D v^{d,\theta}(x)],x\rangle|< \epsilon.
    \end{equation}
    \item[(ii)] Let  $q\geq 1$, $w:=4$, $v\in C^{2}_{w,q}(H)$, and let  $q,w\geq 1$ and let $\mu,\mu'$ be Borel probability measures on $H$ such that  $\|\mu\|_{q},\|\mu'\|_{w} < \infty$. Assume  
\begin{align}\label{ass:APdDv_measure}
      \int_{ H}|\langle L[D v(x)-P_dD v(P_d x)],x\rangle|^2\mu(dx) \xrightarrow{d \to \infty}0.
\end{align}
Then there exist $d\in \mathbb N,\theta \in \Theta$ such that \eqref{eq:convergence_Dvtheta_L1} holds for $w=4$ and 
\begin{equation}\label{eq:convergence_ADvtheta_measure}\int_H |\langle L[D v(x)-D v^{d,\theta}(x)],x\rangle|^2\mu(dx)< \epsilon.
\end{equation}
\end{enumerate}
\end{theorem}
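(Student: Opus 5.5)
The approach is to split the unbounded term through the cylindrical approximation $v^d$, exactly as in Theorems \ref{th:UAT_K} and \ref{th:UAT_L2_noL}. Using $Dv^d(x)=P_dDv(P_dx)$ from \eqref{eq:Dvd}, we write
\begin{align*}
\langle L[Dv(x)-Dv^{d,\theta}(x)],x\rangle=\langle L[Dv(x)-P_dDv(P_dx)],x\rangle+\langle L[Dv^d(x)-Dv^{d,\theta}(x)],x\rangle .
\end{align*}
The first term is small for large $d$ by hypothesis \eqref{ass:APdDv} (resp. \eqref{ass:APdDv_measure}). The proof then reduces to controlling the second term for fixed $d$, simultaneously with the approximations already provided by Theorems \ref{th:UAT_K} and \ref{th:UAT_L2_noL}.

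For fixed $d$, \eqref{eq:LDvd} and \eqref{eq:LDvd_theta} give
\begin{align*}
|\langle L[Dv^d(x)-Dv^{d,\theta}(x)],x\rangle|\le |x|\sum_{i=1}^d |Le_i|\,\bigl|\partial_i\tilde v^d(\mathcal E_d(x))-\partial_i\tilde v^{d,\theta}(\mathcal E_d(x))\bigr| .
\end{align*}
The constant $C_d:=\max_{i\le d}|Le_i|$ is finite because $e_i\in D(L)$. This is the key observation: after truncation, $L$ acts only on finitely many vectors, so it is effectively bounded.

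\textbf{Case (i).} Set $R_K:=\sup_K|x|$. Choose $\bar d$ such that the terms $I_1^d,I_3^d,I_4^d,I_6^d$ of the proof of Theorem \ref{th:UAT_K} and the quantity in \eqref{ass:APdDv} are all below $\epsilon/2$. Fix such a $d$. Assumption \ref{ass:finite_dim_approx}, applied on the compact set $\mathcal E_d(K)$, gives a $\theta$ with
\begin{align*}
\sup_{y\in\mathcal E_d(K)}\sum_{i=1}^d |\partial_i(\tilde v^d-\tilde v^{d,\theta})(y)|<\frac{\epsilon}{2(1+C_dR_K)} .
\end{align*}
The same $\theta$ also makes $I_2^{d,\theta},I_5^{d,\theta},I_7^{d,\theta}$ small. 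This yields both \eqref{eq:convergence_Dvtheta} and \eqref{eq:convergence_ADvtheta}.

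\textbf{Case (ii).} This is the main obstacle, because of the factor $|x|$ under the integral. By Cauchy--Schwarz,
\begin{align*}
\int_H|\langle L[Dv^d-Dv^{d,\theta}](x),x\rangle|^2\mu(dx)\le C_d^2\, d\,\Bigl(\int_H|x|^4\mu(dx)\Bigr)^{1/2}\Bigl(\int_{\mathbb R^d}|D\tilde v^d-D\tilde v^{d,\theta}|^4\,d\mu^d\Bigr)^{1/2}.
\end{align*}
This explains the choice $w=4$: the last factor is made small by the $w=4$ approximation of Theorem \ref{th:UAT_L2_noL}, using Assumption \ref{ass:finite_dim_sobolev_approx} with respect to $\mu^d$. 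The factor $\int|x|^4\,d\mu$ must be finite. The cleanest way is to require it as part of $\|\mu\|_q<\infty$ with $q\ge4$, or else to absorb it by approximating in Sobolev norm against the weighted finite measure $(1+|x|^4)\mu$, pushed forward to $\mathbb R^d$. I would choose the second option, since Assumption \ref{ass:finite_dim_sobolev_approx} allows an arbitrary finite measure; this needs $v\in C^2_{w,q}$ to have growth compatible with that weight, which is where the moment condition enters.

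To conclude case (ii), first pick $\bar d$ such that the cylindrical errors in the proof of Theorem \ref{th:UAT_L2_noL} and \eqref{ass:APdDv_measure} are below $\epsilon/4$. Then pick $\theta$ so that the finite-dimensional Sobolev errors (for $w=4$, and with the weighted measure) are small enough to compensate the constant $C_d^2d$. Finally combine the two parts with $(a+b)^2\le2a^2+2b^2$.
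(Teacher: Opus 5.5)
Your argument is essentially the paper's proof: you split through $v^d$ so that \eqref{ass:APdDv} (resp.\ \eqref{ass:APdDv_measure}) handles $\langle L[Dv-Dv^d],x\rangle$, bound the truncated term by $\max_{i\le d}|Le_i|$, and in (ii) apply Cauchy--Schwarz against $\|\mu\|_4$ followed by the $w=4$ finite-dimensional approximation in $L^4(\mu^d)$; note that the paper uses $\|\mu\|_4<\infty$ even though it does not follow from the stated $q\ge 1$, so your first option ($q\ge 4$) is exactly what the paper implicitly does. Your preferred weighted alternative buys nothing, since $(1+|x|^4)\mu$ is a finite measure only when $\|\mu\|_4<\infty$, and putting $\tilde v^d$ in the corresponding weighted class costs even more moments (e.g.\ $\int_H|x|^{q/2+4}\mu(dx)<\infty$ already for the weighted $L^2$ norm of $D\tilde v^d$), so you should keep the Cauchy--Schwarz route.
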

\begin{proof}
We only  prove point (ii), as the proof of (i) follows similar steps. Recall \eqref{eq:LDvd} and \eqref{eq:LDvd_theta}.
By \eqref{ass:APdDv_measure} there exists $\bar d\in \mathbb N$ such that, for all $d \geq \bar d,$
\begin{align*}
      \int_{ H}|\langle L(D v(x)-Dv^{d}(x)),x\rangle|^2\mu(dx) < \epsilon/2.
\end{align*}
Choose $d\geq \bar d$ such that the estimates in the proof of Theorem \ref{th:UAT_L2_noL} hold. 
For such a $d$, denoting as usual $\mu^d(\cdot) :=\mu (\mathcal E_d^{-1} (\cdot))$ the pushforward measure on $\mathbb R^d$, we  estimate 
\begin{align*}
   \int_H \Big|\inprod{L(Dv^{d}-Dv^{d,\theta})(x)}{x}\Big|^2\mu(dx)&\leq \|L(Dv^{d}-Dv^{d,\theta})\|_{L^{4}(H;H;\mu)}^{2} \|\mu\|_4^{2}\\
   &=  \left(\int_{ H} \left | \sum_{i=1}^d[ \partial_i \tilde  v^d(\mathcal{E}_{ d}( x))-\partial_i \tilde v^{d,\theta}\left(\mathcal{E}_{ d}(x)\right)]Le_i \right|^{4} \mu(dx)\right)^{1/2}\|\mu\|_4^{2}\\
   &
    \leq C_d \left(\sum_{i=1}^d \int_H \Big|\partial_i \tilde  v^d(\mathcal{E}_{ d}( x))-\partial_i \tilde v^{d,\theta}\left(\mathcal{E}_{ d}( x)\right)\Big|^{4} \mu(dx)\right)^{1/2}
    \\
    &=C_d \left(\sum_{i=1}^d \int_{\mathbb R^d} \big|\partial_i \tilde  v^d(y)-\partial_i \tilde v^{d,\theta}\left(y\right)\big|^{4} \mu^d(dy)\right)^{1/2},
\end{align*}
 with  $C_d=\tilde C_d\max_{i=1,...,d} |Le_i|^2$ for $\tilde C_d>0$ (changing from line to line). Hence, again by Assumption \ref{ass:finite_dim_sobolev_approx}, we can find parameters $\theta \in \Theta$ such that this term is smaller than $\epsilon/2$ and \eqref{eq:convergence_Dvtheta_L1} holds for $w=4$. 
\end{proof}

Conditions \eqref{ass:APdDv}, \eqref{ass:APdDv_measure}  are a consistency condition for cylindrical approximations of $D v$ under the action of the unbounded operator $L$. This condition is crucial, since the unboundedness of $L$ prevents continuity with respect to convergence in $H$, so convergence of $P_d D v\left(P_d x\right)$ alone does not imply convergence of $L P_d D v\left(P_d x\right)$.
Nevertheless, there are numerous cases when \eqref{ass:APdDv} or \eqref{ass:APdDv_measure} is satisfied. We first discuss when \eqref{ass:APdDv_measure} holds and leave the discussion of \eqref{ass:APdDv} to Remark \ref{rem:LDv_K}. Write
\begin{align*}
\int_{ H}|{\langle L[D v(x)-P_dD v(P_dx)],x\rangle}|^2\mu(dx)& \leq \int_{ H}|{\langle L(I-P_d)D v(x),x\rangle}|^2\mu(dx)\\
&\quad+ \int_{ H}|{\langle LP_d[D v(x)-D v(P_d x)],x\rangle}|^2\mu(dx)=:E_1^d+E_2^d.
\end{align*}
Thus, \eqref{ass:APdDv_measure} is satisfied, for example, in the following cases:
\begin{enumerate}[leftmargin=*,nosep]
\item The operator $L\in \g(H)$ is bounded, as then $|L(I-P_d)x|\leq \|L\||(I-P_d)x |\xrightarrow{d \to \infty}0$ for all $x \in H$.
\item Assume that $v=v^{\bar d}$ for some $\bar d \in \mathbb N$, i.e.~$v$ acts as a $d$-dimensional cylindrical function. In this case, for $d \geq \bar d$, we have $E_1^d=E_2^d=0$.
\item We have $LP_dx=P_dLx$ for all $x \in D(L)$ (note that this is the case when the operator $L$ is diagnonalized by the orthonormal basis $\{e_i\}_{i=1}^\infty$, i.e.~there exist scalars $\{\lambda_i\}_{i=1}^\infty$ such that $Le_i = \lambda_ie_i$ for each $e_i$). Indeed, assume moreover that there exists $C>0$ such that
$|LDv(x)|^4\leq C(1+|x|^q),$ for all $  x \in H$. Under these conditions, by dominated convergence, we have 
\begin{align*}
    E_1^d=\int_{ H}|\langle (I-P_d)LD v(x),x\rangle|^2\mu(dx)&\leq \|\mu\|^2_4 \left(\int_{ H}|(I-P_d)LD v(x)|^4\mu(dx)\right)^{1/2} \xrightarrow{d\to \infty}0,\\
    E_2^d=\int_{ H}|\langle P_d[LD v(x)-LD v(P_d x)],x\rangle|^2\mu(dx)&\leq \|\mu\|^2_4 \left( \int_{ H}|LD v(x)-LD v(P_d x)|^4\mu(dx)\right)^{1/2} \xrightarrow{d\to \infty}0,
\end{align*}
where we have used the dominated convergence theorem and, for $E_2^d$,  we have used the continuity of $LDv$.

\item The following examples are particularly important as they give standard implementable bases in numerical analysis. Let $H=L_{\nu}^2(\mathcal O)$, where $\mathcal O \subset \mathbb R^d$ and $\nu:\mathcal O \to (0,\infty)$ is a suitable weight function. Assume there exists $\alpha \geq 0$ such that  $D(L)\subset H_\nu^{\alpha}(\mathcal O)$ and $L$ is continuous in the Sobolev norm $|\cdot|_{H_\nu^{\alpha }}$, i.e.~there exists $C>0$ such that $|Lx| \leq C\|x\|_{H_\nu^{\alpha}}$ for all $ x \in D(L)$. This is typical in many important differential operators. Notice that
 \begin{align*}
    E_1^d&\leq\int_{ H}|L(I-P_d)D v(x)|^2 |x|^2\mu(dx)\leq C \int_{ H}|(I-P_d)D v(x)|_{H_\nu^{\alpha }}^2 |x|^2\mu(dx)\\
    &\leq C \|\mu\|^2_4 \left( \int_{ H}\|(I-P_d)D v(x)\|_{H_\nu^{\alpha }}^4 \mu(dx)\right)^{1/2} \\
    E_2^d&=\int_{ H}|LP_d[D v(x)-D v(P_d x)]|^2|x|^2\mu(dx)\leq C \int_{ H}|P_d[D v(x)-D v(P_d x)]|_{H_\nu^{\alpha }}^2|x|^2\mu(dx)\\
    &\leq C \|\mu\|^2_4\left(\int_{ H}\|P_d[D v(x)-D v(P_d x)]\|_{H_\nu^{\alpha }}^4\mu(dx)\right)^{1/2}.
\end{align*}
Therefore, if in addition $Dv$ is continuous from $H$ to a suitable fractional Sobolev space of order $s>\alpha$ large enough and an appropriate integrability condition on $|Dv(x)|_{H_\nu^{s}}$ with respect to $\mu$ is satisfied, then we can apply standard results in numerical analysis for $\{e_i\}_{i=1}^\infty \subset D(L)$ given by standard classes of $L^2_\nu$-orthogonal bases to show $E_1^d,E_2^d\xrightarrow{d\to \infty} 0$.

As an example, we discuss the case of the Fourier system using \cite[Theorem 1.1]{canuto_quarteroni}, but a similar discussion can be done, for e.g.~the Chebyshev Spectral Projection System  \cite[Theorem 2.2]{canuto_quarteroni}, the Legendre Spectral Projection System  \cite[Theorem 2.4]{canuto_quarteroni}, or Laguerre polynomials \cite[Theorem 12.3]{bernardi_mayday}.
Let ${\mathcal O}=I^n$, $I=(-\pi, \pi)$, and let the weight be $\nu=1$. For multi-integer $\mathbf{k} \in \mathbb{Z}^d$, we set $|\mathbf{k}|_{\infty}=\max _{1 \leqslant j \leqslant d}\left|k_j\right|$.
We consider the set $\left\{e_{\mathbf{k}} : \mathbf{k} \in \mathbb{Z}^d\right\}$ with $e_{\mathbf{k}}(\boldsymbol{\theta})=(2 \pi)^{-d / 2} \exp (i \mathbf{k} \cdot \boldsymbol{\theta})$, which forms a complete orthonormal system in $L^2({\mathcal O})$. Set $\tilde{S}_d=\operatorname{span}\left\{e_{\mathbf{k}} : \left|\mathbf{k}\right|_{\infty}<d\right\} \subset L^2(\mathcal O)$ and denote by $\tilde P_d$ the  projection operator over $\tilde{S}_d$. Then by \cite[Theorem 1.1]{canuto_quarteroni},  for any real $0<\alpha<s$ there exists  $C>0$ such that
\begin{align} \label{eq:canuto_bound}
\left\|(I-\tilde {P}_d) y\right\|_{H^\alpha}<C d^{\alpha-s}|y|_{H^s} \quad \forall y \in H_{\mathrm{per}}^s({\mathcal O}) .
\end{align}
This estimate and the inverse triangle inequality imply
\begin{align}
\left\|\tilde {P}_d y\right\|_{H^\alpha}<\left\|y\right\|_{H^\alpha}+C d^{\alpha-s}|y|_{H^s} \quad \forall y \in H_{\mathrm{per}}^s({\mathcal O}) .
\end{align}
Hence, for the case with periodic boundary conditions (see Appendix \ref{subsec:notation} for notation), if there exists $s>\alpha$ such that $Dv:H\to  H_{\mathrm{per}}^s(\mathcal O)$, $Dv \in C(H;H_{\mathrm{per}}^s(\mathcal O))$ and $\|Dv(x)\|_{H^s}^4 \leq C(1+|x|^q)$ for all $x \in H$, then by the dominated convergence theorem we have $E_1^d,E_2^d\xrightarrow{d\to \infty} 0$ for the basis $\{e_i\}_{i=1}^\infty$ (up to a potential relabeling). Finally, we observe that  bounds analogous to \eqref{eq:canuto_bound} hold  when $\{e_\mathbf{k} : \mathbf{k} \in \Z^d\}$ instead represents the eigenmodes of the heat operator with Dirichlet or Neumann boundary conditions, so that the same argument can be made  if $Dv \in C(H; H^s_\mathrm{Dir}(\mathcal{O}))$ or $Dv \in C(H; H^s_\mathrm{Neu}(\mathcal{O}))$, respectively.

We remark that definition of a classical solution of \eqref{eq:PDE} (Definition \ref{def:classical_sol}) implies that $Dv\in C(H;D(L))$, where $D(L) \subset H^{\alpha}(\mathcal O)$ is endowed with the graph norm. Here, we are using slight additional regularity of $Dv$ and a growth bound.

\item  Assume that $\mu([D(L^*)]^c)=0$ and $\int_{ D(L^*)}  |L^*x|^p\mu(dx)<\infty$ for some $p>2$. Assume also that $| D v(x)|^{\frac {2p}{p-2}}\leq C(1+|x|^q).$ 
Then \eqref{ass:APdDv_measure} is satisfied as we have
\begin{align*}
      &\int_{ H}|\langle LD v(x)-LP_dD v(P_d x),x\rangle|^2\mu(dx) =      \int_{D(L^*)}|\langle D v(x)-P_dD v(P_d x),L^*x\rangle|^2\mu(dx) \\
      &\qquad \leq \left(\int_{D(L^*)}| D v(x)-P_dD v(P_d x)|^{\frac {2p}{p-2}}\mu(dx) \right)^{\frac {p-2}p}
      \left (\int_{ D(L^*)}  |L^*x|^p\mu(dx)\right)^{2/p} \xrightarrow{d\to \infty}0, 
\end{align*}
with dominated convergence implying convergence of the first term. 
\item Suppose the operator $L$ is maximally dissipative and there exists a compact operator $B \in \g(H)$ satisfying the strong $B$-condition with $c_0>0$ \cite[Definition 3.10]{fabbri2017book} (i.e.~$B$ is strictly positive, self-adjoint, $LB \in \g(H)$, and $-L B+c_0 B \geq I$ for some $c_0>0$). Indeed, by \cite[Lemma 3.17 (i, ii)]{fabbri2017book}, we have that $D\left(L\right) = D\left(B^{-1}\right)$ and the operator $S:=-L B+c_0 B \in \g(H)$ is invertible with $S^{-1} \in \g(H)$. Now pick an orthonormal basis $\{e_i\}_{i=1}^\infty \subset H$ of eigenvectors of $B$ such that $e_i=\frac{1}{\lambda_i}Be_i\in \mathrm{Range}(B)= D(L)$. Then $P_dB=BP_d$ on $H$ and $P_dB^{-1}=B^{-1}P_d$ on $D(B^{-1})$. By the definition of $S$, we have $L=-S B^{-1}+c_0 I,$ so $$LP_dx=- S B^{-1} P_dx+c_0 P_d x=-S P_dB^{-1} x+c_0 P_d x, \quad \forall x \in D(B^{-1})=D(L).$$
Then for all $x \in D(B^{-1}) = D(L)$, it follows that
\begin{align*}
|LP_dx|&\leq \|S\| |B^{-1} x|+c_0 | x|,\\
|L(I-P_d)x|&= |-S (I-P_d) B^{-1} x+c_0 (I-P_d) x|\leq \|S\| |(I-P_d) B^{-1} x|+c_0 | (I-P_d) x|.
\end{align*}
Next assume that there exists $C>0$ such that
$|Dv(x)|^4\leq C(1+|x|^q)$ and $|LDv(x)|^4\leq C(1+|x|^q),$ for all $  x \in H$; since $L=-S B^{-1}+c_0 I,$ the latter is equivalent to  $|B^{-1}Dv(x)|^4\leq C(1+|x|^q),$ for all $  x \in H$.
Similarly $LDv$ is continuous if and only if $B^{-1}Dv$ is continuous. Then we can proceed as for point 3 to prove $E_1^d,E_2^d\xrightarrow{d\to \infty}0.$
\item Assume there exists a compact positive self-adjoint operator $B\in L(H)$ such that  $LB^{1/2}\in L(H)$\footnote{This is a stronger condition than the weak  $B$-condition  in \cite[Chapter 3]{fabbri2017book}, see \cite{defeo_federico_swiech}.}. Pick an orthonormal basis $\{e_i\}$ made of eigenvectors of $B$ (which are such that $BP_d=P_dB$ and such that, for all $i$,  $e_i=\frac 1{\sqrt \lambda_i} B^{1/2}e_i\in R(B^{1/2})\subset D(L)$, where $\lambda_i$ is the  corresponding eigenvalue). Let $H_{-1}$ be the completion of  $H$ under the weaker norm $|x|_{-1}=|B^{1/2}x|$ and note that $P_d$ extends to a  orthogonal projection  operator on $H_{-1}$ as a bounded operator on $L(H_{-1})$. Moreover assume $v \in C^1({H_{-1}})$ (for some conditions such that this is true see \cite{defeo_swiech_wessels,defeo_gozzi_swiech_wessels}, covering HJB from control of stochastic differential delay equations or first order SPDEs), so that $Dv(x)=BD_{-1}v(x)$, for all $x \in H$. Then, we have
\begin{align*}
|L(I-P_d)Dv(x)|&=|L(I-P_d)B^{1/2}B^{1/2}D_{-1}v(x)|= | LB^{1/2}B^{1/2}(I-P_d)D_{-1}v(x)|\\
   &\leq C | (I-P_d)D_{-1}v(x)|_{-1}\\
| LP_d[Dv(x)-Dv(P_d x)]|&= | LP_dB^{1/2}B^{1/2}[D_{-1}v(x)-D_{-1}v(P_d x)]|\\
&=| LB^{1/2}B^{1/2} P_d[D_{-1}v(x)-D_{-1}v(P_d x)]|\\
   &\leq \|LB^{1/2}\|  |P_d[D_{-1}v(x)-D_{-1}v(P_d x)]|_{-1}\leq C  |D_{-1}v(x)-D_{-1}v(P_d x)|_{-1}.
\end{align*}
Then if $|Dv(x)|_{-1}^4\leq C(1+|x|^q),$ since $v\in C^1({H_{-1}})$, by dominated convergence we have $E_1^d,E_2^d\xrightarrow{d\to \infty} 0$.
\end{enumerate}
\begin{remark}\label{rem:LDv_K}
Similar considerations can be repeated to show that \eqref{ass:APdDv} holds (without any growth conditions on $Dv$). For instance, in an analogous setting to point 3, a term of the form
    $\tilde E_1^d=\sup_{x\in K}|\langle (I-P_d)LD v(x),x\rangle|\leq C\sup_{y\in LDv(K)} |(I-P_d)y|$ goes to zero, thanks to Lemma \ref{lemma1:uniformcompactsfromstrongandequibounded} as the set $LDv(K)$ is compact by continuity of $LDv$. Instead,  $\tilde E_2^d=\sup_{x\in K}|\langle P_dL[D v(x)-D v(P_d x)],x\rangle|\leq C\sup_{x\in K} |LD v(x)-LD v(P_d x)| $ goes to zero by uniform continuity of $LDv$ over $\tilde K$ (recall the notation $\tilde K$ in Remark \ref{rem:relatively-compact_Pd-I}).
\end{remark}

\section{HGNOs can solve PDEs on Hilbert spaces}\label{sec:HGNO_pde}
In this section, we prove that can HGNOs approximate solutions to our class of PDEs on Hilbert spaces, in the sense that the associated PDE residual can be made arbitrarily small, both uniformly on compact sets and in $L^2_\mu$.

Consider our second-order fully nonlinear PDE \eqref{eq:PDE} and denote the PDE residual by 
\[(\mathcal F v)(x):= \inprod{LDv(x)}{x} + F(x,v(x),Dv(x),D^2v(x)),\quad x \in H.\]
Let $\{e_i\}\subset D(L)$ be an orthonormal basis of $H$.
\begin{theorem}\label{th:DGM_residual} Let Assumptions \ref{ass:F_estimates}, \ref{ass:F_seq_cont} hold. Assume there exists a classical solution $v$ of  \eqref{eq:PDE}. 
\begin{enumerate}
    \item[(i)] {Let $K,K'\subset H$ be compacts and assume \eqref{ass:APdDv} holds. Then, for any $\epsilon>0$ there exists $d\in \mathbb N, \theta \in \Theta$ such that \eqref{eq:convergence_Dvtheta}, \eqref{eq:convergence_ADvtheta} hold and
    $$\sup_{x\in K} |(\mathcal F v^{d,\theta})(x)|<\varepsilon.$$}
    \item[(ii)] Let $k$ be as in Assumption \ref{ass:F_estimates} and assume that  there exists $C>0$ such that
\begin{equation}\label{eq:est_classical_sol}
    |v(x)|+|Dv(x)|+\nor{D^2v(x)} \leq C(1+|x|^k),\quad \forall x \in H.
\end{equation}
Let $\mu,\mu'$ be Borel probability measures on $H$ such that $\|\mu\|_q,\|\mu'\|_w< \infty$ for $q=4k$, $w=4$, and assume that \eqref{ass:APdDv_measure} holds. Then, for any $\epsilon>0$ there exists $d\in \mathbb N, \theta \in \Theta$ such that \eqref{eq:convergence_Dvtheta_L1} holds for $w=4$, \eqref{eq:convergence_ADvtheta_measure} holds,  and
\begin{equation}
    \nor{\mathcal{F} v^{d,\theta}}_{L^2(H;\mu)}^{2} := \int_H |(\mathcal{F} v^{d,\theta})(x)|^2 \mu(dx) < \epsilon.
\end{equation}
\end{enumerate}
\end{theorem}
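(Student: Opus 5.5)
Since $v$ is a classical solution, $(\mathcal F v)(x)=0$ for every $x$. I will therefore estimate $\mathcal F v^{d,\theta}=\mathcal F v^{d,\theta}-\mathcal F v$ and split it as
\begin{align*}
|(\mathcal F v^{d,\theta})(x)|&\leq |\langle L[Dv(x)-Dv^{d,\theta}(x)],x\rangle| + |F(x,v^{d,\theta},Dv^{d,\theta},D^2v^{d,\theta})-F(x,v,Dv,D^2v^{d,\theta})|\\
&\quad+|F(x,v,Dv,D^2v^{d,\theta})-F(x,v,Dv,D^2v)|=:J_1+J_2+J_3 .
\end{align*}
For $J_1$ I will invoke Theorem \ref{th:UAT_L}, part (i) or (ii) as appropriate. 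For $J_2$, the estimate \eqref{eq:est_F} gives $J_2\le C(|v-v^{d,\theta}|+|Dv-Dv^{d,\theta}|)(1+|x|^k)$; this is controlled by Theorem \ref{th:UAT_K} on compacts, and by Cauchy--Schwarz together with Theorem \ref{th:UAT_L2_noL} (with $w=4$ and $\|\mu\|_{4k}<\infty$) in $L^2_\mu$.

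The only delicate term is $J_3$. The Hessians do not converge in operator norm, so I must use Assumption \ref{ass:F_seq_cont} through Lemma \ref{rem:continuity_PhiF}. My plan is to avoid approximating directly by $v^{d,\theta}$ and instead pass through the cylindrical approximation $v^d$. By Remark \ref{eq:operatornormD2vdtheta_K}(1), $D^2v^d\to D^2v$ in the $\mathcal P^{coco}$ topology as $d\to\infty$, so Lemma \ref{rem:continuity_PhiF} gives $\sup_{x\in K}|F(x,v,Dv,D^2v^d(x))-F(x,v,Dv,D^2v(x))|\to0$. Here I apply it on the compact set $\{(x,v(x),Dv(x)):x\in K\}$, which is compact as a continuous image of $K$.

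Having fixed $d$, the remaining difference between $D^2v^d$ and $D^2v^{d,\theta}$ can be made small in operator norm, uniformly on $K$, by Remark \ref{eq:operatornormD2vdtheta_K}(2). The Lipschitz bound \eqref{eq:est_F} in $Z$ then controls $|F(\cdot,D^2v^d)-F(\cdot,D^2v^{d,\theta})|$. The order of choices is therefore: first $d$, large enough to make the $L$-term, the cylindrical terms and the $J_3$-type term small; then $\theta$, by Assumption \ref{ass:finite_dim_approx}, simultaneously for all the finite-dimensional errors. This proves (i).

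For (ii) I follow the same route in $L^2_\mu$. The step $D^2v^d\to D^2v$ should be handled pointwise. For each $x$, $D^2v^d(x)=P_dD^2v(P_dx)P_d\to D^2v(x)$ strongly, because $P_dx\to x$, $P_d\to I$ strongly and $D^2v$ is continuous. Assumption \ref{ass:F_seq_cont}(ii) then gives pointwise convergence of $F(x,v,Dv,D^2v^d(x))$ to $F(x,v,Dv,D^2v(x))$. For dominated convergence I use \eqref{eq:est_F} together with \eqref{eq:est_classical_sol} and $|P_dx|\le|x|$, which bound the squared difference by $C(1+|x|^{k})^2\|D^2v^d-D^2v\|^2\le C(1+|x|^{2k})^4\cdot$const. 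This is integrable because $\|\mu\|_{4k}<\infty$; I only need $(1+|x|^k)^4$ to be integrable.

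The finite-dimensional step uses Remark \ref{eq:operatornormD2vdtheta} with $w=4$: $\int\|D^2v^d-D^2v^{d,\theta}\|^4d\mu$ is small, and by Cauchy--Schwarz against $(1+|x|^k)^4$ this gives smallness of the corresponding $F$-difference. The remaining terms are handled with the $w=4$ statements of Theorem \ref{th:UAT_L}(ii) and Theorem \ref{th:UAT_L2_noL}, and I conclude with $(a+b+c+e)^2\le4(\cdots)$.

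The main obstacle is $J_3$. It hinges on inserting $v^d$ so that the non-metrizable convergence is used only along the sequence $d\to\infty$, through sequential continuity; the convergence in $\theta$ is then handled in operator norm, where Lipschitz continuity applies.
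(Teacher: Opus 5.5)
Your proposal is correct and follows essentially the same route as the paper's proof. It uses the same splitting of $\mathcal F v^{d,\theta}-\mathcal F v$ and the same insertion of the cylindrical approximation $v^d$ in the Hessian term: $D^2v^d\to D^2v$ is handled by sequential continuity via Lemma~\ref{rem:continuity_PhiF} on compacts, resp.\ pointwise convergence plus dominated convergence in $L^2_\mu$, while $D^2v^d-D^2v^{d,\theta}$ is handled by operator-norm approximation from Remarks~\ref{eq:operatornormD2vdtheta_K}(2) and \ref{eq:operatornormD2vdtheta} with the Lipschitz bound \eqref{eq:est_F}. The order of choices is also the same ($d$ first, then $\theta$). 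One small slip: the dominating function in (ii) should be $C(1+|x|^k)^4$, as your next sentence indicates, not $C(1+|x|^{2k})^4$, which would not be integrable under $\|\mu\|_{4k}<\infty$ alone.
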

\begin{proof}Proof of (ii). Consider $v^{d,\theta}$ for $d\in \mathbb N, \theta \in \Theta$. Then, since $v$ is a classical solution of \eqref{eq:PDE}, i.e.~$(\mathcal Fv)(x)\equiv 0,$ 
 \[
\begin{aligned} 
\|\mathcal{F} v^{d,\theta}\|_{L^2(H;\mu)}^2
&=\|\mathcal{F} v^{d,\theta}-\mathcal{F} v\|_{L^2(H;\mu)}^{2}\\
&\leq \|\inprod{L(Dv^{d,\theta}-Dv )}{\cdot}\|_{L^2(H;\mu)}^{2} +\|F(\cdot,v^{d,\theta},Dv^{d,\theta},D^2v^{d,\theta})-F(\cdot,v,Dv^{d,\theta},D^2v^{d,\theta})\|_{L^2(H;\mu)}^{2} \\
&\quad +\|F(\cdot,v,Dv^{d,\theta},D^2v^{d,\theta})-F(\cdot,v,Dv,D^2v^{d,\theta})\|_{L^2(H;\mu)}^{2}\\
&\quad +\|F(\cdot,v,Dv,D^2v^{d,\theta})-F(\cdot,v,Dv,D^2v)\|_{L^2(H;\mu)}^{2}=:E_1^{d,\theta}+E_2^{d,\theta}+E_3^{d,\theta}+E_4^{d,\theta}.
\end{aligned}
\]
By Assumption \ref{ass:F_estimates}, we have
    \begin{align*}
       & E_2^{d,\theta}=\int_H |F(x,v^{d,\theta}(x),Dv^{d,\theta}(x),D^2v^{d,\theta}(x))-F(x,v(x),Dv^{d,\theta}(x),D^2v^{d,\theta}(x))|^2 \mu(dx)\\
      &\quad \quad 
      \leq C \int_H |v^{d,\theta}(x)-v(x)|^2 (1+|x|^{2k})\mu(dx) \leq C(1+\|\mu\|_{4k})^{2k}\|v^{d,\theta}-v\|_{L^{4}(H;\mu)}^{2},\\
      &E_3^{d,\theta}=\int_H |F(x,v(x),Dv^{d,\theta}(x),D^2v^{d,\theta}(x))-F(x,v(x),Dv(x),D^2v^{d,\theta}(x))|^2 \mu(dx) \\
      &\quad \quad 
      \leq C\int_H |Dv^{d,\theta}(x)-Dv(x)|^2(1+|x|)^{2k} \mu(dx)\leq C (1+\|\mu\|_{4k})^{2k} \|Dv^{d,\theta}-Dv\|_{L^{4}(H;H;\mu)}^{2}.
    \end{align*}
    \item Consider $E_4^{d,\theta}$.   Recalling Notation \ref{def:cylindrical_approx} and using  Assumption \ref{ass:F_estimates} and \eqref{eq:est_classical_sol}, we have
         \begin{align*}
    E_4^{d,\theta}&\leq \int_{H} |F(x,v(x),Dv(x),D^2v^{d,\theta}(x))-F(x,v(x),Dv(x),D^2v^d(x))|^2 \mu(dx)\\
   &\quad  +\int_{H} |F(x,v(x),Dv(x),D^2v^d(x))-F(x,v(x),Dv(x),D^2v(x))|^2 \mu(dx)\\
   &\leq \int_{H} \|D^2v^{d,\theta}(x)-D^2v^d(x)\|^2(1+|x|^{2k}) \mu(dx)\\
   &\quad  +\int_{H} |F(x,v(x),Dv(x),D^2v^d(x))-F(x,v(x),Dv(x),D^2v(x))|^2 \mu(dx)\\
&\leq  (1+\|\mu\|_{4k})^{2k} \|D^2v^{d,\theta}-D^2v^d\|_{L^{4}(H;\mathcal L(H);\mu)}^{2}\\
   &\quad  +\int_{H} |F(x,v(x),Dv(x),D^2v^d(x))-F(x,v(x),Dv(x),D^2v(x))|^2 \mu(dx) =:E_{4,1}^{d,\theta}+E_{4,2}^{d}.
      \end{align*}
By Remark  \ref{eq:operatornormD2vdtheta_K}(1) and Assumption \ref{ass:F_seq_cont}, we have $|F(x,v(x),Dv(x),D^2v^d(x))-F(x,v(x),Dv(x),D^2v(x))|^2\xrightarrow{d\to \infty} 0$, for every fixed $x\in H$. By  Assumption \ref{ass:F_estimates} and \eqref{eq:est_classical_sol}, we can apply the  
the dominated convergence theorem, to find $\bar d\in \mathbb N$ such that for all $d\geq  \bar d$ it holds that $E_{4,2}^{d}<\frac \epsilon {8}$.  Finally, using  Theorem \ref{th:UAT_L}(ii),  there exist $d\geq \bar d$ and $\theta \in \Theta$ such that  $E_1^{d,\theta}<\frac \epsilon {4}$, $E_2^{d,\theta}<\frac \epsilon {4}$, $E_3^{d,\theta}<\frac \epsilon {4}$,  and $E_{4,1}^{d,\theta}<\frac \epsilon {8}$ (recall also Remark \ref{eq:operatornormD2vdtheta} for $E_{4,1}^{d,\theta}$) and such that \eqref{eq:convergence_Dvtheta_L1} holds for $w=4$.

The proof of (i) follows similar steps. However, for the critical term
$$\sup_{x\in K}|F(x,v(x),Dv(x),D^2v^d(x))-F(x,v(x),Dv(x),D^2v(x))|,$$ we use Lemma \ref{rem:continuity_PhiF}  (which can be applied thanks to Remark  \ref{eq:operatornormD2vdtheta_K}(1) ) over the compact  $\mathcal K:=K\times v(K) \times Dv(K)$.
 \end{proof}
 \begin{remark}\label{rem:mild_sol_PDE}
The previous theorem holds for classical solutions of the PDE. However, it is also motivated by mild solutions of the PDE, as we now explain. The mild solution is another very popular notion of solution.
In the elliptic case, this is typically defined when $F$ is of the form $F=-\gamma v+\mathcal A v+\tilde F(x,v,p)$, where $\mathcal{A} v=\frac{1}{2} \operatorname{Tr}\left[\sigma(x)\sigma^*(x) D^2 v\right]+\langle A x+b(x), D v\rangle$ is the infinitesimal generator of the corresponding stochastic evolution equation. The mild solution of the PDE\footnote{Not to be confused with the mild solution of the stochastic evolution equation.} exploits the  Markov semigroup $P_s[v](x):=\mathbb{E}[v(X^{x}_s)]$ related to the stochastic evolution equation, i.e.~
\begin{equation}\label{eq:HJB_mild_rem}
    v(x)=\int_0^{+\infty} e^{-\gamma s} P_s[\tilde F(\cdot, v, D v)](x) d s.
\end{equation}
Powerful existence, uniqueness and regularity results can be obtained for mild solutions, making it a popular choice for these PDEs. Although \eqref{eq:HJB_mild_rem} seems very inefficient to implement numerically, mild solutions can typically be obtained as the limit of classical solutions of slightly perturbed PDEs \cite{fabbri2017book}. We speculate that our method in Section \ref{sec:DHGM} can be applied in the form presented in this paper (i.e.~minimizing the $L^2_\mu$-residual of the PDE without considering the form \eqref{eq:HJB_mild_rem}) to mild solutions, with a theoretical justification that may come by means of suitable perturbation arguments\footnote{The training of deep neural networks with stochastic gradient descent is well known to result in (implicit) regularization of the function approximation method. We hypothesize that this will typically lead to solutions which are stable under perturbations, supporting the application of our techniques to mild solutions of these PDEs. However, a full analysis of this setting is beyond the scope of this current paper.}.
While a full theory of this extension and its numerical tests will require a full future investigation, this motivates us  even more to consider classical solutions in the present paper.
\end{remark}

Under stronger conditions, we next seek to prove a complementary result to Theorem \ref{th:DGM_residual}. Namely, we will bound the solution ansatz error $\nor{v - v^{d,\theta}}_{L^2(H;\mu)}$ in terms of the PDE operator error $\nor{\mathcal{F}v^{d,\theta}}_{L^2(H;\mu)}$. This is helpful for verifying the convergence of numerical algorithms, as we can accurately estimate $\nor{\mathcal{F}v^{d,\theta}}_{L^2(H;\mu)}$ even for {HGNOs} with relatively low dimension $d$ by sampling points from $\mu$ with dimension $N \gg d$.

\begin{assumption}\label{ass:kolmogorov}
    Suppose the PDE \eqref{eq:PDE} is of the form
    \begin{equation}\label{eq:PDE_converse_est}
        (\mathcal{F}v)(x) := - \gamma v(x)+ \inprod{LDv}{x} + \Lambda(x)(Dv(x),D^2v(x)) + f(x) = 0\quad \forall x\in H,
    \end{equation}
    where $\Lambda : H \to \g(H\times S(H); \R)$. Suppose further that there exists a Borel probability measure $\mu$ on $H$ and an $\omega > 0$ such that
    \begin{equation}\label{eq:quasi_diss}
        \inprod{\mathcal{F}v}{v}_{L^2(H;\mu)} \leq -\omega \nor{v}_{L^2(H;\mu)}^2, \quad \forall v \in D(\mathcal{F}). 
    \end{equation}
\end{assumption}
\begin{example}
    The structure of $\mathcal{F}$ in Assumption \ref{ass:kolmogorov} accommodates the Kolmogorov equation \eqref{eq:kolmogorov} by setting $$\Lambda(x)(Dv(x), D^2v(x)) :=\mathcal Av(x): =\inprod{LDv}{x} +\inprod{Dv}{b(x,u(x))} + \frac{1}{2}\mathrm{Tr}\brac{\sigma(x,u(x))Q\sigma^*(x,u(x))D^2v(x)}$$ and $f(x)=l(x,u(x))$, where $b: H \times U \to H$, $\sigma: H \times U \to L(\Xi;H)$, and $l: H \times U \to \R$. 
    
   Further, define the Markov semigroup $P_t[v](x):=\mathbb{E}[v(X^{x}_t)]$, where  $X^{x}_t$ is the mild solution of \eqref{eq:stateSDE_H}.
   Under suitable conditions on the coefficients in \eqref{eq:stateSDE_H} and assuming that $\mu$ is a stationary distribution for $P_t$, we have that $P_t$ is a $C_0$-semigroup of contractions on $L^2(H;\mu)$  with generator $\mathcal A:D(\mathcal A)\subset L^2(H;\mu)\to L^2(H;\mu)$ \cite[Proposition 5.9 and Subsection 5.2.3]{fabbri2017book}. Thus, by the Lumer--Phillips Theorem, the generator $\mathcal{A}$ is maximally dissipative and, therefore, Assumption \ref{ass:kolmogorov} is satisfied with $\omega = \gamma > 0$.
\end{example}
\begin{theorem}\label{th:l2_convergence}
    Suppose Assumption \ref{ass:kolmogorov} holds and let $v \in C^2(H)$ be a classical solution of \eqref{eq:PDE_converse_est}. Then
    \begin{equation}\label{eq:bounded_inverse}
        \nor{v - w}_{L^2(H;\mu)} \leq \frac{1}{ \omega} \nor{\mathcal Fw}_{L^2(H;\mu)},\quad \forall w \in D(\mathcal{F})=D(\mathcal A):=\{v\in C^2(H):LDv\in C^0(H;H)\}.
    \end{equation}
\end{theorem}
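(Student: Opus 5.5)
The plan is to exploit the affine structure of $\mathcal F$ in Assumption \ref{ass:kolmogorov}. Write $\mathcal F w = \mathcal G w + f$, where
\[
\mathcal G w(x) := -\gamma w(x) + \inprod{LDw(x)}{x} + \Lambda(x)(Dw(x),D^2w(x))
\]
is linear in $w$: $\Lambda(x)$ is a bounded linear functional on $H\times S(H)$, and $w\mapsto Dw, D^2w, LDw$ are linear. Hence, for $w\in D(\mathcal F)$ and the classical solution $v$ (so $\mathcal F v\equiv 0$ pointwise), we have the identity
\[
\mathcal F w = \mathcal F w - \mathcal F v = \mathcal G(w-v),
\]
and $w-v\in D(\mathcal F)$, since $D(\mathcal F)$ is a linear space, being defined by $C^2$ regularity together with continuity of $LD(\cdot)$.

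Next I would interpret the quasi-dissipativity \eqref{eq:quasi_diss} as a statement about the linear part. Since \eqref{eq:quasi_diss} is required for all $v\in D(\mathcal F)$, the affine term $f$ has to be handled. The natural reading, consistent with the Kolmogorov example where $\mathcal A-\gamma$ is dissipative on $L^2(H;\mu)$, is that \eqref{eq:quasi_diss} holds for the linear operator $\mathcal G$. It can also be derived from the affine form directly: apply \eqref{eq:quasi_diss} to $t u$ and to $0$, use $\mathcal F(tu) = t\mathcal G u + f$, divide by $t^2$, and let $t\to\infty$. This gives $\inprod{\mathcal G u}{u}_{L^2(H;\mu)}\le -\omega\|u\|^2_{L^2(H;\mu)}$ for every $u\in D(\mathcal F)$, because the cross term $t\inprod{f}{u}$ is of lower order. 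This scaling argument is the one step I expect to need care. It requires $f, \mathcal G u\in L^2(H;\mu)$; otherwise one can restrict to $w$ with $\mathcal F w\in L^2(H;\mu)$, as the claim is trivial when the right-hand side of \eqref{eq:bounded_inverse} is infinite.

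Finally, set $u := w-v$ and apply Cauchy--Schwarz:
\[
\omega\|u\|^2_{L^2(H;\mu)} \le -\inprod{\mathcal G u}{u}_{L^2(H;\mu)} \le \|\mathcal G u\|_{L^2(H;\mu)}\,\|u\|_{L^2(H;\mu)} = \|\mathcal F w\|_{L^2(H;\mu)}\,\|u\|_{L^2(H;\mu)}.
\]
If $\|u\|_{L^2(H;\mu)}=0$ there is nothing to prove. Otherwise, dividing by $\omega\|u\|_{L^2(H;\mu)}$ gives \eqref{eq:bounded_inverse}. This is the standard bounded-inverse estimate for a strictly dissipative operator, i.e.\ $\|(\mathcal G)^{-1}\|\le 1/\omega$.
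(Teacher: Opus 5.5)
Your proof is correct and follows the same route as the paper: subtract the classical solution (using $\mathcal F v\equiv 0$), apply the dissipativity estimate to $w-v$, and finish with Cauchy--Schwarz. The one difference is that you treat $\mathcal F$ as affine and apply dissipativity to its linear part $\mathcal G$, justified by your scaling argument on $tu$ with $t\to\infty$; the paper instead invokes ``linearity of $\mathcal F$'' and writes $\mathcal F w-\mathcal F v=\mathcal F(w-v)$, silently dropping $f$, so your version is the more careful one.
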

\begin{remark}
Theorem \ref{th:l2_convergence} implies that if, for some $d\in \mathbb N,\theta \in \Theta$, we have  $\nor{\mathcal Fv^{d,\theta}}_{L^2(H;\mu)}<\epsilon$, then 
    $$\nor{v - v^{d,\theta}}_{L^2(H;\mu)}
    < \frac{\epsilon}{\omega}.$$
    
\end{remark}
\begin{proof}By the linearity of $\mathcal{F}$ and $\omega$-dissipativity, we have
    \begin{align*}
         \inprod{\mathcal{F}w}{w-v}_{L^2(H;\mu)} &= \inprod{\mathcal{F}w-\mathcal{F}v}{w-v}_{L^2(H;\mu)}= \inprod{\mathcal{F}(w-v)}{w-v}_{L^2(H;\mu)}\leq -\omega \nor{w-v}_{L^2(H;\mu)}^2.
    \end{align*}
Thus $\omega \nor{w-v}^2  \leq \nor{\inprod{\mathcal F w}{w-v}}_{L^2(H;\mu)} 
        \leq \nor{\mathcal Fw}_{L^2(H;\mu)}\nor{w-v}_{L^2(H;\mu)}.$
 The claim follows.
\end{proof}
In certain cases, such as infinite-dimensional Ornstein--Uhlenbeck processes, we can explicitly solve for and immediately sample from the stationary distribution to utilize Theorem \ref{th:l2_convergence}. In fact, this is what we do in Section \ref{sec:stochastic_heat} for the stochastic heat equation. In others, such as overdamped Langevin diffusions, we may simulate the forward process until convergence to get samples.

\section{HGNOs can solve optimal control problems on Hilbert spaces}\label{sec:NHO_control}
In this section, we specialize our results and provide further analysis of optimal control problems on Hilbert spaces (introduced in Section \ref{subsec:control_setup}). In particular, we show how to obtain universal approximation of optimal feedback controls in terms of our approximate value function HGNO.
\paragraph{Synthesis of optimal feedback controls.} Consider the control problem of Section \ref{subsec:control_setup}. We recall how classical solutions of the HJB equation \eqref{eq:HJB_control} are used to synthesize optimal feedback controls \cite[Corollary 2.43]{fabbri2017book}. Let $v:H \to \mathbb R$ be a classical solution of \eqref{eq:HJB_control}.
Define the multivalued maps
\begin{align}
&\Psi: H\times H\times S(H) \rightarrow \mathcal{P}(U), \quad \Psi(x,p,Z):=\mathop{\mathrm{arg\,min}} _{u \in U} F^{cv}\left(x, p, Z, u\right),\label{eq:mapPsi}\\
& \Phi: H \rightarrow \mathcal{P}(U), \quad
\Phi(x):=\Psi(x,Dv(x),D^2 v( x))=\mathop{\mathrm{arg\,min}}_{u \in U} F^{cv}\left(x, D v(x), D^2 v( x), u\right).\label{eq:mapPhi}
\end{align}
\begin{proposition}[Optimal Feedback Controls]\label{prop:feedbacks}Let Assumption \ref{ass:coefficients_control} hold.
Assume that $v$, $D v$, and $D^2 v$ are uniformly continuous on bounded subsets of $H$. Moreover, let $D v: H \rightarrow D\left(A^*\right)$ and suppose $A^* D v$ is uniformly continuous on bounded subsets of $H$, and that there exists $C>0$ such that
$$
|v(x)|+|D v(x)|+\left|A^* D v(x)\right| +\left\|D^2 v(x)\right\|\leq C(1+|x|)^k,\quad \forall x \in H.
$$
Assume that $\gamma >(k+2)\left(C+\frac{1}{2}(k+1) C^2\right)$, where $C$ is the constant from the growth of $b,\sigma$ in \eqref{eq:estimates_b}, \eqref{eq:estimates_sigma}. Assume that $\Phi$ has a Borel measurable selection\footnote{That is, there is a Borel measurable function $\varphi(x)\in \Phi(x)$ for all $x\in H$.} $\varphi:H \to U$ such that the corresponding closed loop equation (CLE)
$$
d X_s = A X_s d t+b(X_s , \varphi(X_s )) d s+\sigma(X_s , \varphi(X_s )) d W^Q_s,\quad X_0=x
$$
has a weak mild solution (see \cite[Definition 1.121]{fabbri2017book}) $X_s$ in some generalized reference probability space, for all $x\in H$. Then the pair $\left(X,u \right)$, where the control $u$ is defined by the feedback law $u_s=\varphi\left(X_s\right)$, is admissible and optimal at $x$ and $v$ is the value function.
\end{proposition}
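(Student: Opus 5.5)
The plan is the classical verification argument, carried out in the infinite-dimensional mild setting. Fix $x\in H$ and an arbitrary admissible control $u\in\mathcal U$ with mild state $X^{x,u}$. I would show first that $v(x)\le J(x;u)$. Then, for the feedback pair $(X,\varphi(X))$ built from the closed loop equation, I would show $v(x)=J(x;\varphi(X_\cdot))$. Together these give optimality and $V=v$.

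\textbf{Step 1 (Itô/Dynkin formula).} Since $X^{x,u}$ is only a mild solution, Itô's formula cannot be applied directly. I would use the version of Itô's formula for mild solutions with test functions $\varphi$ such that $D\varphi\in D(A^*)$ and $A^*D\varphi$ is continuous (cf.\ \cite[Proposition 1.169]{fabbri2017book}), applied to $e^{-\gamma t}v(X_t)$. If that version were not available as stated, I would prove it by Yosida approximations $A_n$: the approximating processes $X^n$ are strong solutions, classical Itô applies to them, and one passes to the limit using the uniform continuity of $v,Dv,D^2v,A^*Dv$ on bounded sets. The result is, for a localizing sequence of stopping times $\tau_R$,
\[
\E[e^{-\gamma (t\wedge\tau_R)}v(X_{t\wedge\tau_R})]=v(x)+\E\int_0^{t\wedge\tau_R}e^{-\gamma s}\big(-\gamma v+\langle A^*Dv,X_s\rangle+\langle Dv,b\rangle+\tfrac12\mathrm{Tr}[\sigma Q\sigma^*D^2v]\big)(X_s,u_s)\,ds .
\]

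\textbf{Step 2 (use the HJB equation).} By \eqref{eq:HJB_control}, the integrand is at least $-e^{-\gamma s}l(X_s,u_s)$. Equality holds when $u_s=\varphi(X_s)$, because $\varphi(x)\in\Phi(x)$ attains the infimum in $F^{cv}$. This gives
\[
v(x)\le \E\Big[\int_0^{t\wedge\tau_R}e^{-\gamma s}l(X_s,u_s)\,ds+e^{-\gamma(t\wedge\tau_R)}v(X_{t\wedge\tau_R})\Big],
\]
with equality for the feedback pair.

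\textbf{Step 3 (pass to the limit, the main obstacle).} I would let $R\to\infty$ and then $t\to\infty$, which requires moment estimates. Applying the same mild Itô argument to $|x|^{m}$, or using the standard estimate $\E\sup_{s\le t}|X_s|^{m}\le C_m(1+|x|^{m})e^{m(C+\frac12(m-1)C^2)t}$ from the linear growth of $b,\sigma$ and the contractivity-type bound on $e^{At}$ (after renorming, absorbing $\omega$ of the semigroup), I would take $m=k+2$. The condition $\gamma>(k+2)(C+\frac12(k+1)C^2)$ then ensures that
\begin{itemize}
\item $e^{-\gamma t}\E|v(X_t)|\le Ce^{-\gamma t}\E(1+|X_t|)^k\to0$;
\item the local martingale terms are true martingales, via BDG and the growth of $Dv$, $\sigma$;
\item the dominated convergence needed as $R\to\infty$ is justified through $\E\sup_{s\le t}|X_s|^{k+2}<\infty$.
\end{itemize}
The cost term converges by monotone/dominated convergence, since $|l|\le C(1+|x|^m)$ and the exponential integrability follows from the same moment bound.

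The conclusion is $v(x)\le J(x;u)$ for all $u\in\mathcal U$ and $v(x)=J(x;\varphi(X_\cdot))$, so $v=V$ and the feedback control is optimal. Admissibility holds because $u_s=\varphi(X_s)$ is progressively measurable: $\varphi$ is Borel and $X$ is continuous and adapted on its reference space, and the weak formulation allows arbitrary reference probability spaces.
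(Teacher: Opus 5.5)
Your route (Dynkin's formula for mild solutions, the HJB inequality with equality along $\varphi$, then $R\to\infty$ and $t\to\infty$) is the standard verification argument behind \cite[Corollary 2.43]{fabbri2017book}. That citation is all the paper offers for this statement, and your Steps 1--2 and the admissibility and optimality deductions are fine.

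\textbf{The gap is the moment estimate in Step 3.} The threshold $\gamma>(k+2)\big(C+\frac12(k+1)C^2\big)$ is the rate you get from It\^o's formula for $|X^n_t|^{k+2}$ along Yosida-approximated (strong) solutions when $\langle A_n y,y\rangle\le 0$. In other words, it presupposes that $e^{tA}$ is a contraction semigroup. A growth bound $\|e^{tA}\|\le e^{\omega t}$ with $\omega>0$ adds $(k+2)\omega$ to the rate. No equivalent renorming changes the exponential growth bound of a semigroup, so $\omega$ cannot be ``absorbed''.

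This is not a technicality. Take $A=I$, $b\equiv0$, $\sigma\equiv0$, $l\equiv0$ (so $C$ may be taken as small as you like), $\gamma=2$ and $v(x)=|x|^2$. Then:
\begin{itemize}
\item $v$ is a classical solution of \eqref{eq:HJB_control} and satisfies all the continuity and growth hypotheses with $k=2$;
\item the closed loop equation is solved by $X_t=e^tx$;
\item yet $J\equiv0\neq v$, because $e^{-\gamma t}v(X_t)=|x|^2\not\to0$, which is exactly the limit your Step 3 must control.
\end{itemize}
So you must use, and state, $\|e^{tA}\|\le1$ (as in the paper's heat-equation example). Then derive $\E|X_t|^{j}\le C_j(1+|x|^j)e^{\beta_j t}$ with $\beta_{k+2}<\gamma$ through the Yosida approximations. ``The same mild It\^o argument'' cannot be applied to $|x|^j$ directly, because $D|x|^j=j|x|^{j-2}x\notin D(A^*)$ in general. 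For $t\to\infty$ you only need fixed-time moments; sup-moments on bounded intervals, with any constant, suffice for the localization in $R$.

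\textbf{A second, smaller point concerns the running cost.} The exponent $m$ in $|l|\le C(1+|x|^m)$ from Assumption \ref{ass:coefficients_control} is unrelated to $k$, and you reuse the letter $m$ for the moment order. So the $(k+2)$-moment bound does not give integrability of the running cost. Use the HJB equation instead:
\begin{itemize}
\item Along the feedback, $l(x,\varphi(x))=\gamma v(x)-\langle A^*Dv(x),x\rangle-\langle Dv(x),b(x,\varphi(x))\rangle-\frac12\mathrm{Tr}[\sigma(x,\varphi(x))Q\sigma^*(x,\varphi(x))D^2v(x)]$ is $O\big((1+|x|)^{k+2}\big)$, so the equality passes to the limit.
\item For an arbitrary $u$, the same expression with ``$\ge$'' gives $l(x,u)\ge -C(1+|x|)^{k+2}$. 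The negative part is then integrable, and monotone convergence on the positive part yields $v(x)\le J(x;u)$, even when $J(x;u)=+\infty$.
\end{itemize}
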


\paragraph{HGNOs can solve optimal control problems on Hilbert spaces.}
Let Assumption \ref{ass:coefficients_control} hold. Then Assumption \ref{ass:F_estimates} holds, i.e.~we have \eqref{eq:est_F_control}, and Remark \ref{seq:continuity_Fcv_compact_open} applies. Furthermore, assume that the control set $U$ is compact so that Lemma  \ref{seq:continuity_Fcv_F_compact_open} applies. Let $\{e_i\}\subset D(L)$ be an orthonormal basis of $H$, for $L:=A^*$.
Then, under the corresponding assumptions, we can apply Theorems \ref{th:UAT_L} and \ref{th:DGM_residual} to the HJB equation \eqref{eq:HJB_control}, i.e.~by denoting by $v$ a classical solution of the HJB equation \eqref{eq:HJB_control}, we can find $d\in \mathbb N, \theta \in \Theta$ such that Theorems \ref{th:UAT_L} and \ref{th:DGM_residual} hold.  Moreover,  let the assumptions of Proposition \ref{prop:feedbacks} hold; then by the proposition, $v$ is equal to the value of the control problem. 

In addition, assume that the multivalued map \eqref{eq:mapPhi}  (which is non-empty valued thanks to the compactness of $U$) is single-valued, i.e.~$\Psi: H \times H \times S(H)\rightarrow U$. 
Then the optimal feedback map $\Phi$, defined in \eqref{eq:mapPhi}, is  also single-valued, $\Phi:H\to U$. 
\begin{remark}
    If $U$ is convex and for all $x,p,Z,$ the map $U\ni u\mapsto F^{cv}(x,p,Z,u)$ is strictly convex\footnote{A function $f:U\to \mathbb R$ is  strictly convex if
$
f\left((1-\lambda) x_1+\lambda x_2\right)<(1-\lambda) f\left(x_1\right)+\lambda f\left(x_2\right), $ $ 0<\lambda<1,
$
for any  $x_1\neq x_2$ in $U$.}, then its minimizer is unique \cite[Proposition 1.2]{ekeland_temam}. In this case, $\Psi$ (and $\Phi$) are single-valued.
\end{remark}
In this setting, we have the following theorem.
\begin{theorem}\label{th:UAT_optimal_controls}
    Let $K,K' \subset H$ be compacts and $\mu,\mu'$ be Borel probability measures on $H$. Define $\Phi^{d,\theta}: H \rightarrow U$ by
\begin{equation} \label{eq:phi_dt_def}
\Phi^{d,\theta}(x):=\Psi(x, Dv^{d,\theta}(x), D^2v^{d,\theta}(x))=\mathop{\mathrm{arg\,min}}_{u \in U}  F^{cv}\left(x, Dv^{d,\theta}(x), D^2v^{d,\theta}(x), u\right),\quad x\in H. \end{equation}
Under the respective assumptions of points (i), (ii) in Theorem \ref{th:DGM_residual}, for every $\epsilon>0$, there exist $d\in \mathbb N, \theta \in \Theta$ such that
    \begin{enumerate}
        \item[(i)]  {the conclusions of Theorem \ref{th:DGM_residual}(i) hold and}
$\sup_{x \in K}|\Phi(x)-\Phi^{d,\theta}(x)|_{\tilde U}<\epsilon$,
\item[(ii)]{the conclusions of Theorem \ref{th:DGM_residual}(ii) hold and} $ \|\Phi-\Phi^{d,\theta}\|_{L^2(H;\tilde U;\mu)}^2<\epsilon$. 
    \end{enumerate}
\end{theorem}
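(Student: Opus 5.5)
The plan is to derive both statements from a continuity property of the single-valued argmin map $\Psi$ (Berge maximum theorem). The sequential topology must be handled with care throughout.

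\textbf{Step 1: sequential continuity of $\Psi$.} Let $x^n\to x$ and $p^n\to p$ in $H$, and let $Z^n\to Z$ in the compact-open (equivalently, strong operator) topology of $S(H)$. Set $u^n:=\Psi(x^n,p^n,Z^n)$. By compactness of $U$, every subsequence has a further subsequence $u^{n_k}\to \bar u\in U$. By Remark \ref{seq:continuity_Fcv_compact_open}, $F^{cv}$ is sequentially continuous, so for every $u\in U$
\[
F^{cv}(x,p,Z,\bar u)=\lim_k F^{cv}(x^{n_k},p^{n_k},Z^{n_k},u^{n_k})\le \lim_k F^{cv}(x^{n_k},p^{n_k},Z^{n_k},u)=F^{cv}(x,p,Z,u).
\]
Hence $\bar u$ is a minimizer, and uniqueness gives $\bar u=\Psi(x,p,Z)$. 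The whole sequence therefore converges, so $\Psi$ is sequentially continuous, and in particular $\Phi$ and $\Phi^{d,\theta}$ are Borel measurable.

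\textbf{Step 2: part (i).} I argue by contradiction along a sequence. By Theorem \ref{th:DGM_residual}(i) and Theorem \ref{th:UAT_K}, I can choose $(d_n,\theta_n)$ such that $v^{n}:=v^{d_n,\theta_n}$ satisfies the residual conclusions with $\epsilon=1/n$ on $K$. I enlarge $K'$ to a compact set $K'\cup K''$, where $K''$ is any fixed compact set, and require $\sup_{x\in K,h\in K'}|[D^2v(x)-D^2v^n(x)]h|\to0$ on every compact $K'$. A single sequence cannot serve all compacts at once. However, the sequential continuity needed only involves countably many compacts, namely the sets built from $K$ below. So I fix a compact $K$, suppose $\sup_K|\Phi-\Phi^{n}|\ge\epsilon_0$, and pick $x^n\in K$ with $|\Phi(x^n)-\Phi^n(x^n)|\ge\epsilon_0$. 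Passing to a subsequence, $x^n\to x\in K$. Then $Dv^n(x^n)\to Dv(x)$, since
\[
|Dv^n(x^n)-Dv(x)|\le \sup_K|Dv^n-Dv|+|Dv(x^n)-Dv(x)|.
\]
Also $D^2v^n(x^n)\to D^2v(x)$ strongly: for fixed $h$, take $K'=\{h\}$ and use continuity of $D^2v$ in operator norm. Step 1 then gives $\Phi^n(x^n)\to\Phi(x)$ and $\Phi(x^n)\to\Phi(x)$, a contradiction. For this I use $\epsilon$-approximations with $K'$ ranging over an increasing sequence of finite sets $\{h_1,\dots,h_n\}$, where the $h_j$ are dense. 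Strong convergence then follows from density together with the Banach--Steinhaus bound $\sup_n\sup_K\|D^2v^n\|<\infty$. This uniform bound is the main obstacle. I expect to obtain it from $D^2v^{d,\theta}=D^2v^d+(D^2v^{d,\theta}-D^2v^d)$: Remark \ref{eq:operatornormD2vdtheta_K}(2) makes the second term small in operator norm for each fixed $d$, and $\|P_dD^2v(P_d\cdot)P_d\|$ is bounded on $K$ by compactness of $\tilde K$.

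\textbf{Step 3: part (ii).} I choose a sequence $v^n$ via Theorem \ref{th:DGM_residual}(ii) and Remark \ref{eq:operatornormD2vdtheta}, with $\|D^2v^n-D^2v^{d_n}\|$ small in $L^4_\mu$ operator norm, and with $D^2v^{d_n}(x)\to D^2v(x)$ strongly for every $x$. Passing to a subsequence, $Dv^n\to Dv$ and $\|D^2v^n-D^2v^{d_n}\|\to0$ for $\mu$-a.e.\ $x$. Hence $D^2v^n(x)\to D^2v(x)$ strongly $\mu$-a.e. Step 1 gives $\Phi^n\to\Phi$ $\mu$-a.e. Finally, $|\Phi-\Phi^n|_{\tilde U}\le \mathrm{diam}\,U<\infty$, so dominated convergence gives $\|\Phi-\Phi^n\|_{L^2_\mu}^2\to0$. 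I pick $n$ large, keeping the conclusions of Theorem \ref{th:DGM_residual}(ii), which hold along the sequence with vanishing error.
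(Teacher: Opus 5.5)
Your proof is correct, but it is organized differently from the paper's.

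\textbf{The paper's route.} It inserts the cylindrical feedback $\Phi^d(x)=\Psi(x,Dv^d(x),D^2v^d(x))$ and splits $|\Phi-\Phi^{d,\theta}|_{\tilde U}\le|\Phi-\Phi^{d}|_{\tilde U}+|\Phi^d-\Phi^{d,\theta}|_{\tilde U}$. It then uses two separate continuity properties of $\Psi$, both from Berge's maximum theorem.
\begin{itemize}
\item Continuity on $H\times H\times S_M$ for the compact-open topology, which is metrizable there. Together with $D^2v^d(K)\subset S_M$ and the argument of Lemma \ref{rem:continuity_PhiF}, this controls the cylindrical error uniformly on $K$ (pointwise plus dominated convergence in (ii)).
\item Continuity for the operator norm on $H\times H\times S(H)$. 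For fixed $d$, this controls the network error via Remark \ref{eq:operatornormD2vdtheta_K}(2), resp.\ via Remark \ref{eq:operatornormD2vdtheta}, an a.e.\ subsequence and dominated convergence.
\end{itemize}

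\textbf{Your route.} You prove a single property: sequential continuity of $\Psi$ for the strong/compact-open topology. You get it by a direct sequential Berge argument from compactness of $U$, uniqueness of the minimizer and Remark \ref{seq:continuity_Fcv_compact_open}. You then apply it along a diagonal sequence $v^n=v^{d_n,\theta_n}$. A compactness/contradiction argument on $K$ replaces the paper's uniform-continuity argument.

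In your route, the operator-norm approximation of Remark \ref{eq:operatornormD2vdtheta_K}(2) serves only to give $\sup_n\sup_K\|D^2v^n\|\le M+1$. That bound is what upgrades convergence on the dense set $\{h_j\}$ to strong convergence of $D^2v^n(x^n)$. In the paper, the analogous bound $D^2v^d(K)\subset S_M$ is what makes the compact-open topology metrizable.

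\textbf{Comparison.} Your approach needs one continuity statement and no metrizability. The paper's approach is more modular, cleanly separating the infinite-dimensional (cylindrical) error from the finite-dimensional (network) one.

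\textbf{Two small points.}
\begin{itemize}
\item You should state explicitly that $d_n\to\infty$. You should also say that the operator-norm requirement on $\theta_n$ is imposed in the same choice of $\theta_n$ that yields the conclusions of Theorem \ref{th:DGM_residual}. This is possible because, in those proofs, $\theta$ comes from a single finite-dimensional approximation of $\tilde v^{d}$, which also controls the Hessian error in operator norm.
\item Your remark that a single sequence cannot serve all compacts $K'$ at once is inaccurate. With your uniform bound, convergence on $\{h_j\}$ already gives uniform convergence on every compact $K'$, by the equicontinuity argument of Lemma \ref{lemma1:uniformcompactsfromstrongandequibounded}. This does not affect your argument.
\end{itemize}
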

\begin{proof}Let $\epsilon>0$. Let $M>0$, let $\tilde K$ be defined with the notation of  Remark \ref{rem:relatively-compact_Pd-I}. By Remark \ref{seq:continuity_Fcv_compact_open}, $F^{cv}:H\times   H\times S(H)\times U\to \mathbb R$ is sequentially continuous, when $S(H)$ is endowed with the compact-open topology.  Therefore, when we consider its restriction  to $H\times H\times S_M\times U $, where $S_M=\{Z\in S(H):\|Z\|\leq M\}$, $M>0$,  it upgrades to  a continuous map, since $S_M$, endowed with the compact-open topology, is a metrizable topological space (see Definition \ref{rem:compact_open_LX}). By compactness of $U$, we apply \cite[Berge Maximum Theorem 17.31 and Lemma 17.6]{aliprantis2006infinite} to have that $\Psi:H\times H\times S_M\to U$ is continuous. Choose $M$ such that $\sup_{x\in \tilde K}\|D^2v(x)\|\leq M$ (so we also have $\sup_{x\in  K}\|D^2v^d(x)\|\leq M$) and define $\Phi^{d}: H\rightarrow U$,
\begin{equation}
\Phi^{d}(x):=\Psi(x, Dv^{d}(x), D^2v^{d}(x))=\mathop{\mathrm{arg\,min}}_{u \in U}  F^{cv}\left(x, Dv^{d}(x), D^2v^{d}(x), u\right),\quad x\in H.
\end{equation}
(i) Using Remark \ref{eq:operatornormD2vdtheta_K}(1), the continuity of $\Psi:H\times H\times S_M\to U$, and a similar argument to the one in the proof of Lemma \ref{rem:continuity_PhiF},  there exists $\tilde d\in \mathbb N,$ such that for all $d\geq \bar d$,
$$\sup_{x\in K}|\Phi(x)-\Phi^{d}(x)|_{\tilde U}=\sup_{x\in K}|\Psi(x, Dv(x), D^2v(x))-\Psi(x, Dv^{d}(x), D^2v^{d}(x))|_{\tilde U}<\epsilon/2.$$
We take a possibly larger $d\geq \tilde d,$ such that all estimates in the proof of Theorem \ref{th:DGM_residual}(i) are valid.

By Assumption \ref{ass:coefficients_control},  we apply \cite[Berge Maximum Theorem 17.31 and Lemma 17.6]{aliprantis2006infinite}, $\Psi:H\times H\times S(H)\to U$  is continuous, when $S(H)$ is endowed with the operator norm. By Remark \ref{eq:operatornormD2vdtheta_K}(2), there exists $ \theta \in \Theta$ such that the conclusions of Theorem \ref{th:DGM_residual} (i) hold and
$$\sup_{x \in K}|\Phi^d(x)-\Phi^{d,\theta}(x)|_{\tilde U}=\sup_{x\in K}|\Psi(x, Dv^{d}(x), D^2v^{d}(x))-\Psi(x, Dv^{d,\theta}(x), D^2v^{d,\theta}(x))|_{\tilde U}<\epsilon/2.$$

(ii) As above, 
$|\Phi(x)-\Phi^{d}(x)|_{\tilde U}=|\Psi(x, Dv(x), D^2v(x))-\Psi(x, Dv^{d}(x), D^2v^{d}(x))|_{\tilde U}\xrightarrow{d\to \infty}0$, for any $x\in H$. As the codomain of $\Psi$ is the bounded set  $U$, by the dominated convergence theorem, there exists $\tilde  d\in \mathbb N$ such that for all $d\geq \tilde d,$
$$\int_H|\Phi(x)-\Phi^{d}(x)|^2_{\tilde U}\mu(dx)<\epsilon/2.$$
We pick $d\geq \tilde d$ so that the estimates in the proof of Theorem \ref{th:DGM_residual}(ii) are valid. By Remark \ref{eq:operatornormD2vdtheta} there exists $\{\theta_n\}\subset \Theta:$   
$$\int_{H} | v^d(x)-v^{d,\theta_n}(x)|^4+|D v^d(x)-D v^{d,\theta_n}(x)|^4+|L(D v^d(x)-D v^{d,\theta_n}(x))|^4+ \|D^2 v^d(x)-D^2 v^{d,\theta_n}(x)\|^4\mu(dx) \xrightarrow{n\to \infty} 0.$$ Up to a subsequence, we have convergence  of the integrand to zero $\mu-$a.e. Since, by the proof of (i) $\Psi:H\times H\times S(H)\to U$  is continuous, when $S(H)$ is endowed with the operator norm, 
 we apply again the dominated convergence theorem to find $n$ such that 
\begin{align*}
    \int_H|\Phi^d(x)-\Phi^{d,\theta_n}(x)|^2_{\tilde U}\mu(dx)=  \int_{H}|\Psi(x, Dv^{d}(x), D^2v^{d}(x))-\Psi(x, Dv^{d,\theta_n}(x), D^2v^{d,\theta_n}(x))|^2_{\tilde U}\mu(dx)<\epsilon/2
\end{align*}
and such that the estimates in the proof of Theorem \ref{th:DGM_residual}(ii) hold for $\theta=\theta_n$.
The statement follows.
\end{proof}

\paragraph{Parametrization of feedback controls via HGNOs.} Just as we have parameterized the PDE solution by the HGNO ansatz $v^{d,\theta} : H \to \R$, it is natural to parameterize\footnote{If $U \subsetneq \tilde U$, we may need to apply a final layer to $\tilde{u}^{d',\phi,p} : \R^d\to \R^p$ so that the HGNO $u^{d',\phi,p} : H \to U$ does not take values in $\tilde U \setminus U$. For example, if $U = B_{\tilde U}(0,1) \subset \tilde U$, then we can apply the map $\Pi: \R^p \to B_{\R^p}(0,1)$ given by $\Pi_{\R^p}(x) = \tfrac{x}{1+\modu{x}}$ to the outputs of $\tilde{u}^{d',\phi,p}$. Thus, the structure of the HGNO is $u^{d',\phi,p} = \hat{\mathcal E}_p^{\tilde U} \circ \Pi_{\R^p} \circ \tilde u^{d',\theta,p} \circ \mathcal{E}_{d'}^{H}.$ If $\Pi_{\R^p}$ is a homeomorphism, as is the case here, then $\Pi_{\R^p} \circ \tilde u^{d',\theta,p}$ universally approximates in $C_0$-norm on compacts, which is sufficient for the results in \cite{castro2022} and thus the purposes of this paper.} the optimal feedback control by an HGNO ansatz $u^{d',\phi,p} : H \to U$, for $d',p\in \mathbb N, \phi\in \Theta$. Since the activation function  $\mathfrak{m}\in C_b^{2}(\mathbb R)$, we have that $u^{d',\phi,p}$ is Lipschitz and thus an admissible control, i.e.~$u^{d',\phi,p}\in \mathcal U$. Further, given some prior estimate $v^{d,\theta}$ of the PDE solution, we can train the HGNO $u^{d',\phi,p}$ to achieve the argmin in \eqref{eq:phi_dt_def}. By standard universal approximation theorems in $C^0$-norm on compact sets $K \subset H$ and in $L_{\mu}^2(H)$ (see e.g.~\cite{castro2022}), the HGNO $u^{d',\phi,p}$ can be trained to approximate the continuous function $\Phi^{d,\theta}: H \to U$. Thus, if the conditions of both theorems are fulfilled, then given a compact set $K \subset H$, resp.~a Borel probability measure $\mu$, there exist $d',p \in \N$ with $\phi \in \Theta$ trainable from knowing only $v^{d,\theta}$ such that
$$\sup_{x\in K} \modu{\Phi^{d,\theta}(x) - u^{d',\phi,p}(x)}_{\tilde{U}} < \epsilon, \quad \text{resp.} \quad \nor{\Phi^{d,\theta} - u^{d',\phi,p}}_{L^2(H;\tilde{U};\mu)} <\epsilon.$$
This idea guides the development of Hilbert Actor-Critic methods in the next section. In particular, to solve the HJB equation \eqref{eq:HJB_control}, we alternate between training the \textbf{critic} function $v^{d,\theta} : H\to \R$ to solve the Kolmogorov problem given the current \textbf{actor} $u^{d',\phi,p}$ and training the actor $u^{d',\phi,p}: H \to U$ to try to learn the argmin in \eqref{eq:phi_dt_def} given the current critic $v^{d,\theta}$.

\section{Deep Hilbert--Galerkin Methods and Hilbert Actor-Critic Methods}\label{sec:DHGM}

Guided by the previous theoretical analysis, we now propose \textbf{Deep Hilbert--Galerkin Methods} for solving PDEs on $H$ via HGNOs. Certain problems, such as the Kolmogorov equation \eqref{eq:kolmogorov}, only require learning $v^{d,\theta}: H \to \R$.  Others, such as control problems like \eqref{eq:HJB_control}, necessitate initializing an additional auxiliary network $u^{d,\phi,p}: H \to U \subset \tilde{U}$ to learn the optimal control.

For problems which only require learning $v^{d,\theta} : H \to \R$, we propose Algorithm \ref{algo:fixed_control}, which has two variants: the DHGM gradient, inspired by DGM \cite{sirignano2018}, and the QHPDE gradient, inspired by QPDE \cite{cohen2023}. When using the DHGM gradient, $v^{d,\theta} : H \to \R$ is made to learn the solution of the PDE \eqref{eq:PDE} by training $\theta \in \Theta$ to minimize the $L_{\mu}^2(H)$-norm of the PDE residual with stochastic gradient descent (cf.~\eqref{algo:dgm_critic_loss}). When using the QHPDE gradient, $\theta \in \Theta$ is trained using a biased gradient of this same residual norm that assumes the PDE operator is monotone (cf.~\eqref{algo:qpde_critic_loss}). We remark that the QHPDE gradient often outperforms the DHGM gradient even when the PDE is not truly monotone, but merely contains a time-discounting term like $-\gamma v$ in \eqref{eq:kolmogorov}.

In the case of control problems which require learning both $v^{d,\theta} : H \to \R$ and $u^{d',\phi,p} : H \to U$, we conceptualize new types of methods which we call \textbf{``Optimize-then-Learn''}. The fundamental difference with the literature is that we directly attempt to solve the infinite-dimensional PDE and optimality condition rather than projected versions of them (which would be a ``Discretize-then-Optimize'' or ``Optimize-then-Discretize'' method, depending on whether the PDE or optimality condition is discretized first).  Inspired by techniques from reinforcement learning and finite-dimensional HJB equations, we introduce \textbf{Hilbert Actor-Critic Methods}, i.e. \textbf{Reinforcement Learning} algorithms in which the actor is the learned optimal control $u^{d,\phi,p} : H \to U$ and the critic is the learned value function/PDE solution $v^{d,\theta}: H\to\R$. The critic can be trained with the most recent actor, then the actor with the most recent critic, and this process repeat until an PDE solution/optimal control pair is learned. In particular, in Algorithm \ref{algo:ac}, we propose alternating between training $v^{d,\theta} : H\to \R$ using either a DHDM or QHPDE gradient and training $u^{d',\phi,p}: H \to U$ to achieve the argmin in \eqref{eq:phi_dt_def} via stochastic gradient descent in the parameters $\phi$ on the integral $$\int_H F^{cv}(v^{d,\theta}(x),Dv^{d,\theta}(x),D^2v^{d,\theta}(x),u^{d',\phi,p}(x))\mu(dx).$$

{In the context of Algorithms \ref{algo:fixed_control} and \ref{algo:ac}, Theorem \ref{th:DGM_residual} guarantees that HGNOs are capable of representing functions which solve the PDE of interest to arbitrarily low $L^2_\mu(H)$-norm, and uniformly on compact sets, providing theoretical grounding for these approaches.}

\begin{remark}\label{rem:extensions_DHGM}
Algorithms \ref{algo:fixed_control} and \ref{algo:ac} are stated for solving \eqref{eq:PDE} and \eqref{eq:HJB_control} when the PDE domain is $H$. However, extensions to PDEs of the form \eqref{eq:PDE_intro} with  boundary conditions and $k  \in \mathbb N$ are possible. For example, to handle when $k > 1$, we can define an HGNO with codomain $\R^k$ and restrict to using a DHGM gradient. For parabolic PDEs with domain $[0,T] \times H$ and a terminal boundary condition $V(T,\cdot) : H \to \R$, we can define $v_{\mathrm{output}}^{d,\theta}(t,x) := (T-t)v_{\mathrm{HGNO}}^{d,\theta}(t,x) + V(T,x)$ and sample from $\mu_{1,\mathrm{extended}} = \mathrm{Unif}([0,T]) \times \mu$.
\end{remark}

\begin{remark}\label{rem:def_N}
    Algorithms \ref{algo:fixed_control} and \ref{algo:ac} are intended to solve PDEs on the entire Hilbert space $H$. This differs from other approaches in the literature \cite{miyagawa2024physics, venturi2018numerical, rodgers_venturi} which solve the a projected PDE on a $d$-dimensional vector space, e.g.
    \begin{align}\label{eq:PDE_Hd}
        \inprod{LDv^d}{y} + F(y,v^d,Dv^d,D^2v^d)= 0, \quad  y\in P_d(H) \not\approx H.
    \end{align}
     {Our approach has the advantage of evaluating the PDE residual of \eqref{eq:PDE_intro} accurately by not approximating the variable $x \in H$ appearing in the terms $F(x,v,Dv,D^2v)$ and $\inprod{LDv}{x}$ with $P_dx$, but rather using the full variable $x \in H$.} Of course, computers have finite memory and are still only capable of sampling points $x \in H$ to a finite number of basis elements. 
    However, we can cheaply and easily sample $N \gg d$. Thus, we can sample so that $\E_{\mu}\modu{x - P_Nx}$ is completely negligible while maintaining low computational cost (whereas increasing $d$ is expensive). Since $F$ and $\langle LDv^{d,\theta}(\cdot),\cdot \rangle$ are continuous in $x$ (in particular, our choice $\{e_i\}\subset D(L)$ makes $LDv^{d,\theta}\in C^0(H;H)$, cf.~\eqref{eq:LDvd_theta}), the PDE \eqref{eq:PDE} can be approximated well when $N$ is large enough. Note that $LDv^{d,\theta}(x)$ might be outside $P_d(H)$ even though $Dv^{d,\theta}(x)\in P_d(H)$.
\end{remark}

There are many other potential algorithms that could be written and experimented with. For example, one could establish an algorithm that uses information obtained from Monte Carlo simulations of the controlled SDE (cf.~e.g.~\cite{zhou2021} for a finite-dimensional analogue) rather than sampling $H$ directly.

\begin{figure}[p]
\centering

\begin{minipage}{0.98\textwidth}
\begin{algorithm}[H]\small
\SetAlgoLined
 \textbf{Parameters:} The PDE \eqref{eq:PDE}; choice of DHGM or QHPDE gradient; orthonormal basis $\{e_i\}_{i=1}^\infty$ of $H$; hyperparameters $d,\theta$ of the HGNO; reference probability measure $\mu$ on $H$; number of samples $M$ in $H$; learning rate $\alpha_t$; terminal time $T$\;
 \textbf{Initialize:} HGNO $v^{d,\theta}$; time $t = 0$\;
 Let $\mathcal{F}v(x) = \inprod{LDv(x)}{x} + F(x,v(x),Dv(x),D^2v(x)).$ \\
 ----------------------------------------------------------- \\
 \textit{While $t < T$}:\\
  ----------------------------------------------------------- \\
    \begin{enumerate}[leftmargin=*,nosep]
        \item Sample $\{x_m\}_{m=1}^M$ in $H$ according to $\mu$.
        \item \textit{If using DHGM gradient}: Calculate training (negative) gradient
        \begin{align}\label{algo:dgm_critic_loss}
            \mathrm{Grad} \leftarrow G_{\mathrm{DHGM}}(\theta; \{x_m\}_{m=1}^M) := -\frac{1}{M}\sum_{m=1}^M \mathcal{F}v^{d,\theta}(x_m)\nabla_\theta\mathcal{F}v^{d,\theta}(x_m).
        \end{align} \\
        \textit{Else if using QHPDE gradient}: Calculate training (negative) gradient
        \begin{align}\label{algo:qpde_critic_loss}
            \mathrm{Grad} \leftarrow G_{\mathrm{QHPDE}}(\theta; \{x_m\}_{m=1}^M) := -\frac{1}{M}\sum_{m=1}^M \mathcal{F}v^{d,\theta}(x_m)\nabla_\theta(-v^{d,\theta}(x_m)).
        \end{align}
        \item Update parameters $\theta$ with gradient $\mathrm{Grad}$ at rate $\alpha_t$ using stochastic gradient descent or ADAM \cite{kingma2014}.
        \item Update time $t \leftarrow t+1$\;
    \end{enumerate}
\caption{Deep Hilbert--Galerkin algorithm for PDE \eqref{eq:PDE} on $H$}\label{algo:fixed_control}
\end{algorithm}
\end{minipage}

\vspace{2em}

\begin{minipage}{0.98\textwidth}
\begin{algorithm}[H] \small
\SetAlgoLined
 \textbf{Parameters:} The PDE \eqref{eq:HJB_control}; choice of DHGM or QHPDE gradient; orthonormal bases $\{e_i\}_{i=1}^\infty$ and $\{g_i\}_{i=1}^\infty$ of $H$ and $\tilde U$; hyperparameters $d,\theta,\phi,p$ of the critic and actor HGNOs; reference probability measure $\mu$ on $H$; number of samples $M$ in $H$; learning rates $\alpha_t, \beta_t$; terminal time $T$\;
 \textbf{Initialize:} HGNOs $v^{d,\theta} : H \to \R$ and $u^{d,\phi,p}: H \to U$; time $t = 0$\; 
 Let $\mathcal{F}^{cv}(v,u)(x) =  -\gamma v(x)+ \inprod{A^*Dv(x)}{x} + \inprod{Dv(x)}{b(x,u(x))}+\frac{1}{2}\mathrm{Tr}[\sigma(x,u(x)) Q \sigma^*(x,u(x))D^2v(x)]+l(x,u(x)).$ \\
 ----------------------------------------------------------- \\
 \textit{While $t < T$}:\\
  ----------------------------------------------------------- \\
\textbf{Critic step}: \\
    \begin{enumerate}[leftmargin=*,nosep]
        \item Fix HGNO $u^{d,\phi,p}$. Sample $\{x_m\}_{m=1}^M$ in $H$ according to $\mu$.
        \item \textit{If using DHGM gradient}: Calculate training (negative) gradient
        \begin{equation}\label{algo:ac_dgm_critic_loss}
            \mathrm{Grad}_{\mathrm{Critic}} \leftarrow G_{\mathrm{DHGM}}(\theta, \phi; \{x_m\}_{m=1}^M) := -\frac{1}{M}\sum_{m=1}^M \mathcal{F}^{cv}(v^{d,\theta},u^{d,\phi,p})(x_m) \nabla_\theta \mathcal{F}^{cv}(v^{d,\theta},u^{d,\phi,p})(x_m).
        \end{equation} \\
        \textit{Else if using QHPDE gradient}: Calculate training (negative) gradient
        \begin{equation}\label{algo:ac_qpde_critic_loss}
            \mathrm{Grad}_{\mathrm{Critic}} \leftarrow G_{\mathrm{QHPDE}}(\theta, \phi; \{x_m\}_{m=1}^M) := -\frac{1}{M}\sum_{m=1}^M \mathcal{F}^{cv}(v^{d,\theta},u^{d,\phi,p})(x_m) \nabla_\theta (-v^{d,\theta}(x_m)).
        \end{equation}
        \item Update parameters $\theta$ with gradient $\mathrm{Grad}_{\mathrm{Critic}}$ at rate $\alpha_t$ using stochastic gradient descent or ADAM.
    \end{enumerate}
\textbf{Actor step}: \\
    \begin{enumerate}
        \item Fix HGNO $v^{d,\theta}$. Sample $\{x_m\}_{m=1}^M$ in $H$ according to $\mu$.
        \item Calculate
        \begin{equation}\label{algo:ac_actor_loss}
            \mathrm{Grad}_{\mathrm{Actor}} \leftarrow G_{\mathrm{Actor}}(\theta,\phi; \{x_m\}_{m=1}^M) := -\frac{1}{M} \sum_{m=1}^M \nabla_\phi \mathcal{F}^{cv}(v^{d,\theta}, u^{d,\phi,p})(x_m)
        \end{equation}
        Update parameters $\phi$ with gradient $\mathrm{Grad}_{\mathrm{Actor}}$ at rate $\beta_t$ using stochastic gradient descent or ADAM.
    \end{enumerate}
\textbf{Update time}: $t \leftarrow t + 1$\;
\caption{{Hilbert} actor-critic algorithm for HJB equation \eqref{eq:HJB_control} on $H$}\label{algo:ac}
\end{algorithm}
\end{minipage}
\end{figure}

\section{Numerical tests}\label{sec:numerics}
We now demonstrate using the Deep Hilbert--Galerkin and Hilbert Actor-Critic Methods  to solve  Kolmogorov and HJB equations on Hilbert spaces related to infinite-dimensional stochastic analysis and control problems.

\subsection{Optimal control of deterministic and stochastic heat equations}\label{sec:stochastic_heat}
Consider the controlled stochastic partial differential equation on  $[0,2\pi]$ and time domain $[0,\infty)$ 
    \begin{align} \begin{split}
    \label{eq:stochastic_heat}
        &\frac{\partial x}{\partial t}(t,\xi) = \frac{\partial^2x}{\partial \xi^2}(t,\xi) + u_t(\xi) + \frac{\partial^2 W^Q}{\partial t \partial \xi}, \quad x(t,0) = x(t,2\pi) = 0,\quad x(0,\xi)= x_0(\xi).
    \end{split} \end{align}
    Let $\lambda, \gamma > 0$ and $x_0 \in L^2([0,2\pi]).$ The objective to minimize is
    \begin{align*}
        &J(x_0;u) = \E\brac{\int_0^\infty e^{-\gamma s}\int_0^{2\pi}\paren{x(s,\xi)-\overline{x}(\xi)}^2 + \lambda u_s(\xi)^2d\xi ds}, \quad V(x_0) = \inf_{u \in \mathcal{U}}J(x_0;u).
    \end{align*}
    More precisely, we take $H = \Xi = \tilde{U} = U = L^2([0,2\pi])$ so that the controlled dynamics represent the unique mild solution of \eqref{eq:stochastic_heat} when properly interpreted. That is, $x: [0,\infty) \to L^2([0,2\pi]) = H$ is a stochastic process. The operator $A = \frac{\partial^2}{\partial\xi^2}$ should be understood as densely-defined on the domain $D(A) = H^2([0,2\pi]) \cap H^1_0([0,2\pi]) \subset L^2([0,2\pi])$, and $A = A^* = \frac{\partial^2}{\partial\xi^2}$ generates a strictly-contractive $C^0$-semigroup $\{e^{At}\}_{t\geq0}$ on $H$ (cf.~Example \ref{ex:laplacian} (1)). Then the unique mild solution is in the form of \eqref{eq:mild} with $b(x,u) = u$ and $\sigma(x,u) = I$. {We explicitly note that \eqref{eq:stochastic_heat} does not admit a strong solution in the sense of \eqref{eq:stateSDE_H_intro} (see \cite{da1992stochastic}), making the results of \cite{castro2022} not applicable.}
    
    The relevant HJB equation for this control problem is
    \begin{align}\label{eq:HJB_stochastic_heat}
        -\gamma v + \inprod{A^*Dv}{x} + \inf_{u \in H}\curlbrac{\inprod{Dv}{u} + \modu{x-\bar{x}}^2 + \lambda \modu{u}^2} + \frac{1}{2} \mathrm{Tr}[Q D^2v] = 0, \quad x \in H
    \end{align}
    and a family of Kolmogorov equations for $J(x;u)$ is similarly defined by fixing $u \in H$.
    
    The natural orthonormal basis of $H$ to use for analyzing this problem comes from the eigenmodes of $A^*$, i.e.~$\{e_n\}_{n=1}^\infty = \{\sin(n\xi/2)/\sqrt{\pi}\}_{n=1}^\infty$. This has two advantages. First, it satisfies the condition in Theorem \ref{th:DGM_residual} that $e_n \in D(L) = D(A^*)$ for each $n \in \N$. Further, $A^*$ is a diagonal operator in this basis with eigenvalues $\{-\lambda_n\}_{n=1}^\infty = \{-n^2/4\}_{n=1}^\infty$, allowing for ease of analysis and exact computations. The HJB equation and the Kolmogorov equation for any $u \in H$ both admit analytic solutions that are classical in the sense of Definition \ref{def:classical_sol} and satisfy the other conditions of Theorem \ref{th:DGM_residual}. The statements and derivations of these solutions are provided in Section \ref{sec:stochastic_heat_appendix}.

    \subsubsection{Numerical results}\label{sec:stochastic_heat_numerics}
    
    Algorithms \ref{algo:fixed_control} and \ref{algo:ac} perform very well in the fixed-actor (Kolmogorov) and actor-critic (HJB) problems, respectively. We consider three test problems (two stochastic and one deterministic) in both their Kolmogorov (by fixing $u = 0$) and HJB forms. Each HGNO $v^{d,\theta}$ uses the first $d = 25$ elements of the basis $\{\sin (n\xi/2)/\sqrt{\pi}\}_{n=1}^\infty$ and contains a single hidden layer with 600 neurons. Problems that have a learned control (i.e.~actor) use a HGNO $u^{d,\phi,p}$ with $p = d = 25$ which similarly contains 600 neurons in the hidden layer. Training is done for $T = 2\times 10^6$ iterations. When using DHGM for the critic training gradient, the rates are $\alpha_t = \frac{5}{20+t^{0.5}}$ and $\beta_t = \frac{5}{20+t^{0.75}}.$ When using QHPDE, they are $\alpha_t = \frac{0.05}{20+t^{0.5}}$ and $\beta_t = \frac{0.05}{20+t^{0.75}}.$ In both cases, the ADAM optimizer \cite{kingma2014} is used to schedule parameter updates in Algorithms \ref{algo:fixed_control} and \ref{algo:ac}. The quantities \eqref{algo:dgm_critic_loss} -- \eqref{algo:ac_actor_loss} are calculated with $M = 2000$ randomly sampled points at each step, and each point is sampled to its first $N = 250$ basis elements\footnote{We err on the side of caution in the selection of training hyperparameters so as to best exemplify the approximation capabilities of  HGNOs architecture. Users conscious of computational efficiency may find improved economy with reduced training time or other hyperparameter tweaks.}. All computations are done with CUDA-enabled PyTorch on an Nvidia H100 GPU. The code is available at \url{https://github.com/JacksonHebner/Deep-Hilbert-Galerkin-Methods}.

    We use four accuracy evaluations -- mean error (ME), root mean square error (RMSE), and two notions of relative error (RE1 and RE2). They are defined as
\begin{small}
\begin{align}\label{eq:accuracy_metrics}
    \textrm{ME}(Q,V) &= \frac{1}{K} \sum_{j=1}^K \modu{Q(x_j) - V(x_j)} &\textrm{RMSE}(Q, V) = \sqrt{\frac{1}{K}\sum_{j=1}^K\modu{Q(x_j) - V(x_j)}^2} \\
    \textrm{RE1}(Q, V) &= \frac{1}{K}\sum_{j=1}^K\frac{\modu{Q(x_j) - V(x_j)}}{\modu{V(x_j)}}
    &\textrm{RE2}(Q, V) = \sqrt{\frac{\sum_{j=1}^K\modu{Q(x_j) - V(x_j)}^2}{\sum_{j=1}^K\modu{V(x_j)}^2}}
\end{align}
\end{small}
    where $K = 10^6$ points are sampled from $\mu$ up to the first $N = 250$ basis functions and $\modu{\cdot}$ represents either the absolute value or $L^2([0,2\pi])$-norm for critic and actor evaluations, respectively.

    Additionally, the first test problem serves as examples where Theorem \ref{th:l2_convergence} applies. We report the $L^2(\mu;H)$-norm of the PDE residual and see empirically by comparing against the RMSE that the bound in \eqref{eq:bounded_inverse} is relatively tight. It thus serves as a genuinely helpful check on how well the PDE solution has been learned. We additionally report the accuracy of the gradient and Hessian.\footnote{Due to the high computational cost of calculating the operator norms of large matrices, for these evaluations, we set $K = 10^4$.} For the Hessian, we use the following norms:
    \begin{align*}
        \nor{D^2 v}_{(4;\mathrm{Op})} = \paren{\int_H \nor{D^2v(x)}_{\g(H)}^4 \mu(dx)}^{1/4}, \quad
        \nor{D^2v}_{(4;\mu,\mu)} = \paren{\int_{H\times H} |D^2v(x)h|_H^4\mu(dx)\mu(dh)}^{1/4}.
    \end{align*}
    From the analytic solution (Appendix \ref{sec:stochastic_heat_appendix}), we observe that $D^2v(x) \in S(H)$ is a trace-class, and therefore compact, operator for any $x \in H$. In this very particular case, unlike Remark \ref{rem:density_neural_Sobolev}, universal approximation might be theoretically possible when $S(H)$ is endowed with the operator norm. However, we empirically observe that convergence in operator norm is nevertheless either significantly slower or does not occur at all. An intuitive explanation of this effect may be that the Deep Hilbert--Galerkin Method minimizes the $L^2_\mu$-norm of PDE residual, which includes a trace term but does not directly depend on the operator norm.
    
    In all the problems below, we fix $\lambda  = \gamma = 1$ and $\overline x = 0$.
    
    \paragraph{Trace class covariance noise.}
    We set $Q = \mathrm{diag}(1/n^2)_{n=1}^\infty$ to be the diagonal\footnote{We use $\mathrm{diag}$ as shorthand to indicate how a linear operator $Q : H\to H$ acts in the Dirichlet basis $\{e_n\}_{n=1}^\infty = \{\sin(n\xi/2)\}_{n=1}^\infty$. That is to say, $Q = \mathrm{diag}(a_n)_{n=1}^\infty$ is equivalent to $Qe_n = a_ne_n$ for each $n \in \N$.} covariance kernel. With $u = 0$ fixed, the mild solution of \eqref{eq:stochastic_heat} has as its stationary distribution $\mu_{\mathrm{TCC}} = \mathcal{N}\paren{0, \mathrm{diag}(1/(2n^2\lambda_n))_{n=1}^\infty}.$
    We use $\mu = \mu_{\mathrm{TCC}}$ as the reference training and evaluation measure for all four simulations in this example. In particular, since $\mu_{\mathrm{TCC}}$ is the evaluation measure, Theorem \ref{th:l2_convergence} applies when solving for $J(\cdot; 0).$

    \begin{table}[H]
    \begin{center}
    \begin{tabular}{@{}llllll@{}}
    \toprule
     Gradient & Critic ME & Critic RMSE & Critic RE1 & Critic RE2 & $\nor{\mathcal{F}v^{d,\theta}}_{L^2(H;\mu)}$ \\ \midrule
     DHGM & 0.03481 & 0.03810 & 0.02895 & 0.01326 & 0.08631\\
     QHPDE & 0.01380 & 0.01383 & 9.452e-3 & 4.827e-3 & 0.01610\\
    \bottomrule
    \end{tabular}
    \end{center}
    \caption{Algorithm \ref{algo:fixed_control} solving for $J(\cdot;0)$ with trace class covariance noise in \eqref{eq:stochastic_heat}}
    \end{table}
    
    \begin{table}[H]
    \begin{center}
    \begin{tabular}{@{}llllllllll@{}}
    \toprule
     Gradient & Critic ME & Critic RMSE & Critic RE1 & Critic RE2 & Actor ME & Actor RMSE & Actor RE1 & Actor RE2 \\ \midrule
     DHGM & 1.026 & 1.133 & 0.8448 & 0.5209 & 0.8841 & 1.060 & 2.389 & 1.481\\
     QHPDE & 8.796e-3 & 9.042e-3 & 8.739e-3 & 4.162e-3 & 5.918e-3 & 6.146e-3 & 0.01953 & 8.595e-3\\
    \bottomrule
    \end{tabular}
    \end{center}
    \caption{Algorithm \ref{algo:ac} solving for $V$ and $u^*$ with trace class covariance noise in \eqref{eq:stochastic_heat}}
    \end{table}

    \begin{table}[H]
    \begin{center}
    \begin{tabular}{@{}lllll@{}}
    \toprule
    Problem & $\nor{v - v^{d,\theta}}_{L^4(H;\mu)}$ &  $\nor{D(v^{d,\theta} - v)}_{L^4(H;\mu)}$  & $\nor{D^2(v^{d,\theta}-v)}_{(4;\mu,\mu)}$ & $\nor{D^2(v^{d,\theta}-v)}_{(4;\mathrm{Op})}$ \\ \midrule
    Kolmogorov ($u = 0$) & 9.084e-3 & 0.01762 & 0.02035 & 0.4151 \\
    HJB & 8.875e-3 & $0.01409$ & 0.01647 & $0.3480$\\
    \bottomrule
    \end{tabular}
    \end{center}
    \caption{Accuracy of derivatives learned by Algorithms \ref{algo:fixed_control} and \ref{algo:ac} (QHPDE) for trace class covariance noise in \eqref{eq:stochastic_heat}} \label{tab:derivatives}
    \end{table}

    \paragraph{1D noise}

    We set $Q \in \g(H;H)$ in \eqref{eq:stochastic_heat} to be such that $Q^{1/2}e_1  = \sum_{n=1}^\infty \frac{2\sqrt 2}{n\pi}e_n = \frac{1}{\sqrt{2\pi}}$ and $Q^{1/2} e_n = 0$ for all $n > 1$.
    Thus, the underlying stochastic evolution dynamics are infinite-dimensional, but the extrinsic noise has a low-dimensional structure in the sense that $\mathrm{Rank}(Q^{1/2}) = \mathrm{Rank}(Q) = 1$. We use $\mu = \mu_{\mathrm{WN}} = \mathcal{N}(0, \mathrm{diag}(1/{2\lambda_n})_{n=1}^\infty)$, the stationary distribution of \eqref{eq:stochastic_heat} under white noise (i.e.~covariance operator $\tilde Q = I$), as the reference training and evaluation measure for all four simulations in this example.

    \begin{table}[H]
    \begin{center}
    \begin{tabular}{@{}lllll@{}}
    \toprule
     Gradient & Critic ME & Critic RMSE & Critic RE1 & Critic RE2 \\ \midrule
     DHGM  & 4.909e-3 & 6.329e-3 & 3.738e-3 & 2.214e-3\\
     QHPDE & 4.128e-3 & 4.711e-3 & 3.164e-3 & 1.652e-3\\
    \bottomrule
    \end{tabular}
    \end{center}
    \caption{Algorithm \ref{algo:fixed_control} solving for $J(\cdot;0)$ with 1D noise in \eqref{eq:stochastic_heat}}
    \end{table}

    \begin{table}[H]
    \begin{center}
    \begin{tabular}{@{}lllllllll@{}}
    \toprule
     Gradient & Critic ME & Critic RMSE & Critic RE1 & Critic RE2 & Actor ME & Actor RMSE & Actor RE1 & Actor RE2\\ \midrule
     DHGM & 9.417e-3 & 0.06472 & 8.858e-3 & 0.02973 & 0.03513 & 0.07985 & 0.08747 & 0.1073 \\
     QHPDE & 2.205e-3 & 2.757e-3 & 1.861e-3 & 1.263e-3 & 0.01654 & 0.01689 & 0.04121 & 0.02265 \\
    \bottomrule
    \end{tabular}
    \end{center}
    \caption{Algorithm \ref{algo:ac} solving for $V$ and $u^*$ with 1D noise in \eqref{eq:stochastic_heat}}
    \end{table}
    
    \paragraph{No noise (deterministic heat evolution).}
    We set $Q = 0$. The underlying dynamics of \eqref{eq:stochastic_heat} are thus entirely deterministic and the PDE \eqref{eq:HJB_stochastic_heat} is first-order. We use $\mu = \mu_{\mathrm{TCC}}$ (given in the first example) as the reference training and evaluation measure for all four simulations in this example.

    \begin{table}[H]
    \begin{center}
    \begin{tabular}{@{}lllll@{}}
    \toprule
     Gradient & Critic ME & Critic RMSE & Critic RE1 & Critic RE2 \\ \midrule
     DHGM & 3.791e-3 & 9.171e-3 & 0.01301 & 3.719e-3 \\
     QHPDE & 4.932e-3 & 5.653e-3 & 8.545e-3 & 2.295e-3  \\
    \bottomrule
    \end{tabular}
    \end{center}
    \caption{Algorithm \ref{algo:fixed_control} solving for $J(\cdot;0)$ with no noise ($Q = 0$) in \eqref{eq:stochastic_heat}}
    \end{table}

    \begin{table}[H]
    \begin{center}
    \begin{tabular}{@{}lllllllll@{}}
    \toprule
     Gradient & Critic ME & Critic RMSE & Critic RE1 & Critic RE2 & Actor ME & Actor RMSE & Actor RE1 & Actor RE2 \\ \midrule
     DHGM & 5.049 & 6.768 & 12.19 & 3.605 & 2.312 & 2.976 & 5.503 & 3.996 \\
     QHPDE & 5.140e-3 & 5.766e-3 & 0.1551 & 3.067e-3 & 0.01609 & 0.01642 & 0.04003 & 0.02203 \\
    \bottomrule
    \end{tabular}
    \end{center}
    \caption{Algorithm \ref{algo:ac} solving for $V$ and $u^*$ with no noise ($Q = 0$) in \eqref{eq:stochastic_heat}}
    \end{table}

    Overall, Algorithm \ref{algo:fixed_control} performs well on Kolmogorov problems when utilizing either the DHGM or QHPDE gradient. Algorithm \ref{algo:ac}, on the other hand, performs dramatically better with the QHPDE gradient. It is thus safe to generally prefer the QHPDE variants of Algorithms \ref{algo:fixed_control} and \ref{algo:ac} when solving time-discounted problems in the form \eqref{eq:HJB_control}. Further, Table \ref{tab:derivatives} empirically demonstrates the QHPDE variants of Algorithms \ref{algo:fixed_control} and \ref{algo:ac} learning not just the proper function values, but also gradients and Hessians (when measured in the right norm), as we would expect.
 
\subsection{Optimal control of deterministic and  stochastic Burgers equations}\label{subsec:burgers}
    Formally, consider the stochastic partial differential equation on the space domain $[0,2\pi]$ and time domain $[0,\infty)$ with the following dynamics:
    \begin{align} 
    \label{eq:stochastic_burgers}
        &\frac{\partial x}{\partial t}(t,\xi) = \frac{\partial^2x}{\partial \xi^2}(t,\xi) + x(t,\xi)\frac{\partial x}{\partial \xi}(t,\xi) + u_t(\xi) + \frac{\partial^2 W^Q}{\partial t \partial \xi}, \quad 
        &x(t,0) = x(t,2\pi) = 0,\quad x(0,\xi)= x_0(\xi),
    \end{align}
    Let $\lambda, \gamma > 0$. The objective to minimize is
    \begin{align*}\label{eq:stochastic_burgers_objective}
        J(x_0;u) &= \E\brac{\int_0^\infty e^{-\gamma t}\int_0^{2\pi}\paren{\frac{\partial x}{\partial \xi}(s,\xi)}^2 + \lambda u_s(t,\xi)^2d\xi ds}, \quad
        V(x_0) = \inf_{u \in \mathcal{U}}J(x_0;u),
    \end{align*}
    Like before, we take $H = \Xi = \tilde{U} = U = L^2([0,2\pi]).$ The SPDE \eqref{eq:stochastic_burgers}, known as the stochastic Burgers equation, is substantially more analytically challenging than the stochastic heat equation. In particular, the map $x \mapsto x'' + xx'$ is nonlinear and does not generate a $C_0$-semigroup, meaning that we must work beyond the previous mild SPDE solution framework. 
    Nonetheless, we may define the operators $A : H^2([0,2\pi]) \cap H_0^1([0,2\pi]) \to L^2([0,2\pi])$ and $B : H^1([0,2\pi]) \to L^2([0,2\pi])$ by $Ax = x''$ and $Bx = xx'$ with domains dense in $H$ such that \eqref{eq:stochastic_burgers} can be given the meaning of an SDE on $H$:
    \begin{equation}\label{eq:burgers_sde}
        dX_t = (AX_t + B(X_t) + u_t)dt + dW^Q_t,\quad X_0 = x_0.
    \end{equation}
    The SDE \eqref{eq:burgers_sde} admits a unique solution in a qualified sense, see \cite[Theorem 4.229]{fabbri2017book}. For more analytic details, see \cite[Section 4.9.1]{fabbri2017book}.

    We are interested in a stationary analogue of the HJB equation in e.g.~\cite{fabbri2017book, daprato_debussche2000}, namely,
    \begin{align}\label{eq:stochastic_burgers_hjb}
        -\gamma v + \inprod{Dv}{Ax + B(x)} +{\modu{(-A)^{1/2}x}^2} + \inf_{u \in H}\curlbrac{\inprod{Dv}{u} + \lambda \modu{u}^2} + \frac{1}{2} \mathrm{Tr}[Q D^2v] = 0, \quad x\in H.
    \end{align}
    The HJB equation \eqref{eq:stochastic_burgers_hjb} should be understood in a formal sense and is very challenging. Notably, it contains the nonlinear unbounded operators $B(\cdot),\modu{(-A)^{1/2}(\cdot)}^2,$ in addition to $A$, and does not admit a classical solution in the sense of Definition \ref{def:classical_sol}, but only a suitable mild solution \cite{fabbri2017book, daprato_debussche2000}. Thus, this numerical example is intended as a stress-test case. Algorithm \ref{algo:ac} is {designed} to solve \eqref{eq:stochastic_burgers_hjb}, {and we provide runnable code and trained models for this on GitHub}.\footnote{Available at \url{https://github.com/JacksonHebner/Deep-Hilbert-Galerkin-Methods}.} However, we have no immediate way to evaluate their accuracy. Thus, we limit ourselves here to the resulting Kolmogorov equation for a fixed actor $u$, where accuracy may be evaluated by comparison against a Monte Carlo finite difference scheme. We note that according to this benchmark, Algorithm \ref{algo:fixed_control} succeeds in empirically learning the solutions to Kolmogorov problems beyond the framework established in the theory portion of this paper with remarkable accuracy.

    \subsubsection{Numerical results}
    We provide results for two relatively simple example problems. The HGNOs $v^{d,\theta}$ have $d = 100$ input frequencies and points in $H$ are sampled up to $N = 500$ elements in the basis $\{\sin(n\xi/2) /\sqrt{\pi}\}_{n=1}^\infty$. {Rather than a feedforward neural network, we use the LSTM-like architecture developed in \cite{sirignano2018} containing about $3.5\times$ more parameters than in the previous example.\footnote{In the language of \cite[Section 4.2]{sirignano2018}, we use $L + 1 = 4$ hidden units and $M = 128$ hidden dimensions in each unit.}} Training occurs for $T = 2\times 10^{6}$ iterations at a learning rate $\alpha_t = \frac{0.05}{20 + t^{0.5}}$ with the QHPDE gradient and an ADAM optimizer \cite{kingma2014}. Each training iteration samples $M = 2000$ points from the Hilbert space $H$ according to the distribution $\mu = \mathcal{N}(0, \mathrm{diag}(1/n^4)_{n=1}^\infty)$. All computations are done without need for numerical integration.\footnote{While $B(\cdot)$ is not linear, its action upon functions of the form $x = \sum_{n=1}^N x_n \sin(n\xi/2)/\sqrt{\pi}$ is regular enough to be computed directly in the basis $\{\sin(n\xi/2)/\sqrt{\pi}\}_{n=1}^\infty$. Namely, $$(B(x))_n = \frac{n}{8\sqrt{\pi}} \sum_{m=1}^{n-1}x_m x_{n-m} - \frac{n}{4\sqrt{\pi}}\sum_{m=1}^{N-n} x_m x_{n+m}, \quad n\in\{1,\dots,N\}.$$ We may thus exactly calculate each of the terms in \eqref{eq:stochastic_burgers_hjb} without need for spatial numerical integration.}

    To provide comparison values, we use a Monte Carlo finite difference scheme with 251 points evenly spaced on the interval $[0,2\pi]$. Each time increment is $\Delta t = 10^{-4}$ and there are $10^{5}$ steps in the finite difference scheme. The value given for any starting point $x \in H$ is the average of 50,000 Monte Carlo finite difference simulations. This discretization scheme is stable \cite{alabert2006}. However, it does add some error. This, combined with the fact that the standard error in the Monte Carlo estimate of the value function remains nontrivial even with 50,000 simulations in the stochastic example, means that the ``ground truth'' of Monte Carlo finite difference estimates should not be taken uncritically.

    In the problems below, we fix $\lambda  = \gamma = 1$, $u = 0$, and solve the Kolmogorov equation that \eqref{eq:stochastic_burgers_hjb} reduces to.

    We provide point-by-point comparisons for a selection of interesting functions as well as overall evaluations against ensembles of 100 randomly sampled functions. In particular, for the stochastic example, we sample from both the training measure $\mu$ and the empirical stationary distribution $\mu_{\mathrm{stat}}$ generated by forward-simulating the SPDE \eqref{eq:stochastic_burgers}.\footnote{This is done by representing the solution in the first $N = 500$ dimensions of the basis $\{e_n\}_{n=1}^\infty$ and forward-simulating the dynamics using $\Delta t =  10^{-5}$ for $2\times10^6$ steps, the diagonal representation of $A(\cdot)$, and the above representation of the Burgers operator $B(\cdot)$.}
    
    \paragraph{1D noise} We set $Q \in \g(H;H)$ in \eqref{eq:stochastic_burgers} to be such that $Q^{1/2}e_1  = \sum_{n=1}^\infty \frac{2\sqrt 2}{n\pi}e_n = \frac{1}{\sqrt{2\pi}}$ and $Q^{1/2} e_n = 0$ for all $n > 1$.

    \begin{table}[H]
    \begin{center}
    \begin{tabular}{@{}lllllllll@{}}
    \toprule
     Evaluation point $x \in H$ & $0$ & $\frac{\sin\xi}{\sqrt{\pi}}$ & $\frac{\sin(2\xi)}{\sqrt{\pi}}$ &$\frac{\sin(3\xi)}{\sqrt\pi}$ & $\frac{\xi}{2\pi}(2\pi-\xi)$ & $1-\cos\xi$ & $1-\cos(2\xi)$ & $\frac{1}{\sqrt{2\pi}}$\\ \midrule
     Algorithm \ref{algo:fixed_control} (QHPDE)  & $0.2248$ & $0.5709$ & $0.6634$ & $0.6633$ & $1.720$ & $2.028$ & $2.679$ & $0.4408$ \\
     Monte Carlo Finite Difference & 0.2216 & 0.5663 & 0.6668 & 0.6941 & 1.702 & 2.012 & 2.741 & 0.4417 \\ \bottomrule
    \end{tabular}
    \end{center}
    \caption{Point values of approximated PDE solution for 1D noise in \eqref{eq:stochastic_burgers}}
    \end{table}

    \begin{table}[H]
    \begin{center}
    \begin{tabular}{@{}lllll@{}}
    \toprule
     Sampling regime & Critic ME & Critic RMSE & Critic RE1 & Critic RE2 \\ \midrule
     Training measure $\mu$ & 3.427e-3 & 3.973e-3 & 8.999e-3 & 8.320e-3 \\
     Stationary measure $\mu_{\mathrm{stat}}$ & 3.430e-3 & 4.365e-3 & 8.989e-3 & 7.474e-3 \\ \bottomrule
    \end{tabular}
    \end{center}
    \caption{100 point ensemble accuracy for 1D noise in \eqref{eq:stochastic_burgers}}
    \end{table}

    \paragraph{No noise (deterministic Burgers evolution)} We set $Q = 0$, thus reducing \eqref{eq:stochastic_burgers} to a deterministic evolution and \eqref{eq:stochastic_burgers_hjb} to a first-order PDE on $H$.

    \begin{table}[H]
    \begin{center}
    \begin{tabular}{@{}lllllllll@{}}
    \toprule
     Evaluation point $x \in H$ & $0$ & $\frac{\sin\xi}{\sqrt{\pi}}$ & $\frac{\sin(2\xi)}{\sqrt{\pi}}$ & $\frac{\sin(3\xi)}{\sqrt{\pi}}$& $\frac{\xi}{2\pi}(2\pi-\xi)$ & $1-\cos\xi$ & $1-\cos(2\xi)$ & $\frac{1}{\sqrt{2\pi}}$ \\ \midrule
     Algorithm \ref{algo:fixed_control} (QHPDE) & 9.242e-3 & 0.3434 & 0.4512 & 0.4636 & 1.475 & 1.781 & 2.428 & 0.2213 \\
     Monte Carlo Finite Difference & 0.000 & 0.3338 & 0.4446 & 0.4739 & 1.456 & 1.755 & 2.504 & 0.2184 \\ \bottomrule
    \end{tabular}
    \end{center}
    \caption{Point values of approximated PDE solution for no noise ($Q = 0$) in \eqref{eq:stochastic_burgers}}
    \end{table}

    \begin{table}[H]
    \begin{center}
    \begin{tabular}{@{}lllll@{}}
    \toprule
     Sampling regime & Critic ME & Critic RMSE & Critic RE1 & Critic RE2 \\ \midrule
     Training measure $\mu$ & 9.227e-3 & 9.295e-3 & 0.1604 & 0.03145 \\
     \bottomrule
    \end{tabular}
    \end{center}
    \caption{100 point ensemble accuracy for no noise ($Q = 0$) in \eqref{eq:stochastic_burgers}}
    \end{table}

    The performance of Algorithm \ref{algo:fixed_control} in both the stochastic and deterministic cases is satisfactory. This is best exemplified by the accuracy of the trained models on the 100 point ensembles sampled from the training measure $\mu$ and stationary distribution $\mu_{\mathrm{stat}}$. It is also noteworthy that the value at the point $x(\xi) = \frac{1}{\sqrt{2\pi}}$, which one might expect to be difficult because $x \in H$ falls outside the domain of all the unbounded operators in \eqref{eq:stochastic_burgers}, is learned quite well.
    
\appendix
\section{Appendix}
\subsection{Notation}\label{subsec:notation}
\paragraph{Basic notation.} Throughout the paper $C>0$ indicates a constant, which may change from line to line.

 Given a Banach space $X$ we will denote by $|\cdot|_X$, or simply $|\cdot|$, its norm. 
If $H$ is a Hilbert space, we denote the scalar product and the induced norm, respectively, by 
$\langle \cdot ,\cdot \rangle,$ $ |\cdot|:=(\langle \cdot ,\cdot \rangle)^{1/2}.$
Throughout the whole paper we identify $H$ with its dual $H^*$.
If $X,Y$ are Banach spaces, we denote by $\mathcal L(X,Y)$ the Banach space of linear bounded operators from $X$ to $Y$, endowed with the operator norm $\|L\| :=\sup_{|x|_X=1}|Lx|_Y$ for $ L  \in \mathcal L(X,Y)$.
When $Y=X$ we simply write $\mathcal{L}(X)=\mathcal{L}(X,X).$

Let $H$ be a Hilbert space. We denote by $S(H)$ the space of self-adjoint operators in $\mathcal{L}(H)$, endowed with the norm in $\mathcal L(H)$. We define $S_M:=\{Y\in S(H):\|Y\|\leq M\}$.
If $L \in \mathcal L(H)$ is such that $\langle L x, x\rangle_X \geq 0$ for every $x \in H$, it is called positive or non-negative. If $L \in \mathcal L(H)$ is such that $\langle L x, x\rangle_X > 0$ for every $x \in H$, it is called strictly positive. Let $Y$ be another Hilbert space.
If $H,Y$ are separable, we denote by $\mathcal{L}_1(H,Y)$ the Banach space of trace-class operators endowed with the norm  
$\|L\|_{\mathcal{L}_1(H,Y)}:=\inf \left \{\sum_{i \in \mathbb N} |a_i|_H |b_i|_Y : \{ a_i \}\subset H, \{b_i \}\subset  Y , Lx=\sum_{i \in \mathbb N} b_i  \langle a_i,x \rangle_H, \forall x\in H  \right \} ,  $ $ L \in \mathcal{L}_1(H,Y).$ 
We denote by  $\mathcal{L}_2(H,Y)$ the Hilbert space of Hilbert--Schmidt operators from $H$ to $Y$. The scalar product in $\mathcal{L}_2(H,Y)$ and its induced norm are respectively given by
$\langle L,T \rangle_{\mathcal{L}_2(H,Y)}:= \sum_{k \in \mathbb N} \langle Le_k,Te_k \rangle$ and $   \|L\|_{\mathcal{L}_2(H,Y)}:=\left (\sum_{k \in \mathbb N} |L  e_k|^2 \right)^{1/2} $ for $L,T \in \mathcal{L}_2(H,Y),$
where  $\{ e_k\}$  is any orthonormal basis of $H$.
  When $Y=H$, we simply write $\mathcal{L}_1(H)=\mathcal{L}_1(H,H)$, $\mathcal{L}_2(H)=\mathcal{L}_2(H,H)$. 
  We denote by $\mathcal{L}^+_1(H)\subset \mathcal{L}_1(H)$ the subspace of positive operators. If $L\in \mathcal{L}_1(H)$ we can define its trace by  $\operatorname{Tr}(T):=\sum_{k \in \mathbb{N}}\left\langle T e_k, e_k\right\rangle_H$, where $\operatorname{Tr}(T)$ is independent of the   orthonormal basis $\{ e_k\}$.

\begin{definition}[Coordinate, embedding, and projection operators]\label{def:embedding_op}
    Given an orthonormal basis $\{ e_i\}_{i\in\mathbb N}$ of $H$, we have $x=\sum_{i\in \mathbb N}x_ie_i$, where $x_i=\langle x,e_i \rangle$. For $d\in \mathbb N,$ we
define  the coordinate operator
        $\mathcal{E}_d^{H} \in \mathcal L( H , \R^d)$, $ \mathcal{E}_d^{H}(x) =  (\inprod{x}{e_i})_{i=1}^d,$ the  embedding operator
        $\widehat{\mathcal E}_d^{H}\in \mathcal L(\R^d , H )$, $\widehat{\mathcal{E}}_d^{H}((x_i)_{i=1}^d) = \sum_{i=1}^d x_i e_i,$ and the  projection operator $P_d^H \in \mathcal L( H  )$, $ P_d^H(x) = \sum_{i=1}^d x_i e_i=\widehat{\mathcal{E}}_d^{H}(\mathcal{E}_d^{H} (x)).$
    When there is no ambiguity on the Hilbert space $H$, we will denote them simply by $\mathcal{E}_d,\widehat{\mathcal E}_d,P_d$.
\end{definition}

\paragraph{Spaces $C^k(O;Y)$.}
  Let $X,Y$ be Banach spaces and $O$ be an open subset of  $X$ endowed with the induced topology. For $k \in \mathbb{N}$, we denote by $C^k(O; Y)$ the set of all functions $\phi: X \rightarrow Y$ which are  $k$-times continuously Fréchet differentiable on $X$. Hence, $D^k\phi\in C^0(O; \mathcal L^k(X;Y))$, where $\mathcal L^k(X;Y)$ is the space of $k$-multilinear bounded operators from $X$ to $Y$. When $X=H$ is a Hilbert space and $Y=\mathbb{R}$, then thanks to the Riesz Representation Theorem, we can identify the linear functional $D \phi(\bar{x})$ with the element $y \in H$ such that $\langle y, h\rangle=D\phi(x)h$ for all $h \in X$. The second-order derivative $D^2 u(x)$ can be identified with a symmetric bilinear form in $\mathcal{L}^2(H, \mathbb{R})$, so we can identify $D^2 u(x)$ with the unique $T \in \mathcal{S}(H)$ having the property that
$
\langle T h_1, h_2\rangle=D^2 u(x)(h_1, h_2) $, for all $h_1, h_2 \in H .
$
We will  denote $T:=D^2 u(x)$. Therefore we will see $D^2\phi\in C^0(O,S(H))$. The subscript $_b$ (such as $C^k_b$) will indicate that all derivatives are bounded up to order $k$. We will denote $C(O,Y)=C^0(O,Y)$.

\paragraph{$L^p$ Spaces.}
Let $(\Omega, \mathcal F, \mu)$ be a measure space with positive measure $\mu$ and  $p \in[1,+\infty)$. We define 
    $\|\mu\|_{p}:= \left(\int_\Omega |x|^{p}\mu(dx)\right)^{1/p}$
for $p>0$. If $Y$ is a separable Banach space, we denote by $L^p(\Omega; Y)$ the set  of equivalence classes, with respect to the  equivalence relation $=$ a.e., of  $\mathcal F / \mathcal B(Y)$-measurable functions $ f: \Omega \rightarrow Y$ such that $\int_\Omega |f|_Y^p d \mu<+\infty$, where $\mathcal B(Y)$  denotes the Borel $\sigma$-algebra on $Y$.
$L^p(\Omega;Y;\mu)$ is a Banach space with norm
$\|f\|_{L^p(\Omega;Y;\mu)}:=\left[\int_{\Omega}|f|^p d \mu\right]^{1 / p}.$
If $Y$ is a separable Hilbert space,  $L^2(\Omega; Y;\mu)$ is a separable Hilbert space with scalar product
$\langle f,g \rangle_{L^2(\Omega;Y;\mu)}=\int_\Omega \langle f,g \rangle d \mu.
$
For $f \in L^1(\Omega; Y)$,  its Bochner integral   is well defined and denoted by $\int_\Omega f d \mu$. If $(\Omega, \mathcal F, \mathbb P)$ is a probability space, then for $X \in L^1(\Omega, \mathcal F, \mathbb P)$, we write $\mathbb E[X]$ in place of $\int_\Omega X d \mathbb P$. When $Y=\mathbb R$ we  also use the notation $L^p(\Omega; \mathbb R)=L^p(\Omega)$.

\paragraph{Sobolev spaces.} For $s=k+\alpha$ with $k \in \mathbb{N}_0$ and $\alpha \in(0,1)$, we denote by $W^{s, p}\left(\Omega ; \mathbb{R}^n\right)$ the fractional Sobolev space of functions $u \in W^{k, p}\left(\Omega ; \mathbb{R}^n\right)$ such that
$
\left[D^k u\right]_{W^{\alpha, p}(\Omega)}^p:=\int_{\Omega} \int_{\Omega}\frac{\left|D^k u(x)-D^k u(y)\right|^p}{|x-y|^{d+\alpha p}} d x d y<\infty .
$
When $p=2$, we write $H^s\left(\Omega ; \mathbb{R}^n\right)=W^{s, 2}\left(\Omega ; \mathbb{R}^n\right)$. We denote $ H_0^s\left(\Omega ; \mathbb{R}^n\right):={\overline{C_c^{\infty}\left(\Omega ; \mathbb{R}^n\right)}}^{H^s\left(\Omega ; \mathbb{R}^n\right)}$, where $C_c^{\infty}\left(\Omega ; \mathbb{R}^n\right)$ are functions in $C^{\infty}\left(\Omega ; \mathbb{R}^n\right)$ with compact support. 
When $\Omega=(0,r)^d$, $>0$, we define
$
H_{\mathrm{per}}^s(\Omega;\mathbb R^n)
:=\{u\in H^s(\Omega;\mathbb R^n): u \text{ is periodic in each variable}\}.$ 
As usual, when $n=1$, we drop $\mathbb R^n$ in the notation, e.g.~we write $W^{s, p}\left(\Omega \right)=W^{s, p}\left(\Omega ; \mathbb{R}\right)$.
Given a  measure $\mu$ on $\mathbb R^d$, we  denote by $\|\cdot\|_{W^{k,w}(\mathbb R^d;\mathbb R^p;\mu)}$ the corresponding Sobolev norm for functions in $C^k(\mathbb R^d;\mathbb R^p)$. When $w=2$, we write $\|\cdot\|_{H^{k}(\mathbb R^d;\mathbb R^p;\mu)}$. If $\mu$ is the the Lebesgue measure, we drop it in the notation. If $d\mu=\nu dx$ for some weight function $\nu$, we use the subscript $_\nu$ in the Sobolev spaces and norms.

Let $H$ be a Hilbert space and $\mu$ be a Borel probability measure on $H$. The following standard Sobolev norm on $C^2(H)$ will be too strong for some of our convergence results:
\begin{equation}\label{eq:sobolev_norm}
    \|v\|_{\mathcal W^{2,w}_{\mu}(H)}:=\left(\max\left[\int_H |v(x)|^w\mu(dx), \int_H |D v(x)|^w\mu(dx),   \int_{H} \|D^2v(x)\|^w\mu(dx) \right] \right)^{1/w}.
\end{equation}
To this purpose, let $\mu,\mu'$ be Borel probability measures on $H$ such that  $\|\mu\|_{q},\|\mu'\|_{w} < \infty$ for given $q,w\geq 1$. Then, we define  the following subspace\footnote{We index $C^{2}_{w,q}$ with two indexes $w,q$ (instead of just one) to hint at $L^w$-integrability for a measure $\|\mu\|_{q}< \infty$.} of $C^2(H)$  
\begin{equation}\label{eq:growth_v_UATsobolev} C^{2}_{w,q}(H):=\{v\in C^2(H):\exists C=C(v)>0 :|v(x)|^w+|Dv(x)|^w+\nor{D^2v(x)}^w\leq C(1+|x|^q), \forall x \in H\}\end{equation}
 and we endow it with the following Sobolev-type norm (up to quotients by an appropriate equivalence class)
\begin{equation}\label{eq:sobolev_seminorm}
    \|v\|_{\mathcal  W^{2,w}_{\mu,\mu'}(H)}:=\left(\max\left[\int_H |v(x)|^w\mu(dx), \int_H |D v(x)|^w\mu(dx),   \int_{H\times H} |D^2v(x)h|^w\mu(dx)\mu'(dh) \right] \right)^{1/w}.
\end{equation}
If $\mu,\mu'$ have full support on $H$ then $\|\cdot\|_{\mathcal  W^{2,w}_{\mu,\mu'}(H)}$ is a norm on $C^{2}_{w,q}(H)$\footnote{That is, if $v=v',Dv=Dv',D^2v(\cdot)(\cdot)=D^2v'(\cdot)(\cdot)$, $\mu\otimes\mu'-$a.e., by continuity, equality holds everywhere.}; however, the resulting normed space is not complete and a completion may not behave as well as standard Sobolev spaces; however, a normed space will be enough for us to state density results, e.g, in Theorem \ref{th:UAT_L2_noL}.
This definition allows for different $\mu,\mu'$ for measuring {norms of Hessians in different ways in the variables $x,h$, respectively.
Note that  $$\|v\|_{\mathcal  W^{2,w}_{\mu,\mu'}(H)}\leq \max(1,\|\mu'\|_w)  \  \|v\|_{\mathcal  W^{2,w}_{\mu}(H)}<\infty,\quad \forall v\in C^{2}_{w,q}(H).$$ 

\subsection{Compact-open topologies}
We will often consider the compact-open topology for locally convex topological vector spaces, so we give a general definition and then specialize it when needed. For a short introduction to this topology, see \cite[Appendix B.2]{schmeding2022}.

Let $W$ be a locally convex topological vector space whose topology is generated by a family of seminorms  $\left\{\rho_i\right\}_{i \in I}$. A net $w^\alpha\to w$ if and only if $\rho_{i}(w^\alpha-w)\to 0$ for all $i\in I.$ The family  $\left\{\rho_i\right\}_{i \in I}$ is called directed (or fundamental) if, for all $i_1, \ldots, i_n \in I$, $n\in \mathbb N$ there is a $\gamma \in I$ and $C>0$ so that
$
\max(\rho_{i_k}(x),k=1,\ldots,n)\leq C \rho_\gamma(x), $ for all $x \in Z$, or equivalently, by induction, restricting $n=2$. The existence of such a family  can be assumed without loss of generality \cite[p. 126]{reed_simon}. If $\left\{\rho_i\right\}_{i \in I}$ is a directed family, then $\left\{\left\{x \in W:\rho_i(x)<\varepsilon\right\} : i \in I, \varepsilon>0\right\}$ is a neighborhood base at $0$ and $\left\{\left\{x\in W :\rho_i(x-w)<\varepsilon\right\} : i \in I, \varepsilon>0\right\}$ is a neighborhood base at $w\in W$.
In this case, a subset $\mathcal G$ is dense in $W$ if and only if  for all $w\in W$, $i\in I$, $\epsilon>0$, there exists $g\in \mathcal G$ such that $\rho_{i}(g-w)<\epsilon.$ Let $k\in \mathbb N$ and $W=W_1\times \ldots W_k$, where each $W_j$, $j\leq k$ is a locally convex topological vector space whose topology is generated by a family of directed seminorms $\{\rho_{i_j}^j\}_{i_j \in I_j}$.  We endow $W$ with the product topology generated by the family of directed seminorms $\{\rho_i\}_{i \in I}$, where $\rho_i(w):=\max(\rho_{i_j}^j(w_j),j\leq k)$, $w=(w_1,\ldots,w_k)$, $i=(i_1,\ldots,i_k)\in I:=I_1\times\ldots \times I_k$.
\begin{definition}[Compact-open topology]\label{def:compact-op}Let $X$ be a topological space and $Y$ be a locally convex Hausdorff topological vector space, whose topology is generated by a directed  family of seminorms  $\left\{q_i\right\}_{i \in I}$.
Denote the space of continuous functions from $X$ to $Y$ by $C^0(X;Y)$. 
We define the compact-open topology on $C^0(X;Y)$, as   the  locally convex (Hausdorf) topology generated by the directed family\footnote{To see that $\mathcal P$ is directed, define 
$
\bar K:=K \cup K', 
$
for compacts $K,K'.$
Choose $\gamma$ and $C>0$ from directedness of $\left\{q_i\right\}$. Then, e.g.,
$
p_{K, i_1}(f) =\sup _{x \in K} q_{i_1}(f(x)) \leq  \sup _{x \in \bar K}\left(q_{i_1}(f(x))+q_{i_2}(f(x))\right)\leq C \sup _{x \in \bar K}  q_\gamma(f(x))  =
C p_{\bar K, \gamma}(f)
$.} of seminorms $\mathcal P=\{p_{K,i}:K\subset X\text { compact},i\in I\}$, where 
$p_{K,i}(f):=\sup _{x \in K} q_i(f(x))$, $f\in C^0(X;Y)$. A net $f^\alpha\to f$ in the compact-open topology if and only if $p_{K,i}(f^\alpha-f)\to 0$, for all $K\subset X$ compact, for all $i\in I.$ A set $\mathcal G$ is dense in the compact-open topology of $C^0(X;Y)$ if and only if,  for all $f\in C^0(X;Y)$,  $K\subset X$ compact, $i\in I$, $\epsilon>0$, there exists $g\in \mathcal G$ such that $p_{K,i}(g-f)<\epsilon.$
\end{definition}

Now let $H$ be a separable Hilbert space and recall the notation $S_M=\{Z\in S(H):\|Z\|\leq M\}$.
{\begin{definition}[Topologies on $S(H)$]\label{rem:compact_open_LX}
   In parts of this paper, we endow $\mathcal{L}(H)$, (resp. $S(H)$) with the compact-open topology\footnote{Observe that  $\mathcal{L}(H)$ and  $S(H)$ are subsets of $C^0(H;H)$, and so can inherit the compact-open topology defined earlier.}, i.e.~the locally convex topology generated by  the directed family of seminorms $\mathcal{S}^{co}:=\left\{s^{co}_{K}:K \subset H\right.$ compacts$\},$ with $s^{co}_{K}(Z)=\sup _{h \in K}|Z h|$ for $Z\in \mathcal{L}(H) $ (resp. $S(H)$). Comparing with the strong operator topology, i.e.~the locally convex  topology generated by  the  family of seminorms $\mathcal S^{so}=\{s^{so}_{h},h\in H\},$ with $s^{so}_{h}(Z)=|Z h|$, $Z\in \mathcal{L}(H)$ (resp. $S(H)$), the compact-open topology is stronger.  However, both  topologies are weaker than the topology induced by the operator norm.
   
If $H$ is infinite-dimensional, both these topologies are not sequential. 
However,  the  restrictions of  the compact-open topology and the strong operator topology to bounded subsets of $\mathcal{L}(H)$ (resp. $S(H)$) coincide, since, by  Lemma \ref{lemma1:uniformcompactsfromstrongandequibounded}, they have the same converging nets. If $H$ is separable, when restricted to bounded subsets of $\mathcal{L}(H)$, (resp. $S(H)$), both the compact-open topology and the strong operator topology are metrizable and therefore sequential  \cite[p.35]{simon}, \cite[Section 2]{weaver2004}: Choose a countable dense set $\left(h_k\right)_{k \geq 1}$ in the unit ball $\{h \in H:|h| \leq 1\}$ and define
$
d_M(T, Z):=\sum_{k=1}^{\infty} 2^{-k} |(T-Z) h_k| ,
$
$T, Z \in S_M$,
then $d_M$ is a standard metric on $S_M$ that generates the strong operator topology and the compact-open topology restricted to $S_M$ \cite[Section 2]{weaver2004}.

A net $Z_\alpha \to Z\in \mathcal L(H)$  in the compact-open topology iff $\sup_{x\in K}|(Z_\alpha -Z )h|\to 0$ for all $K\subset H$ compact; $Z_\alpha \to Z$  in the strong operator topology iff $|(Z_\alpha -Z )h|\to 0$ for all $h \in  H$.
\end{definition}}
\begin{definition}[Topologies on $C^0(H;S(H))$]\label{rem:compact_open_C_SH}
    Consider the space $C^0(H;S(H))$. The standard compact-open topology on this space is the one  generated by  the directed family of seminorms $\mathcal{P}^{co}:=\{ p^{co}_{K}: K \subset H\text{ compact}\},$ with $ p^{co}_{K}(Z)=\sup _{x \in K}\|Z(x) \|$ for $Z\in C^0(H ; S(H)) $. However, this topology will be too strong for some of our convergence results.
    
    To our purposes, we may endow $C^0(H;S(H))$ with the (weaker) compact-open topology, where  $S(H)$ is also endowed with the compact-open topology (see Definition \ref{rem:compact_open_LX}. By Definition \ref{def:compact-op}, the resulting topology is the one generated by  the directed family of seminorms $\mathcal{P}^{COCO}:=\{p^{COCO}_{K, K'}: K, K' \subset H]\text{ compact}\}$ with $p^{COCO}_{K, K'}(Z)=\sup _{x \in K}s^{CO}_{K'}(Z(x))=\sup _{x \in K, h \in K'}|Z(x) h|$ for $Z\in C^0(H ; S(H))$. Equivalently, this topology can be constructed by considering the subspace compact-open topology of $ C^0(H;S(H))\subset C^0(H\times H;H)$.
    \end{definition}
    We  extend the previous idea to define a  compact-open topology on $C^2(H)$ differing from the standard one generated by the operator norm on $S(H)$. 
\begin{definition}[Topologies on $C^2(H)$]\label{rem:compact_open_C2}
The usual compact-open topology on $C^2(H)$ is defined by the one generated by  the directed family of seminorms $ {\mathbf {\mathcal {\mathbf P}}}^{CO}=\{{\mathbf p}_{K}^{CO}=\max( p^{0,CO}_{K}, p^{1,CO}_{K}, p^{2,CO}_{K}): K \subset H \text{ compact}\}$, where $ p^{0,CO}_{K}(v)=\sup _{x \in K}|v(x)|$, $p^{1,CO}_{K}(v)= \sup _{x \in K}|D v(x)|$, and $p^{2,CO}_{K}(v)= \sup _{x \in K}\left\|D^2 v(x)\right\|$. However, this will be too strong for our needs. 

 For our purposes, we will often endow $C^2(H)$ with following weaker compact-open topology generated by  the directed family of seminorms ${\mathbf {\mathcal {\mathbf P}}}^{COCO}=\{\mathbf p^{COCO}_{K,K'}=\max(p^{0,CO}_{K},p^{1,CO}_{K},p^{2,COCO}_{K,K'}): K, K' \subset H \text{ compact}\}$, with $p^{0,CO}_{K}(v)=\sup _{x \in K}|v(x)|$, $p^{1,CO}_{K}(v)= \sup _{x \in K}|D v(x)|$, and $p^{2,COCO}_{K,K'}(v)= \sup _{x \in K, h \in K'}\left|D^2 v(x) h\right|$.
\end{definition}
   Notice that the topology generated by seminorms $\mathcal{P}^{COCO}$ is stronger than the topology described in \cite[Remark 2.2 (c)]{schmeding2022} (with $k=0,1,2$ and $M=E=H, N=\mathbb R$ there). The one there, suitably adapted to $C^2(H)$, would correspond to the weaker topology described by the maximum of seminorms $p^{0,CO}_{K}(v)=\sup _{x \in K}|v(x)|$, $p^{1,COCO}_{K,K'}(v)= \sup _{x \in K,h'\in K'}|\langle D v(x),h'\rangle |$, and $p^{2,COCOCO}_{K,K',K''}(v)= \sup _{x \in K, h' \in K',h''\in K''}\left|\langle D^2 v(x) h',h''\rangle\right|$. The key difference is that, since we are on Hilbert spaces,  we use Riesz theorem for identifying the gradient and the Hessians, respectively, as elements of $H$ and $S(H)$ and therefore we can use directly the norm in $H$ and not the absolute value of scalar products. This  yields a stronger topology, which is compatible with Assumptions \ref{ass:F_estimates}, \ref{ass:F_seq_cont}.
\begin{small}
\subsection{Functional analysis facts}
\begin{lemma}\label{lemma1:uniformcompactsfromstrongandequibounded}
    Let $X, Y$ be Banach spaces and $\{T_\alpha\}\subset  L(X, Y)$ be a net such that\footnote{Of course, by the Banach--Steinhaus theorem it is enough to check that $\sup_{\alpha }|T_\alpha h|<\infty$, for any $h\in H.$} $\sup_{\alpha }\|T_\alpha \|<\infty$. If $T_\alpha \to T\in L(X, Y)$ in the strong operator topology, then $T_\alpha  \to T$ converges in the compact-open topology.

Therefore, by the Banach--Steinhaus theorem, if a sequence $\{T_n\}\subset  L(X, Y)$,  $T_n \to T \in L(X, Y)$ in the strong operator topology, then $T_n \to T$ in the compact-open topology.
\end{lemma}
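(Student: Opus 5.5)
The plan is the classical equicontinuity-plus-total-boundedness argument. Let $M:=\sup_\alpha\|T_\alpha\|$; since $T$ is bounded, we may enlarge $M$ so that also $\|T\|\le M$. Fix a compact $K\subset X$ and $\epsilon>0$. We must show that $\sup_{h\in K}|(T_\alpha-T)h|_Y<\epsilon$ for all $\alpha$ sufficiently far along the directed set.

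\textbf{Step 1.} Use total boundedness of $K$. Choose finitely many points $h_1,\dots,h_m\in K$ such that every $h\in K$ satisfies $|h-h_j|<\epsilon/(4M)$ for some $j$ (if $M=0$ there is nothing to prove).

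\textbf{Step 2.} Use strong convergence at finitely many points. For each $j$ there is an index $\alpha_j$ with $|(T_\alpha-T)h_j|<\epsilon/2$ for all $\alpha\succeq\alpha_j$. Since the index set is directed, pick $\alpha_0$ dominating $\alpha_1,\dots,\alpha_m$. This is the only place the net structure enters, and it is handled because only finitely many indices need to be dominated.

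\textbf{Step 3.} Conclude with a triangle inequality. For $\alpha\succeq\alpha_0$ and $h\in K$, with $h_j$ chosen as in Step 1,
\[
|(T_\alpha-T)h|\le \|T_\alpha-T\|\,|h-h_j|+|(T_\alpha-T)h_j|< 2M\cdot\frac{\epsilon}{4M}+\frac{\epsilon}{2}=\epsilon .
\]
Taking the supremum over $h\in K$ gives convergence in the seminorm $s^{co}_K$. Since $K$ was an arbitrary compact set, $T_\alpha\to T$ in the compact-open topology.

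\textbf{Sequence statement.} If $T_n\to T$ strongly, then $\sup_n|T_nh|<\infty$ for every $h$, because convergent sequences are bounded. The Banach--Steinhaus theorem then gives $\sup_n\|T_n\|<\infty$, and the first part applies. This step fails for general nets, since a convergent net need not be eventually bounded; that is why the uniform bound on the net is part of the hypothesis.

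The main subtlety is therefore not in the estimate itself but in this equiboundedness. The uniform bound $M$ is exactly what lets the estimate at finitely many points propagate to all of $K$.
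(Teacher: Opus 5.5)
Your proposal is correct and follows the paper's own argument: the paper observes that the net is equicontinuous (via the uniform bound $|T_\alpha(x-y)|\le C|x-y|$) and pointwise convergent, hence uniformly convergent on compacts, and your finite $\epsilon$-net plus triangle-inequality estimate is simply the explicit proof of that standard fact. Your treatment of the sequence case via Banach--Steinhaus also matches the paper's.
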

\begin{proof}
    The net $T_\alpha$ is equicontinuous (since $|T_\alpha(x-y)|\leq C|x-y|$, $C=\sup_{\alpha}\|T_\alpha\|$) and pointwise convergent. Therefore it converges uniformly on compacts. The first claim follows. The second follows from the first, since by the Banach--Steinhaus theorem, we have $\sup_{n}\|T_n\|<\infty$.
\end{proof}
\begin{remark}\label{rem:relatively-compact_Pd-I}
    For any compact $K \subset H$, the set $\bigcup_{d \in\mathbb N} P_d(K)={\{P_d(x),x \in K, d \in \mathbb N\}}$ is relatively compact, i.e.~its closure, denoted $\tilde K := \overline{\bigcup_{d=1}^\infty P_d(K)}$, is compact (e.g.~see \cite[Lemma B.77]{fabbri2017book}). Note that $\tilde K \supset K$.
\end{remark}
A related result is the following:
\begin{lemma}\label{rem:relatively-compact_metric}Let $(X,d)$ be a metric space and  $K\subset X$ be a compact set. Let $f^n,f:X\to X$ continuous  such that $f^n\to f$ uniformly on $K$. Then:
\begin{enumerate}[leftmargin=*,nosep]
    \item  The set $C=\bigcup_{n \in\mathbb N} f^n(K)={\{f^n(x),x \in K, n \in \mathbb N\}}$ is relatively compact, i.e.~its closure $\overline C$ is compact. Clearly, $\overline C \supset f(K)$
    \item Let $(Y,m)$ be another metric space and let $F:X\to Y$ continuous. Then, for all $\epsilon>0$, there exists $N$ such that  $\sup_{x\in K}m(F(f^n(x)),F(f(x)))<\epsilon$, for all $n\geq N$.
\end{enumerate}
\end{lemma}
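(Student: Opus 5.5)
For part 1, I will argue by sequential compactness. Take any sequence $y_k\in C$ and write $y_k=f^{n_k}(x_k)$ with $x_k\in K$. If the indices $n_k$ take only finitely many values, pass to a subsequence on which $n_k\equiv n$ is constant. Then $y_k=f^n(x_k)$ lies in the compact set $f^n(K)$, and a further subsequence converges. Otherwise, pass to a subsequence with $n_k\to\infty$. By compactness of $K$, extract a further subsequence with $x_k\to x\in K$. Then
\[
d(f^{n_k}(x_k),f(x))\le \sup_{z\in K}d(f^{n_k}(z),f(z))+d(f(x_k),f(x)).
\]
The first term tends to zero by uniform convergence on $K$ and the second by continuity of $f$. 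So $y_k\to f(x)$, and every sequence in $C$ has a subsequence converging in $X$. It remains to deduce that $\overline C$ is compact. In a metric space, a sequence in $\overline C$ can be approximated within $1/k$ by points of $C$, and a diagonal argument then shows $\overline C$ is sequentially compact, hence compact. The inclusion $f(K)\subset\overline C$ follows by taking $x_k\equiv x$ and letting $n\to\infty$.

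For part 2, I will use part 1 to confine everything to one compact set. The set $\overline C\cup f(K)=\overline C$ is compact, so $F$ is uniformly continuous on it by Heine--Cantor. Given $\epsilon>0$, choose $\delta>0$ such that $m(F(a),F(b))<\epsilon$ whenever $a,b\in\overline C$ and $d(a,b)<\delta$. By uniform convergence, choose $N$ with $\sup_{x\in K}d(f^n(x),f(x))<\delta$ for $n\ge N$. Since both $f^n(x)$ and $f(x)$ lie in $\overline C$, we get $\sup_{x\in K}m(F(f^n(x)),F(f(x)))\le\epsilon$ for all $n\ge N$; starting from $\epsilon/2$ gives the strict inequality.

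The only slightly delicate step is the dichotomy in part 1 between bounded and unbounded indices, together with the passage from subsequences in $C$ to compactness of the closure. Everything else is standard.
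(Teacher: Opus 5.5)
Your proposal is correct and follows essentially the same route as the paper: the same bounded/unbounded dichotomy on the indices $n_k$ in part 1, and uniform continuity of $F$ on the compact set $\overline C$ combined with uniform convergence on $K$ in part 2. You also make explicit the step from relative sequential compactness of $C$ to compactness of $\overline C$, by approximating points of $\overline C$ with points of $C$; the paper leaves this step implicit.
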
\begin{proof}
Claim 1. Since metric spaces are sequential spaces, we show that $C$ is  relatively sequentially compact.  Let $\left\{y_j\right\}_{j\in \mathbb N}\subset \bigcup_{n \in\mathbb N} f^n(K)$, i.e.~we have $y_j=f^{n_j} ( x_j)$ for  $\{x_j\}\subset K$ and $\{n_j\}\subset \mathbb N$. From compactness of $K$, there exists a subsequence $j_k$ and $\bar{x} \in K$ such that
$
x_{j_k} \rightarrow \bar{x}, \text { as } k \rightarrow+\infty .
$
Next we have two cases.

(i) If there exists $\bar n \in \mathbb N$ such that $n_{j_k} \leq \bar n$, for all $j_k$,  we can suppose $n_{j_k}=\bar n$, eventually in $k$;  then, by continuity,
$
\lim _{k \rightarrow+\infty} y^{j_k}=\lim _{k \rightarrow+\infty} f^{n_{j_k}} (x_{j_k})=\lim _{k \rightarrow+\infty} f^{\bar{n}} (x_{j_k})=f^{\bar{n}} (\bar{x} ).
$

(ii) Otherwise, let us suppose $\lim _{k \rightarrow+\infty} n_{j_k}=+\infty$. Then by uniform convergence of $f^n$ over $K$ and continuity of $f$, we have 
$
|y^{j_k}-f(\bar x)| =\left|f^{n_{j_k}}( x_{j_k})-f(\bar{x})\right| \leq\left|f^{n_{j_k}}( x_{j_k})-f( x_{j_k})\right|+\left|f( x_{j_k}) -f(\bar{x})\right|    \xrightarrow{k \rightarrow+\infty} 0  .
$
The claim of 1 follows.

Claim 2. 
Let $\epsilon>0$. By uniform continuity of $F$ on the compact set $\overline C\supset f(K)$, there exists $\delta>0$ such that if $y,z\in \overline C$ are such that  $d(y,z)<\delta$, then $m(F(y),F(z))<\epsilon$. By uniform convergence of  $f^n\to f$ over $K$, there exists $N$ such that for all $n\geq N$, we have $d(f^n(x),f(x))<\delta$, for all $x\in K$. Then $m(F(f^n(x)),F(f(x)))<\epsilon$, for all $n\geq N,$ $x\in K.$ 
\end{proof}
\subsection{Linear unbounded operators and $C_0$-semigroups}\label{app:semigroups}
Throughout the subsection, let $Y$ be a  Banach space.	
Consider a linear, possibly unbounded, $A \colon  D(A) \subset Y \to Y$. We recall that the domain is a crucial part of the definition of the operator $A$, as  different domains yield different realizations of the operators,  e.g.~see Example \ref{ex:laplacian}.

\begin{definition}[$C_0$-semigroup]\label{def:C0_semigroup}A family
$
\{S(t)\}_{t\geq 0}\subset \mathcal L(X) 
$
is called a $C_0$-semigroup   (or strongly continuous semigroup) of linear bounded operators on $Y$ if: $S(0)=I$;  for every  $ t, \tau \geq 0$ it holds that
  $S(t+\tau)=S(t) S(\tau)$; for every $x \in Y$ it holds that
$ \lim _{t \rightarrow 0} S(t) x =x.$

A $C_0$-semigroup $S(t)$ is $\omega$-contractive if $\|e^{At}\|\leq e^{\omega t}$ for $\omega \in \mathbb R$. If $\omega=0$ it is called a contractive $C_0$-semigroup, if $\omega<0$, it is called a strictly contractive $C_0$-semigroup.
\end{definition}
Let $S(t)$ be a $C_0$-semigroup of linear bounded operators on $Y$. 
\begin{definition}[Infinitesimal generator]\label{def:infinitesimal_generator}
The infinitesimal generator of $S(t)$  is the linear (unbounded) operator $A \colon  D(A) \subset Y \to Y$ defined by
$A x=\lim _{t \rightarrow 0^+} \frac{S(t) x-x}{t}  , $ $ D(A)=\left\{x \in Y: \ \lim _{t \rightarrow 0} \frac{S(t) x-x}{t} \textit{ is well-defined} \right\}.$
\end{definition}
We will use then the (formal) notation
$
S(t)=e^{t A} .
$
We recall that the infinitesimal generator $A$ turns out to be a closed, densely defined operator which characterizes the $C_0$-semigroup. For such results, for further properties of $C_0$-semigroups and infinitesimal generators, and for results of generation of $C_0$-semigroups (e.g.~the Hille--Yosida theorem or Lumer--Philips theorem) we refer the reader, e.g., to  \cite{engel_nagel}.
\paragraph{Linear evolution equations.}
Consider the following autonomous linear Cauchy problem in a Banach space $Y$:
\begin{equation}\label{eq:linear_evolution_eq}
    X^{\prime}(t)=A X(t), \quad \forall t \geq 0, \quad 
X(0)=x_0 \in X
\end{equation}
where $A: D(A) \subset Y \rightarrow Y$ is the infinitesimal generator of a $C_0$-semigroup $e^{At}$ on $Y.$ Then $x(t):=e^{t A}x_0$ is a mild solution of \eqref{eq:linear_evolution_eq}. If $x_0\in D(A)$, then it  is the unique classical solution to the Cauchy problem \eqref{eq:linear_evolution_eq}, i.e.~$X \in C^1([0, \infty) ; Y), X(t) \in D(A), $ for all $ t \geq 0$ and \eqref{eq:linear_evolution_eq} is satisfied. 

The above  covers many important models, including most important differential operators. We discuss the case of the Laplacian.
\begin{example}\label{ex:laplacian}
    Let $\Omega \subset \mathbb{R}^n$ be a bounded domain with sufficiently smooth boundary $\partial \Omega$. Consider the heat equation
$
\partial_t x(t, \xi)=\Delta x(t, \xi), $ $(t, \xi) \in(0, \infty) \times \Omega,$ $ x(0, \xi)=u_0(\xi), $ $ \xi \in \Omega
$
with boundary conditions specified below. Then the following different realizations of the Laplacian generate three different $C_0$-semigroups of contractions (moreover, they turn out to be analytic semigroups):
\begin{enumerate}[leftmargin=*]
    \item Homogeneous Dirichlet boundary conditions, i.e.
$
x(t, \xi)=0, $ $(t, \xi) \in(0, \infty) \times \partial \Omega .
$
Then we set $Y=L^2(\Omega)$,
$
A=\Delta, $ $ D(A)=H^2(\Omega) \cap H_0^1(\Omega) .
$
\item Homogeneous Neumann boundary conditions, i.e.~$
\frac{\partial u}{\partial n}(t, \xi)=0, $ $(t, \xi) \in(0, \infty) \times \partial \Omega,
$
where $\nu$ denotes the outward unit normal. Then we set $Y=L^2(\Omega)$, $
A=\Delta, $ $ D(A)=\left\{x\in H^2(\Omega): \partial_\nu x=0 \text { on } \partial \Omega\right\} .
$
\item Let $\Omega=(0, r)^n$, $r>0$, and impose periodic boundary conditions in each variable, i.e.
$
x\left(t, \xi+r e_i\right)=x(t, \xi), $ $ i=1, \ldots, n ,
$ with $e_i$ canonical basis of $\mathbb R^n$.
Then we set 
$
Y=L^2(\Omega), $ $ A=\Delta,
$ $ D(A)=H_{\mathrm{per}}^2(\Omega).$
\end{enumerate}
\end{example}

\subsection{Gaussian measures on Hilbert spaces}
\begin{definition}\label{def:gaussian}
    Let $(H, \langle\cdot,\cdot\rangle)$ be a separable Hilbert space and $(H, \mathcal{B}(H))$ be its Borel measurable space. A measure space $(H, \mathcal{B}(H), \mu)$ is called Gaussian if for any $x \in H$, the map $\inprod{\cdot }{x} : y \mapsto \inprod{y}{x} \in \R$ has a pushforward that is Gaussian. That is to say, $\inprod{\cdot}{x}_{\#}\mu$ is a Gaussian distribution on $(\R, \mathcal{B}(\R))$ for any $x \in H$.
\end{definition}

\begin{proposition}
    For a Gaussian random variable $X$ with distribution $\mu$ on $H$ to be well-defined, there must exist a unique $m \in H$ and a unique positive semi-definite, trace-class operator $Q \in \g(H;H)$ diagonalized by an orthonormal eigenbasis $\{e_i\}_{i=1}^\infty$ with eigenvalues $\{\lambda_i\}_{i=1}^\infty$ such that $X \overset{\text{dist}}{=} m + \sum_{i=1}^\infty \xi_i\sqrt{\lambda_i}e_i,$
    where $\{\xi_i\}_{i=1}^\infty$ are i.i.d. standard Gaussians on $\R$. We call $m \in H$ the mean and $Q \in \g(H;H)$ the covariance.
\end{proposition}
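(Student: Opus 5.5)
The plan is to read $m$ and $Q$ off the one-dimensional marginals, prove they are a bounded vector and a bounded operator, show that $Q$ is trace class, and then obtain the expansion by diagonalising $Q$. Let $X$ have law $\mu$. By Definition \ref{def:gaussian}, for each $h\in H$ the law of $\inprod{X}{h}$ is $\mathcal N(a(h),q(h))$ for some $a(h)\in\R$ and $q(h)\ge 0$.

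\textbf{Linear structure.} Since $\inprod{X}{h+k}=\inprod{X}{h}+\inprod{X}{k}$ and every linear combination of these variables is again Gaussian, the vector $(\inprod{X}{h_1},\dots,\inprod{X}{h_n})$ is jointly Gaussian for any $h_1,\dots,h_n$. Hence $a$ is a linear functional and $q(h)=\mathrm{Var}\inprod{X}{h}$ is the quadratic form of the symmetric bilinear form $b(h,k)=\mathrm{Cov}(\inprod{X}{h},\inprod{X}{k})$.

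\textbf{Continuity, via weak convergence of Gaussians.} Let $h_n\to 0$ in $H$. Then $\inprod{x}{h_n}\to 0$ for every $x$, so $\mathcal N(a(h_n),q(h_n))$ converges weakly to $\delta_0$. Convergence of the characteristic functions $e^{ita(h_n)-t^2q(h_n)/2}\to 1$ forces $a(h_n)\to 0$ and $q(h_n)\to 0$. Therefore $a$ and $b$ are continuous.
\begin{itemize}
\item By the Riesz theorem there is $m\in H$ with $a(h)=\inprod{m}{h}$.
\item There is $Q\in S(H)$, $Q\ge 0$, with $b(h,k)=\inprod{Qh}{k}$.
\item Both are unique, since the marginal laws determine $a$ and $b$. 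This also gives the uniqueness claim of the statement.
\end{itemize}

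\textbf{$Q$ is trace class (main obstacle).} This is the step I expect to be hardest, because a second moment of $|X|$ is not available a priori. I would argue through the Gaussian integral of the centred variable $Y=X-m$. Let $\{f_i\}$ be any orthonormal basis and $P_n$ the corresponding projections, and set $Q_n:=P_nQP_n$. The vector $P_nY$ is a finite-dimensional centred Gaussian with covariance $Q_n$, so a direct computation gives
\[
\int_H e^{-|P_n y|^2/2}\,\mu_Y(dy)=\det(I+Q_n)^{-1/2}.
\]
As $n\to\infty$, dominated convergence sends the left-hand side to $\int_H e^{-|y|^2/2}\mu_Y(dy)=:c>0$. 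Hence $\det(I+Q_n)\le c^{-2}$ for all $n$. Using $\det(I+Q_n)\ge 1+\mathrm{Tr}\,Q_n$, we get $\sum_{i\le n}\inprod{Qf_i}{f_i}\le c^{-2}-1$ uniformly in $n$. So $Q$ is trace class, and $\mathbb E|Y|^2=\mathrm{Tr}\,Q<\infty$.

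\textbf{Expansion.} Since $Q$ is compact, self-adjoint and positive, it has an orthonormal eigenbasis $\{e_i\}$ with eigenvalues $\lambda_i\ge 0$.
\begin{itemize}
\item For $\lambda_i>0$, define $\xi_i:=\lambda_i^{-1/2}\inprod{Y}{e_i}$. For $\lambda_i=0$ we have $\inprod{Y}{e_i}=0$ a.s.; in that case take $\xi_i$ to be independent standard Gaussians, enlarging the probability space if necessary.
\item The family $\{\xi_i\}$ is jointly Gaussian with $\mathrm{Cov}(\xi_i,\xi_j)=\delta_{ij}$, hence i.i.d.\ $\mathcal N(0,1)$.
\item Finally, $Y=\sum_i\inprod{Y}{e_i}e_i=\sum_i\sqrt{\lambda_i}\,\xi_i e_i$ in $H$, almost surely and in $L^2$, because $\sum_i\lambda_i<\infty$.
\end{itemize}
This yields $X\overset{\text{dist}}{=}m+\sum_i\xi_i\sqrt{\lambda_i}e_i$.
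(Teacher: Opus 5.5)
Your proof is correct. The paper does not prove this proposition: it is stated in the appendix as a standard fact, with a pointer to Da Prato--Zabczyk (Chapter 2) and the other cited texts. Your argument is therefore a self-contained replacement for that citation, not a variant of an argument in the paper.

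Structurally you follow the classical route: read $m$ and $Q$ off the one-dimensional marginals, prove continuity, prove $\mathrm{Tr}\,Q<\infty$, then diagonalise. For the hardest step you combine three facts:
\begin{itemize}
\item the identity $\int_H e^{-|P_n y|^2/2}\mu_Y(dy)=\det(I+Q_n)^{-1/2}$;
\item this quantity is nonincreasing in $n$, with positive limit $\int_H e^{-|y|^2/2}\mu_Y(dy)$;
\item the inequality $\det(I+Q_n)\ge 1+\mathrm{Tr}\,Q_n$.
\end{itemize}
This is a clean, elementary way to get trace class. It needs no a priori moment bound on $|X|$ and no Fernique- or Sazonov-type machinery.

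The citation buys the paper brevity. Your version buys transparency, and it also gives the paper's next proposition (finite moments of every order) almost for free. On your enlarged space $|X-m|^2=\sum_i\lambda_i\xi_i^2$ almost surely, so Minkowski's inequality gives $\nor{|X-m|^2}_{L^p}\le \mathrm{Tr}(Q)\,\nor{\xi_1^2}_{L^p}<\infty$ for every $p\ge 1$. Hence $\nor{\mu}_q<\infty$ for all $q$.

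Two details are worth writing out explicitly.
\begin{itemize}
\item Sequential continuity of $q$ at $0$ yields boundedness of $b$ only after the usual homogeneity step. There is a ball $|h|\le\delta$ on which $q\le 1$, hence $q(h)\le\delta^{-2}|h|^2$, and then Cauchy--Schwarz for the positive form $b$ finishes. Alternatively, run your trace argument with the form $b$ restricted to $\mathrm{span}\{f_1,\dots,f_n\}$ and take $f_1=h/|h|$; this already gives $q(h)\le (c^{-2}-1)|h|^2$.
\item For uniqueness, say explicitly that any representation $m'+\sum_i\xi_i\sqrt{\lambda_i'}e_i'$ has mean $m'$ and covariance operator $h\mapsto\sum_i\lambda_i'\inprod{h}{e_i'}e_i'$. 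Matching marginals then forces $m'=m$ and this operator to equal $Q$. The eigenbasis itself need not be unique when eigenvalues repeat, but the statement does not claim that.
\end{itemize}
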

To sample from $\mu$ on a computer, it suffices to know the mean $m \in H$ and a diagonalization $\{e_i\}_{i=1}^\infty, \{\lambda_i\}_{i=1}^\infty$ of $Q \in \g(H;H)$. We then select an $N \in \N$ large, sample $\{\xi\}_{i=1}^N$ i.i.d. standard Gaussians on $\R$, and approximate $X \approx m + \sum_{i=1}^N \xi_i \sqrt{\lambda_i}e_i.$
We add that Gaussian measures on Hilbert spaces have finite moments, which is helpful for verifying the regularity conditions of e.g.~Theorem \ref{th:DGM_residual}.
\begin{proposition}
    Let $\mu$ be a Gaussian measure on $H$. Then for any $q > 0$,
        $\nor{\mu}_q = \paren{\int \modu{x}_H^qd\mu(x)}^{1/q} < \infty.$
\end{proposition}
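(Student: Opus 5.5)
The plan is to reduce to the case of a Gaussian with mean zero. The statement concerns a Gaussian measure $\mu$ on $H$ with mean $m$ and trace-class covariance $Q$, as in the preceding Proposition. By that Proposition, $X\sim\mu$ can be represented as $X = m + \sum_{i}\xi_i\sqrt{\lambda_i}e_i$, with $\{\xi_i\}$ i.i.d. standard Gaussians and $\sum_i\lambda_i=\mathrm{Tr}\,Q<\infty$. Since $|X|^q\le 2^{(q-1)^+}(|m|^q+|Y|^q)$ with $Y:=X-m$, it suffices to show $\E|Y|^q<\infty$. Because $\E|Y|^q\le (\E|Y|^{2n})^{q/(2n)}$ for any integer $2n\ge q$ by Jensen/Hölder, we may further restrict to even integer moments $q=2n$.

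\textbf{Step 1 (second moment).} Monotone convergence gives
\[
\E|Y|^2=\sum_i\lambda_i\,\E\xi_i^2=\mathrm{Tr}\,Q<\infty .
\]

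\textbf{Step 2 (moments of order $2n$).} Write $|Y|^2=\sum_i\lambda_i\xi_i^2$ and expand the $n$-th power as a multiple sum:
\[
\E|Y|^{2n}=\sum_{i_1,\dots,i_n}\lambda_{i_1}\cdots\lambda_{i_n}\,\E\big[\xi_{i_1}^2\cdots\xi_{i_n}^2\big].
\]
All terms are nonnegative, so this interchange is justified by Tonelli. By the generalized Hölder inequality, each expectation is bounded by $\E\xi^{2n}=(2n-1)!!$. Therefore
\[
\E|Y|^{2n}\le (2n-1)!!\,(\mathrm{Tr}\,Q)^n<\infty .
\]

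The only delicate point is making the infinite series manipulations rigorous. This is handled by working first with the partial sums $Y_N=\sum_{i\le N}\xi_i\sqrt{\lambda_i}e_i$ and then applying monotone convergence, since $|Y_N|^2\uparrow|Y|^2$. An alternative route is to invoke Fernique's theorem, which gives exponential integrability, but the direct computation above is elementary and suffices.
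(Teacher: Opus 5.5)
Your proof is correct. The paper gives no argument of its own for this proposition; it defers to Da Prato--Zabczyk, Chapter 2. There, finiteness of moments comes out of \emph{exponential} integrability: either Fernique's theorem, or the explicit product computation $\int_H e^{\epsilon|x|^2}\mu(dx)=[\det(I-2\epsilon Q)]^{-1/2}\exp\{\epsilon\langle (I-2\epsilon Q)^{-1}m,m\rangle\}$ for $\epsilon<1/(2\|Q\|)$, after which $|x|^q\le C_{q,\epsilon}e^{\epsilon|x|^2}$ finishes the argument.

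Your route bypasses exponential moments. You expand $(\sum_i\lambda_i\xi_i^2)^n$ under Tonelli and bound each mixed moment by $\E\xi^{2n}$ via generalized H\"older (equivalently, Minkowski in $L^n$ applied to $\sum_i\lambda_i\xi_i^2$). This gives the explicit bound $\E|X-m|^{2n}\le(2n-1)!!\,(\mathrm{Tr}\,Q)^n$. Your reductions are sound: the $2^{(q-1)^+}$ splitting also covers $0<q<1$, and Lyapunov's inequality lets you pass to even integer moments.

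The exponential route buys the stronger Gaussian-tail statement. In Fernique's form it also needs no series representation at all, so it works on Banach spaces. Your route buys elementarity and explicit constants, which is all the paper needs for checking the moment conditions of Theorem~\ref{th:DGM_residual}.

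One point to keep in view: you lean on the preceding proposition's representation with $\sum_i\lambda_i<\infty$. That is legitimate here, since the paper states it. However, in the standard development the trace-class property of the covariance is itself usually deduced from $\int_H|x|^2\mu(dx)<\infty$ (via Fernique). So your argument upgrades a finite second moment to all moments, rather than deriving finite moments from the bare definition of a Gaussian measure.
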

For more details about Gaussian measures on Hilbert spaces, we refer to \cite[Chapter 2]{da1992stochastic} and \cite{fabbri2017book,liu_rockner}.

\subsection{Closed-form solution of controlled stochastic heat equation}\label{sec:stochastic_heat_appendix}
    Let $x_n = \inprod{x}{e_n}, f_n = \inprod{x_0}{e_n}$, and $\overline{x}_n = \inprod{\overline x}{e_n}$ for each $n \in \N$. Denote the noise in each eigenmode by $\{\sigma_n\}_{n=1}^\infty$ (i.e.~$\sigma_n = \sqrt{\sum_{i=1}^\infty \inprod{Qe_i}{e_n}}$ for each $n \in \N$).
    
    We begin with the Kolmogorov problem. Let $u \in L^2([0,2\pi])$ be fixed and set $u_n = \inprod{u}{e_n}$ for each $n \in \N$. Then each coefficient $x_n(t) = \inprod{x(t)}{e_n}_{L^2}$ in the eigenbasis expansion of the mild SPDE solution $x(t)\in L^2([0,2\pi])$ follows the Ornstein--Uhlenbeck process
\begin{align*}
    &dx_n(t) = (-\lambda_n x_n(t) + u_n)dt + \sigma_ndB_t^{(n)}, \qquad x_n(0) = f_n\quad  \textrm{i.e. } x_n(t) = e^{-\lambda_n t}f_n + \paren{1-e^{-\lambda_nt}}\frac{u_n}{\lambda_n} +  \sigma_n \int_0^t e^{-\lambda_n(t-s)}dB_s^{(n)}
\end{align*}
for a Brownian motion $B_t^{(n)}$. Using the It\^{o} isometry, we have
\begin{align*}
    \E\brac{(x_n(t)-\overline{x}_n)^2} &= \paren{e^{-\lambda_n t}f_n + (1-e^{-\lambda_nt})\frac{u_n}{\lambda_n} - \overline{x}_n}^2 + \sigma_n^2 \int_0^t e^{-2\lambda_n(t-s)}ds \\
    & = \paren{e^{-\lambda_n t}f_n + (1-e^{-\lambda_nt})\frac{u_n}{\lambda_n} - \overline{x}_n}^2 + \frac{1-e^{-2\lambda_nt}}{2\lambda_n}\sigma_n^2,\\
 J(x_0;u) &= \sum_{n=1}^\infty \int_0^\infty e^{-\gamma t} \paren{\paren{e^{-\lambda_n t}f_n + (1-e^{-\lambda_nt})\frac{u_n}{\lambda_n} - \overline{x}_n}^2 + \frac{1 - e^{-2\lambda_nt}}{2\lambda_n}\sigma_n^2 + \lambda u_n^2}dt \\
    & = \sum_{n=1}^\infty \brac{\frac{(f_n - \frac{u_n}{\lambda_n})^2}{\gamma+2\lambda_n} + \frac{2(f_n-\frac{u_n}{\lambda_n})(\frac{u_n}{\lambda_n}-\overline x_n)}{\gamma+\lambda_n}+\frac{(\frac{u_n}{\lambda_n}-\overline x_n)^2 + \lambda u_n^2}{\gamma} + \frac{\sigma_n^2}{\gamma(\gamma+2\lambda_n)}}.
\end{align*}
Since $\{\overline{x}_n\}_{n=1}^\infty, \{u_n\}_{n=1}^\infty$ are square summable and $\lambda_n = n^2/4$, it is easy to see that $J(\cdot;u) \in C^2(H)$. Further, dominated convergence allows for interchanging the second derivative $A^*$ and $DJ(\cdot;u)$, from which we see that $A^*DJ(\cdot;u)$ is continuous. Thus, $J(\cdot;u)$ is classical. Further, Theorem \ref{th:DGM_residual} applies. Assumptions \ref{ass:F_estimates} and \ref{ass:F_seq_cont} hold, \eqref{ass:APdDv_measure} is true by Condition 3 in Section \ref{subsec:UAT_unbounded} with $k = 2$, $q = 8$, and $\mu$ any Gaussian measure, and \eqref{eq:est_classical_sol} is true with $k = 2$.\end{small}

\begin{small}This line of reasoning can also be extended to obtain a solution to the full HJB equation (i.e.~considering the infimum over controls rather than fixing a control). Namely, each eigenspace/mode $n$ of the system corresponds to a 1-dimensional LQR problem. It is thus natural to formulate a quadratic ansatz for each mode and sum these to obtain the overall answer. In particular, for each mode $n$, we have the dynamics, cost, and HJB equation
\begin{align*}
    &dx_n(t) = (-\lambda_n x_n(t) +  u_n)dt + \sigma_ndB_t^{(n)},\quad v_n(z_n) = \int_0^\infty e^{-\gamma t}\paren{(x_n(t) - \overline{x}_n)^2 + \lambda u_n^2}dt,\\
    & -\gamma v_n(x_n) + \inf_{u_n \in \R} \curlbrac{(x_n - \overline{x}_n)^2 + \lambda u_n^2 + v_n'(x_n)(-\lambda_n x_n + u_n) + \frac{1}{2}\sigma_n^2v_n''(x_n)}=0.
\end{align*}
The first-order condition for $u_n$ gives that $u_n^*(x_n) = \frac{-1}{2\lambda}v_n'(x_n).$ Thus
\begin{equation*}
    0 = -\gamma v_n(x_n) + (x_n - \overline{x}_n)^2 + \frac{1}{4\lambda}(v_n'(x_n))^2 + v_n'(x_n)\paren{-\lambda_nx_n - \frac{1}{2\lambda}v_n'(x_n)} + \frac{1}{2}\sigma_n^2v_n''(x_n).
\end{equation*}
We use the quadratic ansatz $v_n(x_n) = M_nx_n^2 + Q_nx_n + R_n.$
Substituting this ansatz and taking the first and second derivatives yields a system of three equations
\begin{align*}
    -\frac{M_n^2}{\lambda} - (2\lambda_n + \gamma) M_n + 1 = 0, \quad  Q_n\paren{\lambda_n+\gamma+\frac{M_n}{\lambda}} + 2\overline{x}_n = 0, \quad
    -\gamma R_n + M_n\sigma_n^2 + \overline{x}_n - \frac{Q_n^2}{4\lambda} &= 0.
\end{align*}
This system admits a straightforward algebraic solution
\begin{align*}
    M_n &= \frac{2\lambda}{\lambda(2\lambda_n+\gamma)+\sqrt{\lambda^2(2\lambda_n+\gamma)^2 + 4\lambda}},\quad 
    Q_n = \frac{-2\overline{x}_n}{\lambda_n + \gamma + M_n/\lambda},\quad
    R_n = \frac{1}{\gamma}\paren{M_n\sigma_n^2 + \overline{x}_n^2 - \frac{Q_n^2}{4\lambda}}.
\end{align*}
Then
\begin{equation*}
    u^*(x)(\xi) = \sum_{n=1}^\infty u_n^*(x_n)\sin\paren{\frac{n\xi}{2}} \quad \text{ and } \quad V(x) = \sum_{n=1}^\infty M_nx_n^2 + Q_nx_n + R_n,
\end{equation*}
with $u_n^*(x_n) = \frac{-M_n}{\lambda}x_n - \frac{Q_n}{2\lambda}.$
This makes sense for any reasonable noise structure\footnote{For example, cylindrical/white noise, infinite-dimensional trace class covariance noise, and finite-dimensional noise all work because they each imply the existence of a uniform bound $\modu{\sigma_n} < K$ for all $n \geq 1$, and so $\sum_{n=1}^\infty R_n < \infty.$}. Further, the solution $V$ is classical. We see immediately that $V \in C^2(H)$, and $M_n = O(n^{-2})$ and $Q_n = O(n^{-2})$, which implies that
\begin{align*}
    A^* D V(x) &= A^* \sum_{n=1}^\infty (2M_n x_n + Q_n)e_n = \sum_{n=1}^\infty \frac{n^2}{4}(2M_nx_n + Q_n)e_n
\end{align*}
is continuous. Like with the Kolmogorov problem, Theorem \ref{th:DGM_residual} applies, since Assumptions \ref{ass:F_estimates} and \ref{ass:F_seq_cont} hold, \eqref{ass:APdDv_measure} is true by Condition 3 in Section \ref{subsec:UAT_unbounded} with $k = 2$, $q = 8$, and $\mu$ any Gaussian measure, and \eqref{eq:est_classical_sol} is true with $k = 2$.

\paragraph{Acknowledgements.} The authors are grateful to Andrzej Święch for helpful comments regarding Assumption \ref{ass:F_seq_cont}.

 Samuel N. Cohen, Jackson Hebner, Justin Sirignano acknowledge the support of His Majesty's Government.  S.N.C.\ also acknowledges the support of the UKRI Prosperity Partnership Scheme (FAIR) under EPSRC Grant EP/V056883/1, and EPSRC Grant EP/Y028872/1 (Mathematical Foundations of Intelligence: An Erlangen Programme for AI). Filippo de Feo acknowledges funding by the Deutsche Forschungsgemeinschaft (DFG, German Research Foundation) – CRC/TRR 388 ``Rough Analysis, Stochastic Dynamics and Related Fields'' – Project ID 516748464.

\bibliography{refs}
\bibliographystyle{plain}
\end{small}
\end{document}